\documentclass{article} 
\usepackage{iclr2026_conference,times}

\usepackage{amsmath,amsfonts,bm}

\def\eqref#1{equation~\ref{#1}}

\def\1{\bm{1}}

\DeclareMathAlphabet{\mathsfit}{\encodingdefault}{\sfdefault}{m}{sl}
\SetMathAlphabet{\mathsfit}{bold}{\encodingdefault}{\sfdefault}{bx}{n}

\usepackage[utf8]{inputenc} 
\usepackage[T1]{fontenc}    
\usepackage{hyperref}       
\usepackage{url}            
\usepackage{booktabs}       
\usepackage{amsfonts}       
\usepackage{nicefrac}       
\usepackage{microtype}      
\usepackage{xcolor}         

\usepackage{amsthm}
\usepackage{amsmath}
\usepackage{amssymb}
\usepackage{algorithm}
\usepackage{algorithmic}
\newtheorem{theorem}{Theorem}
\newtheorem{corollary}[theorem]{Corollary}
\newtheorem{lemma}[theorem]{Lemma}
\newtheorem{proposition}[theorem]{Proposition}
\newtheorem{definition}[theorem]{Definition}

\newtheorem{remark}[theorem]{Remark}

\usepackage{tikz}
\usetikzlibrary{positioning, shapes.geometric, arrows.meta,shadows.blur,backgrounds}
\usepackage{subcaption}
\usepackage{multirow}  
\usepackage{array}     
\usepackage{caption}   
\usepackage{siunitx}   
\usepackage{siunitx}
\usepackage{listings}
\usepackage{xcolor}

\definecolor{codeblue}{rgb}{0.1,0.1,0.6}
\definecolor{codegray}{rgb}{0.5,0.5,0.5}
\definecolor{codepurple}{rgb}{0.58,0,0.82}
\definecolor{backcolour}{rgb}{0.941,0.973,1.0}

\lstdefinestyle{fancy}{
  backgroundcolor=\color{backcolour},
  commentstyle=\color{codegray}\itshape,
  keywordstyle=\color{codeblue}\bfseries,
  numberstyle=\tiny\color{gray},
  stringstyle=\color{codepurple},
  basicstyle=\ttfamily\footnotesize,
  breaklines=true,
  captionpos=b,
  keepspaces=true,
  numbers=left,
  numbersep=8pt,
  showspaces=false,
  showstringspaces=false,
  showtabs=false,
  frame=single,
  rulecolor=\color{gray},
  language=Python
}
\newcommand{\f}{\mathbf{f}}
\newcommand{\F}{\mathcal{F}}
\newcommand{\x}{\mathbf{x}}
\newcommand{\X}{\mathbf{X}}
\newcommand{\y}{\mathbf{y}}
\newcommand{\Y}{\mathbf{Y}}
\newcommand{\e}{\mathbf{e}}

\usepackage{tcolorbox}
\tcbset{
  highlightcompact/.style={
    enhanced,
    frame hidden,        
    colback=blue!4!white, 
    boxrule=0.2pt,
    arc=2pt,
    left=6pt,
    right=6pt,
    top=2pt,
    bottom=2pt,
    width=\linewidth,
    before skip=6pt,
    after skip=6pt,
    halign=center,          
    valign=center,
  }
}
\title{UniCon: Unified Framework for Efficient Contrastive Alignment via Kernels}

\author{Hangke Sui $^{1,3\ast}$, Yuqing Wang$^{2,3\ast}$, Minh N. Do$^{1,2,3,4}$\\
\vspace{0.25em}
$^{\ast}$ Equal Contribution.\\
$^1$ Department of Electrical \& Computer Engineering, The Grainger College of Engineering, UIUC\\ 
$^2$ Siebel School of Computing and Data Science, The Grainger College of Engineering, UIUC\\
$^3$ Coordinated Science Laboratory,
University of Illinois Urbana-Champaign (UIUC)\\
$^4$ VinUni-Illinois Smart Health Center\\
\texttt{\{hangkes2, yuqing14, minhdo\}@illinois.edu} \\
}

\iclrfinalcopy 
\begin{document}

\maketitle

\begin{abstract}
    Contrastive objectives power state-of-the-art multimodal models, but their training remains slow, relying on long stochastic optimization.
We propose a Unified Framework for Efficient Contrastive Alignment via Kernels (UniCon), which spans linear and nonlinear encoders as well as one-to-one and many-to-many alignments.
At its core, UniCon introduces the contrastive similarity weight matrix $S(\gamma)$, which enables closed-form global solutions that provably replace minibatch back-propagation with exact updates. Through the lens of reproducing kernel Hilbert spaces (RKHS), UniCon provides a kernelized perspective that unifies contrastive alignment and reveals its connection to spectral methods.
To validate the theory, we conduct experiments on synthetic, unimodal, multimodal, and zero-shot tasks, demonstrating that UniCon achieves substantial efficiency gains while preserving generality and strong empirical performance.
\end{abstract}
\addtocontents{toc}{\protect\setcounter{tocdepth}{-1}}

\section{Introduction}\label{sec:introduction}
Learning semantically aligned representations across different modalities, such as vision and language, has long been a central goal in machine learning~\citep{ngiam2011multimodal, srivastava2012multimodal}. In particular, \textit{Multimodal Contrastive Learning} (MMCL)\citep{huang2024comparison} has recently achieved remarkable success in zero-shot classification~\citep{radford2021clip, jia2021scaling}, cross-modal retrieval~\citep{mu2022slip, goel2022cyclip}, and general visual understanding~\citep{suris2023vipergpt, lin2023video}. These models typically train modality-specific encoders, e.g., a vision encoder and a language encoder, such that paired inputs are mapped to nearby representations in a shared space, while unpaired inputs are mapped far apart. At the heart of MMCL lies contrastive representation learning~\citep{chopra2005learning, gutmann2010noise, sohn2016improved, oord2018representation, chen2020simclr, radford2021clip}. Its versatility has made it a core component across diverse domains, including NLP~\citep{gao2022simcsesimplecontrastivelearning, izacard2021unsupervised}, bioimaging~\citep{sanchez2023cloome, taleb2022contig,han2022self}, recommendation~\citep{xie2022contrastive,yu2023xsimgcl,jing2023contrastive,yang2023empirical}, and graph learning~\citep{kipf2019contrastive,you2020graph}. The typical pipeline involves feature extraction via deep encoders and the optimization of contrastive loss.

Despite the impressive empirical performance of contrastive learning across vision, language, and multimodal domains, the theoretical foundations underlying its success remain only partially understood. There are works on the analysis of loss and training dynamics\citep{wang2021understanding,tian2022understanding,haochen2022theoretical}, provably guarantee of  the model generalization \citep{haochen2021provable, haochen2022beyond, pmlr-v132-tosh21a,pmlr-v195-parulekar23a}, duality between contrastive and non-contrastive method\citep{tian2021understanding,balestriero2022contrastive}. A growing body of theoretical work has sought to formalize contrastive learning~\citep{saunshi2019theoretical, tian2021understanding, jing2021understanding, wen2021toward}, often by simplifying the problem to single-modality settings. Recent advancements in contrastive learning have introduced novel loss functions and analytical frameworks to enhance representation quality and training efficiency.\citep{xu2023self,wang2024cliploss, schuhmann2022laion}. Analytical studies have examined contrastive learning from different perspectives. For example, \citet{shi2024ot} interpret the CLIP loss through the lens of optimal transport; while \citet{tian2022understanding,nakada2023understanding} analyze multimodal contrastive learning using SVD- and PCA-based formulations, showing that, under \emph{linear encoders}, contrastive loss minimization reduces to calculating a weighted covariance matrix. Yet, this insight has not been translated into nonlinear encoder settings and practical implementations.

We introduce \textbf{UniCon}, a \emph{Unified Framework for Efficient Contrastive Alignment via Kernels}, which leverages a structured contrastive similarity weight matrix $S(\gamma)$ to directly solve contrastive objectives. As illustrated in Figure~\ref{fig:workflow}, UniCon departs from gradient-based training and instead:
(i) in the linear setting\citep{nakada2023understanding}, solves a single spectral decomposition yielding optimal encoder matrices in closed form 
(ii) in a general nonlinear setting, provides a unified kernelized framework that enables fast alignment via implicit representation inference.

Our key contributions are as follows:
\begin{itemize}
    \item Theoretically, we provide a kernel-based perspective that unifies linear and nonlinear encoders, showing that minimizing contrastive loss is equivalent to a spectral update. This leads to a provably optimal solution in closed form and connects contrastive learning to spectral methods.
    \item Beyond one-to-one matching, our framework generalizes to \emph{many-to-many} alignment, broadening the applicability of contrastive alignment.
    \item Empirically, we demonstrate that UniCon converges fast and achieves competitive or superior performance across \textbf{synthetic}, \textbf{unimodal} (CIFAR-10), and \textbf{multimodal} (Flickr30k,MSCOCO) and \textbf{zero-shot transfer} (image-text retrieval), offering up to 461× speed-up over minimizing CLIP loss by stochastic gradient descent.
\end{itemize}

\vspace{0.5em}
\begin{figure}[h]
    \centering
    \includegraphics[width=0.9\linewidth]{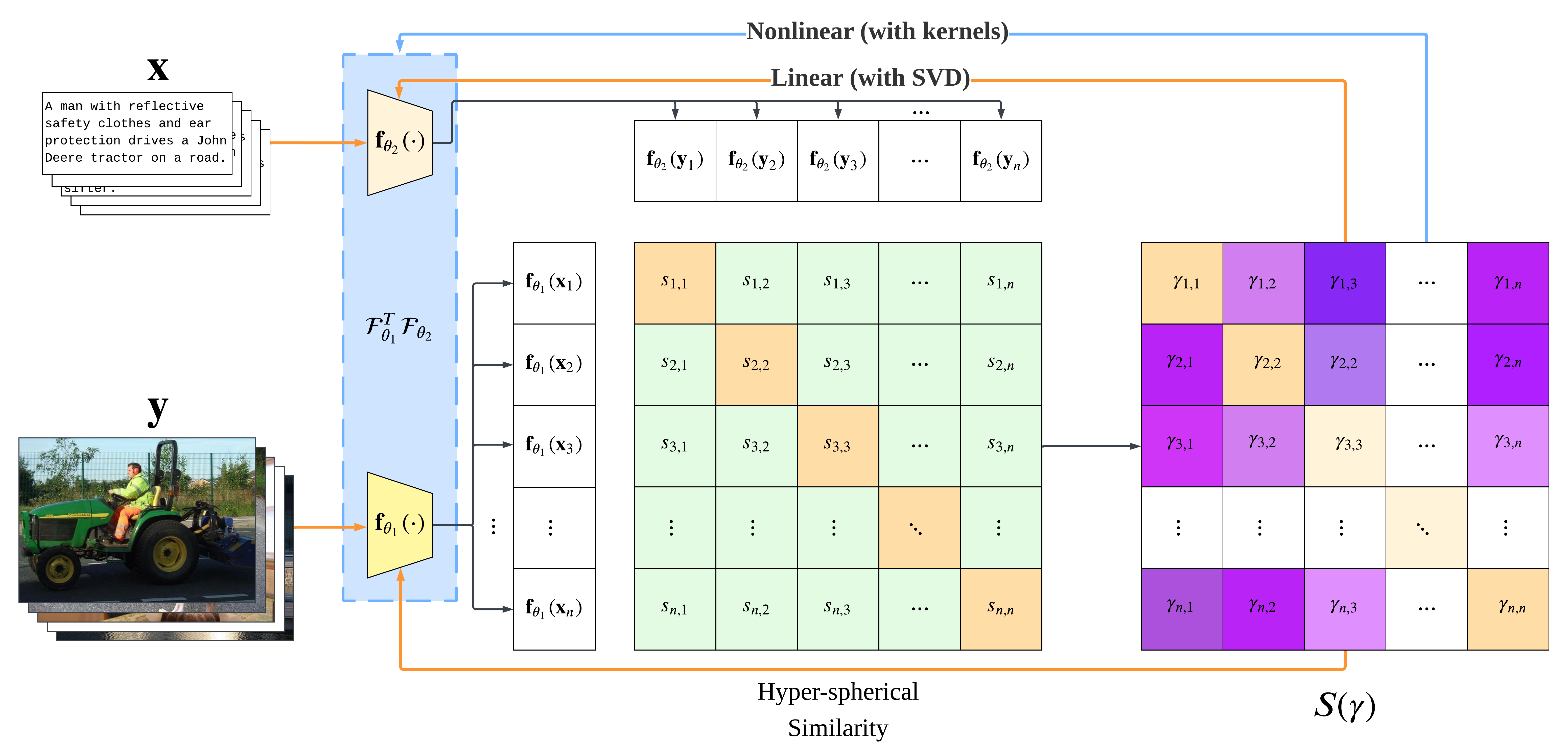}
    \caption{\textbf{Unified Framework for Efficient Contrastive Alignment via Kernels (UniCon)}. 
    Starting from paired inputs, UniCon builds a contrastive similarity weight matrix $S(\gamma)$ using hyper-spherical similarities, then computes either (i) a closed-form spectral update in the linear case (orange) or (ii) a kernelized solution in the nonlinear case (blue).}
    \label{fig:workflow}
\end{figure}

\section{Background}\label{sec:background}
\paragraph{Contrastive Representation.}
Contrastive learning~\citep{chopra2005learning, gutmann2010noise, sohn2016improved, oord2018representation, chen2020simclr, radford2021clip} leverages paired inputs as a form of supervision. The central goal is to learn a representation space where \emph{positive} (matching) pairs are mapped to nearby embeddings, while \emph{negative} (non-matching) pairs are pushed apart. Learning representations on the hypersphere leads to better performance than in Euclidean space\citep{wang2017normface}, as it avoids conflicting forces between attractive and repulsive gradients. \citep{wang2020understanding} further shows that the distribution of representations on the unit hypersphere is encouraged to be uniform. 

\paragraph{Contrastive Loss}
\label{sec:contrastive}
Contrastive loss \citep{hadsell2006dimensionality} was first proposed in Siamese networks to pull together positive pairs and push apart negatives. The formulation was later unified under the InfoNCE loss \citep{oord2018representation}, which serves as the basis for many self-supervised methods, including SimCLR \citep{chen2020simclr}. Supervised Contrastive (SupCon) loss \citep{khosla2020supervised} extended contrastive learning to the supervised setting, where each anchor can have multiple positive samples from the same class. Building on the general family of contrastive losses introduced by \citep{tian2022understanding}, \citep{nakada2023understanding} showed that these multimodal contrastive objectives can be connected to singular value decomposition (SVD) under linear setting.

\paragraph{Kernel Method}
Given a positive finite measure $\mu$ over a parameter space $\Theta$, we define a kernel $k : \mathcal{X} \times \mathcal{X} \rightarrow \mathbb{R}$ by $k(x, \tilde{x}) = \langle \phi(x; \theta), \phi(\tilde{x}; \theta) \rangle_{\mu} := \int_\Theta \phi(x; \theta) \phi(\tilde{x}; \theta) \, d\mu(\theta)$, which induces a Reproducing Kernel Hilbert Space (RKHS). Any function $f$ in this space admits the representation $f(x) = \sum_{j=1}^{m} w_j \, k(x, x_j),\ w_j \in \mathbb{R}$. Kernels are commonly used to learn representations~\citep{kornblith2019similarity, klabunde2025similarity}, as they capture the relative structure among samples, critical for many learning algorithms~\citep{aronszajn1950theory, hofmann2008kernel, muller2018introduction, gong2025kernel}.

\section{Methodology}\label{sec:method}
We observe $N$ paired samples $\{(\x_i, \y_i)\}_{i=1}^N$, where $\x_i \in \mathbb{R}^{d_1}$ and $\y_i \in \mathbb{R}^{d_2}$. The objective of contrastive learning is to learn encoders $\f_{\theta_1}: \mathbb{R}^{d_1} \to \mathbb{R}^r$ and $\f_{\theta_2}: \mathbb{R}^{d_2} \to \mathbb{R}^r$ with modality-specific parameters $\theta_1$ and $\theta_2$, such that paired inputs are mapped to similar representations in a shared $r$- dimensional embedding space, while non-paired inputs remain dissimilar. 

In the sections that follow, we first formalize the general contrastive learning framework, then analyze it under a linear representation setting, and finally unify our analysis spanning nonlinear encoders in RKHS. The proof can be found in the Appendix \ref{app:proof}.

\begin{definition}[Hyper-spherical similarity]
    Define the similarity between $\x_i$ and $\y_i$ as the inner product on the hyper-sphere: 
    \begin{equation}
        s_{ij} = \langle \f_{\theta_1}(\x_i), \f_{\theta_2}(\y_j) \rangle_{\mathbb{S}^{r-1} \subset \mathbb{R}^r} = \left\langle\frac{\f_{\theta_1} (\x_i)}{\left\|\f_{\theta_1}(\x_i)\right\|_2}, \frac{\f_{\theta_2}(\y_j)}{\left\|\f_{\theta_2}(\y_j)\right\|_2}\right\rangle_{\mathbb{R}^{r}}=\frac{\f_{\theta_1}^{\top}(\x_i) \f_{\theta_2}(\y_j)}{\left\|\f_{\theta_1}( \x_i)\right\|_2\left\|\f_{\theta_2} (\y_j)\right\|_2}.
    \end{equation}
\end{definition}

\begin{definition}[Generalized contrastive loss]

Given $N$ paired samples $\{(\x_{i},\y_{i})\}_{i=1}^{N}$, we write the \emph{similarity matrix} $[s_{ij}]$.
With $\phi,\psi:\mathbb{R}\!\to\!\mathbb{R}_{+}$ monotonically
increasing, scaling factor $\nu\!\ge\!1$, and weights
$\epsilon_{ij}\!\in\![0,1]$,
the \emph{bidirectional general contrastive loss} is
\begin{align}
\label{eq:gen-contrastive}
L(\theta_1,\theta_2)=&\frac{1}{2n}\sum_{i=1}^n\frac{1}{|\mathcal P_x(i)|}\sum_{k\in{\mathcal P_x(i)}}\phi(\sum_{j\notin\mathcal P_x(i)}\epsilon_{ij}\psi(s_{ij}-\nu s_{ik})+\epsilon_{ik}\psi(s_{ik}-\nu s_{ik}))\\\notag
&+\frac{1}{2n}\sum_{i=1}^n\frac{1}{|\mathcal P_y(i)|}\sum_{k\in{\mathcal P_y(i)}}\phi(\sum_{j\notin\mathcal P_y(i)}\epsilon_{ij}\psi(s_{ji}-\nu s_{ki})+\epsilon_{ik}\psi(s_{ki}-\nu s_{ki}))+R(\theta_1, \theta_2)
\end{align} where $\mathcal{P}_x(i)$ denotes the index set of all samples in $\{y_i\}$ paired with $x_i$ while $\mathcal P_y(j)$ denotes the index set of all samples in $\{x_i\}$ paired with $y_j$. $|\mathcal P_x(i)|$ is the cardinality of the set $\mathcal P_x(i)$; and $R$ is an optional regularizer.
\end{definition}

This formulation naturally extends from \emph{one-to-one} alignment \citep{tian2022understanding} to \emph{many-to-many} alignment: for example, a single image may correspond to multiple valid captions, and data augmentation can be viewed as creating diverse positive pairs. The scaling factor $\nu \ge 1$ adjusts the relative influence of positive pairs, while $\epsilon_{ij}\ge 0$ controls which pairs contribute to the loss (often $\epsilon_{ij}=1$ for all negatives). The functions $\phi$ and $\psi$ are typically convex and monotonic, shaping the loss for optimization. By choosing specific forms for $\psi$ and $\phi$, we can recover familiar losses. More detailed examples can be found in Appendix \ref{app:loss}.
\begin{definition}[Contrastive similarity weight matrix]
\label{def: C_beta}
    Consider the general contrastive loss $\mathcal{L}(\theta_1, \theta_2)$,  and a batch of paired samples $\{(\x_i, \y_i)\}_{i=1}^n$. Denote $\{\e_i\}_{i=1}^{n}$ as the elementary basis vectors of $\mathbb{R}^{n}$. 
    The contrastive similarity weight is then defined as:
    \begin{equation}\label{def:cc-sim}
         S(\gamma)=-\frac{1}{n}\sum_{i,j}\frac{1}{2} \left( \frac{\gamma_{ij}}{|\mathcal P_x(i)|} + \frac{\bar{\gamma}_{ji}}{|\mathcal P_y(j)|} \right)\e_i\e_j^{\top},
    \end{equation}
    with \emph{weight coefficients}
    \begin{equation}
        \gamma_{ij} =
\begin{cases}
\phi'_{ij}\cdot\left(\epsilon_{ij}(1-\nu)\psi'((1-\nu)s_{ij})-\nu\sum_{m\notin P_x(i)}\epsilon_{im}\psi'(s_{im}-\nu s_{ij})\right), & \text{if } j \in P_x(i) \\
\sum_{k\in P_x(i)}\phi'_{ik}\cdot(\epsilon_{ij}\psi'(s_{ij}-\nu s_{ik})), & \text{if } j \notin P_x(i)
\end{cases}
\end{equation}
\begin{equation}
\bar\gamma_{ij} =
\begin{cases}
\bar\phi'_{ij}\cdot\left(\epsilon_{ji}(1-\nu)\psi'((1-\nu)s_{ji})-\nu\sum_{m\notin P_y(i)}\epsilon_{mi}\psi'(s_{mi}-\nu s_{ji})\right), & \text{if } j \in P_y(i) \\
\sum_{k\in P_y(i)}\bar\phi'_{ik}\cdot(\epsilon_{ji}\psi'(s_{ji}-\nu s_{ki})), & \text{if } j \notin P_y(i)
\end{cases}
\end{equation}
where 
\begin{align}
    \phi'_{ij}=&\phi'(\epsilon_{ij}\psi((1-\nu)s_{ij})+\sum_{m\notin P_x(i)}\epsilon_{im}\psi(s_{im}-\nu s_{ij})),\\\notag
    \bar\phi'_{ij}=&\phi'(\epsilon_{ji}\psi(1-\nu)s_{ji}+\sum_{m\notin P_y(i)}\epsilon_{mi}\psi(s_{mi}-\nu s_{ji}))
\end{align}
\end{definition}

The contrastive similarity weight matrix $S(\gamma)$ weighs pairwise interactions. We now show that minimizing the contrastive loss is equivalent to maximizing a new objective function with the constructed $S(\gamma)$. 

\begin{lemma}[Gradient equivalence]
\label{lemma:grad equal}
Consider minimizing the general contrastive loss (see Equation \ref{eq:gen-contrastive}), the gradient of the contrastive loss with respect to encoder parameters satisfies:
    \begin{equation}
        \frac{\partial \mathcal{L}}{\partial \theta_k}=-\left.\frac{\partial \operatorname{tr}\left(\F_{\theta_1}(\X) S(\gamma) \F_{\theta_2}^{\top}(\Y)\right)}{\partial \theta_k}\right|_{\gamma=\gamma\left(\theta_1, \theta_2\right)}+\frac{\partial R\left(\theta_1, \theta_2\right)}{\partial \theta_k}, \quad k \in\{1,2\}
    \end{equation}
where 
    \begin{equation*}
        \X =[\x_1,\cdots,\x_n]\in\mathbb{R}^{d_1\times n}, \Y =[\y_1,\cdots,\y_n]\in\mathbb{R}^{d_2\times n}
    \end{equation*}
    \begin{equation*}
        \F_{\theta_1}(\X)=\left[\f_{\theta_1}(\x_1) \cdots \ \f_{\theta_1}(\x_n)\right]\in\mathbb{R}^{r\times n},
        \F_{\theta_2}(\Y)=\left[\f_{\theta_2}(\y_1)\ \cdots \ \f_{\theta_2}(\y_n)\right]\in\mathbb{R}^{r\times n}.
    \end{equation*}
\end{lemma}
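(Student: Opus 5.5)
The plan is a direct chain-rule computation. Treat the encoder outputs entering $\F_{\theta_1}(\X)$ and $\F_{\theta_2}(\Y)$ as the hyper-spherically normalized features, i.e.\ the columns $\f_{\theta_1}(\x_i)/\|\f_{\theta_1}(\x_i)\|_2$. This is the reading under which $s_{ij}$ is exactly the $(i,j)$ entry of $\F_{\theta_1}^\top(\X)\F_{\theta_2}(\Y)$, and I would state it explicitly at the start.

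First, expand the trace:
\[
\operatorname{tr}\big(\F_{\theta_1}(\X)S(\gamma)\F_{\theta_2}^{\top}(\Y)\big)=\sum_{i,j}S(\gamma)_{ij}\,s_{ij}.
\]
With $\gamma$ frozen at $\gamma(\theta_1,\theta_2)$, its $\theta_k$-derivative is $\sum_{i,j}S(\gamma)_{ij}\,\partial s_{ij}/\partial\theta_k$. The loss, apart from $R$, depends on $(\theta_1,\theta_2)$ only through the $s_{ij}$. By the chain rule, its gradient is $\sum_{i,j}(\partial L/\partial s_{ij})\,\partial s_{ij}/\partial\theta_k$. The lemma therefore reduces to the entrywise identity
\[
\frac{\partial (L-R)}{\partial s_{ij}}=\frac{1}{2n}\Big(\frac{\gamma_{ij}}{|\mathcal P_x(i)|}+\frac{\bar\gamma_{ji}}{|\mathcal P_y(j)|}\Big)=-S(\gamma)_{ij}.
\]

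Second, verify this identity term by term, starting with the $x$-to-$y$ sum. Fix a row $i$. The variable $s_{ij}$ appears only in the inner summands indexed by this $i$, and there are two cases.

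If $j\notin\mathcal P_x(i)$, then $s_{ij}$ appears as a negative inside the $\phi$-argument of every $k\in\mathcal P_x(i)$. Differentiating gives $\sum_{k\in\mathcal P_x(i)}\phi'_{ik}\,\epsilon_{ij}\psi'(s_{ij}-\nu s_{ik})$, which is $\gamma_{ij}$.

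If $j\in\mathcal P_x(i)$, then $s_{ij}$ appears only in the summand with $k=j$, because the negative sum excludes positives. There it enters through $\epsilon_{ij}\psi((1-\nu)s_{ij})$ and through each $\psi(s_{im}-\nu s_{ij})$. Differentiating gives $\phi'_{ij}\big(\epsilon_{ij}(1-\nu)\psi'((1-\nu)s_{ij})-\nu\sum_{m\notin\mathcal P_x(i)}\epsilon_{im}\psi'(s_{im}-\nu s_{ij})\big)$, which again matches $\gamma_{ij}$.

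In both cases, multiplying by the prefactor $\frac{1}{2n|\mathcal P_x(i)|}$ gives the first half of the identity.

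Third, repeat the computation for the $y$-to-$x$ sum. There the outer index is the $y$-sample, say $j$, and the similarities appear transposed as $s_{\cdot j}$. Consequently, the derivative with respect to $s_{ij}$ is $\bar\gamma_{ji}/(2n|\mathcal P_y(j)|)$, with the same two-case split according to whether $i\in\mathcal P_y(j)$. Summing the two contributions yields $-S(\gamma)_{ij}$, and adding $\partial R/\partial\theta_k$ completes the proof.

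I expect the only real obstacle to be index bookkeeping, specifically two points.

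The first is matching the transposed indices in $\bar\gamma$ and in $\bar\phi'$ with the roles of $i$ and $j$ in the second sum. I would handle this by renaming the summation variables in the second line of \eqref{eq:gen-contrastive} before differentiating.

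The second is checking that a positive $s_{ij}$ does not also appear as a "negative" in some other $k$-summand, which would produce cross terms not present in $\gamma_{ij}$. This is ruled out because the inner sum ranges only over $j\notin\mathcal P_x(i)$.

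I would also note that $n$, the batch size, plays the role of $N$ here.
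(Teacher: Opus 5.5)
Your proposal is correct and follows the paper's own proof. Like the paper, you differentiate through the $s_{ij}$, collect the coefficient of each $\partial s_{ij}$ into $\frac{1}{2n}\bigl(\gamma_{ij}/|\mathcal P_x(i)|+\bar\gamma_{ji}/|\mathcal P_y(j)|\bigr)$ with the same positive/negative case split, assume $\ell_2$-normalized features, and identify $\sum_{i,j}S(\gamma)_{ij}s_{ij}$ with the trace.

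One caveat, which you inherit from the paper. The definitions of $\bar\gamma$ and $\bar\phi'$ use $\epsilon_{ji}$ and $\epsilon_{mi}$, but the second line of the generalized loss weights $s_{ji}$ by $\epsilon_{ij}$ (anchor index first). The $y$-to-$x$ match therefore needs either $\epsilon$ symmetric (true for the CLIP and triplet choices) or the loss read with $\epsilon_{mi}$, as the paper's proof silently does; renaming summation indices alone, as you propose, does not reconcile this.
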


This result reveals that the gradient of the contrastive loss in Equation~\ref{eq:gen-contrastive} is the \emph{negative} of the gradient of the proposed objective function in Equation~\ref{eq: new obj func}. Hence, minimizing the contrastive loss is equivalent to maximizing the objective with contrastive similarity weight matrix $S(\gamma)$:
    \begin{align}
        \label{eq: new obj func}
        \operatorname{tr}\left(\F_{\theta_1}(\X) S(\gamma) \F_{\theta_2}^{\top}(\Y)\right)-R\left(\theta_1, \theta_2\right).
    \end{align}

\subsection{Linear representation setting}
We first specialize the general framework to the linear representation case. Here the encoders are parameterized as matrix multiplications: $\f_{\theta_1}(\x) = F_1 \x$, $\f_{\theta_2}(\y) = F_2 \y$, where $F_1 \in \mathbb{R}^{r \times d_1}$, $F_2 \in \mathbb{R}^{r \times d_2}$ are learnable projection matrices.

\begin{definition}[Weighted contrastive covariance]\label{def: linear cc} 
    Define the weighted contrastive covariance as:
        \begin{equation}
            C(\gamma)=\X S(\gamma)\Y^{\top}
            =-\frac{1}{n}\sum_{i,j}\frac{1}{2} \left( \frac{\gamma_{ij}}{|\mathcal P_x(i)|} + \frac{\bar{\gamma}_{ji}}{|\mathcal P_y(j)|} \right)\x_i\y_j^{\top}
            \label{eq:S_beta}
        \end{equation}
        with coefficients $\gamma_{ij}$ share the same definition with Definition~\ref{def:cc-sim}.
\end{definition}

\begin{remark}
    Note that the definition of $C(\gamma)$ exactly matches the definition of $S(\beta)$ in \cite{nakada2023understanding} under one-to-one alignment setting, which is proved in details in Appendix~\ref{app:beta}.
The structure of $C(\gamma)$ captures positive and negative pairs relationships, weighted appropriately. Our expression keeps the diagonal correction
$(\alpha_{ii}+\bar\alpha_{ii})/2$ that prior work reduced to $1$ by assuming that $\phi$ and $\psi$ are identity functions. This modification improves both theoretical generality and empirical performance.
\end{remark}

\begin{proposition}\label{lemma: grad equal linear}
    Under the linear setting, the Lemma~\ref{lemma:grad equal} is specialized as
    \begin{align}
        \frac{\partial \mathcal{L}}{\partial F_k}
        =&-\left.\frac{\partial \operatorname{tr}\left(F_1C(\gamma)F_2^{\top}\right)}{\partial F_k}\right|_{\gamma=\gamma\left(F_1, F_2\right)}+\frac{\partial R\left(F_1, F_2\right)}{\partial F_k}, \quad k \in\{1,2\}
    \end{align}
\end{proposition}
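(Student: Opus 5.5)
This follows from Lemma~\ref{lemma:grad equal} by substituting the linear parametrization and rewriting the trace. Set $\theta_1=F_1$ and $\theta_2=F_2$, with $\f_{\theta_1}(\x)=F_1\x$ and $\f_{\theta_2}(\y)=F_2\y$. Stacking columns gives
\[
\F_{\theta_1}(\X)=[F_1\x_1,\dots,F_1\x_n]=F_1\X,\qquad \F_{\theta_2}(\Y)=F_2\Y .
\]
Hence, for any fixed $\gamma$,
\[
\operatorname{tr}\bigl(\F_{\theta_1}(\X)S(\gamma)\F_{\theta_2}^{\top}(\Y)\bigr)=\operatorname{tr}\bigl(F_1\X S(\gamma)\Y^{\top}F_2^{\top}\bigr)=\operatorname{tr}\bigl(F_1C(\gamma)F_2^{\top}\bigr),
\]
using Definition~\ref{def: linear cc}. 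The second expression for $C(\gamma)$ in that definition follows directly from $\X\e_i=\x_i$ and $\e_j^{\top}\Y^{\top}=\y_j^{\top}$ applied termwise to Definition~\ref{def: C_beta}.

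The identity holds for every $(F_1,F_2)$ when $\gamma$ is held fixed, so the two functions of $F_k$ agree identically. Their partial derivatives in $F_k$ therefore coincide, and evaluating at $\gamma=\gamma(F_1,F_2)$ afterwards is legitimate: in Lemma~\ref{lemma:grad equal} the differentiation is also performed with $\gamma$ frozen and substituted only at the end. Plugging this into the lemma yields the claimed formula.

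One could also add the explicit forms $\partial/\partial F_1=F_2C(\gamma)^{\top}$ and $\partial/\partial F_2=F_1C(\gamma)$, which follow from $\partial\operatorname{tr}(AXB)/\partial X=A^{\top}B^{\top}$.

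The main point needing care is that the order of operations matches the lemma: freeze $\gamma$, differentiate, then evaluate. In particular, the derivative must not act on the dependence of $\gamma$ on $F_k$ through the $s_{ij}$. The normalization in the hyper-spherical similarity is absorbed into $\gamma$ and $S(\gamma)$ in the same way as in the general lemma, so it needs no separate treatment.
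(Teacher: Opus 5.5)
Your proposal is correct and is essentially the paper's own argument: substitute $\F_{\theta_1}(\X)=F_1\X$ and $\F_{\theta_2}(\Y)=F_2\Y$ into the gradient-equivalence lemma, differentiate with $\gamma$ frozen, and recognize $C(\gamma)=\X S(\gamma)\Y^{\top}$. One small correction to your final remark: the hyperspherical normalization is not absorbed into $\gamma$ or $S(\gamma)$. The paper's proof of the lemma simply assumes the encoder outputs are already $\ell_2$-normalized, which $F_1\x_i$ generally is not, so your specialization inherits that assumption rather than accounting for the derivative of the normalization.
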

To solve the optimization problem induced by our reformulated objective, we characterize its maximizer in the linear setting. We arrive at the following theorem, which establishes that the convergence of the contrastive loss can be replaced by a closed-form spectral update.

\begin{theorem}[Spectral characterization \citep{nakada2023understanding}]
\label{theorem:linear spectral charaterization}
    Consider minimizing the contrastive loss function $\mathcal{L}\left(F_1, F_2\right)$, with $R(F_1,F_2) = \frac{\rho}{2}||F_1^TF_2||_{F}^2$. Then,
	\begin{align}
            \underset{F_1, F_2}{\arg \min } \mathcal{L}\left(F_1, F_2\right)
            &=\underset{F_1 \in \mathbb{R}^{r \times d_1}, F_2 \in \mathbb{R}^{r \times d_2}}{\arg \max } \operatorname{tr}\left(F_1 C(\gamma) F_2^{\top}\right)-\frac{\rho}{2}\left\|F_1^{\top} F_2\right\|_{F}^2\\
		& =\{\left(F_1, F_2\right) \in \mathbb{R}^{r \times d_1} \times \mathbb{R}^{r \times d_2}: F_1^{\top} F_2=\frac{1}{\rho} \sum_{i=1}^r \sigma_i u_{i} v_{i}^{\top}\} 
	\end{align}
    where $\{\sigma_i, u_i, v_i\}_{i=1}^{r}$ are the top-$r$ singular values and vectors of $C(\gamma)$ according to the Eckart–Young–Mirsky theorem.
\end{theorem}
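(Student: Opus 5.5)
The plan is to reduce the objective to a low-rank matrix approximation problem in the single variable $M := F_1^{\top}F_2 \in \mathbb{R}^{d_1\times d_2}$, and then invoke the Eckart–Young–Mirsky theorem. The first equality comes from Proposition~\ref{lemma: grad equal linear}. It shows that, with $\gamma$ evaluated at the current $(F_1,F_2)$, the gradient of $\mathcal{L}$ is exactly the negative gradient of $\operatorname{tr}(F_1C(\gamma)F_2^{\top}) - \frac{\rho}{2}\|F_1^{\top}F_2\|_F^2$. So the stationarity conditions of the two problems coincide, and I will identify their optimizers in the sense of the paper: treat $C(\gamma)$ as frozen at $\gamma=\gamma(F_1,F_2)$, i.e.\ a self-consistent (fixed-point) characterization, as in \citet{nakada2023understanding}.

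\textbf{Step 1: rewrite the objective in terms of $M$.} By cyclicity of the trace,
\[
\operatorname{tr}(F_1 C(\gamma) F_2^{\top}) = \operatorname{tr}(C(\gamma) F_2^{\top}F_1) = \langle C(\gamma), F_1^{\top}F_2\rangle_F .
\]
Hence the objective equals $\langle C, M\rangle_F - \frac{\rho}{2}\|M\|_F^2$, which depends on $(F_1,F_2)$ only through $M$.

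\textbf{Step 2: complete the square.} The objective equals
\[
-\frac{\rho}{2}\left\|M - \tfrac{1}{\rho}C\right\|_F^2 + \frac{1}{2\rho}\|C\|_F^2 .
\]
Maximizing it is therefore equivalent to minimizing $\|M - C/\rho\|_F$.

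\textbf{Step 3: identify the feasible set of $M$.} As $F_1\in\mathbb{R}^{r\times d_1}$ and $F_2\in\mathbb{R}^{r\times d_2}$ range over all values, $M=F_1^{\top}F_2$ ranges over exactly the matrices of rank at most $r$. One inclusion is immediate. For the converse, take any rank-$\le r$ matrix $M = A B^{\top}$ with $A\in\mathbb{R}^{d_1\times r}$ and $B\in\mathbb{R}^{d_2\times r}$ (padding with zero columns if needed), and set $F_1=A^{\top}$, $F_2=B^{\top}$.

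\textbf{Step 4: apply Eckart–Young–Mirsky.} The best rank-$\le r$ Frobenius approximation of $C/\rho$ is its truncated SVD $\frac{1}{\rho}\sum_{i=1}^r \sigma_i u_i v_i^{\top}$, where $\{\sigma_i,u_i,v_i\}$ are the top singular triples of $C(\gamma)$. Pulling back through Step 3 gives the stated set of $(F_1,F_2)$. If $\sigma_r=\sigma_{r+1}$, the truncated SVD is not unique; I would note that the set should then be read with any valid choice of top-$r$ singular vectors. If fewer than $r$ singular values are positive, the nonpositive ones contribute zero.

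\textbf{Main obstacle.} The hard step is the first equality, because $C(\gamma)$ itself depends on $(F_1,F_2)$. The reduction in Steps 1–4 is exact only when $\gamma$ is held fixed. My approach is to use Proposition~\ref{lemma: grad equal linear} to argue that any minimizer of $\mathcal{L}$ is a stationary point of the surrogate with $\gamma$ frozen at its own value. For this quadratic-in-$M$ surrogate, I would then show that stationary points that are local maxima over the rank-$\le r$ set are global, using the standard result that truncated SVDs are the only non-saddle critical points of the rank-constrained Frobenius problem. This yields the characterization as a fixed-point statement, which is how \citet{nakada2023understanding} interpret it.
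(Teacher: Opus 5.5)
Your Steps 1--4 are the paper's proof. The paper completes the square to $\frac{1}{2\rho}\|C(\gamma)\|_F^2-\frac{\rho}{2}\|F_1^{\top}F_2-\rho^{-1}C(\gamma)\|_F^2$, uses $\operatorname{rank}(F_1^{\top}F_2)\le r$, and applies Eckart--Young--Mirsky, explicitly treating $C(\gamma)$ as fixed. Your Step 3 supplies the surjectivity onto rank-$\le r$ matrices that the paper leaves implicit. For a given $C(\gamma)$ this part is correct, and it is all the paper actually proves; the first equality is only asserted via gradient equivalence.

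The gap is in your bridge for the first equality. Write $F=(F_1,F_2)$ and $h_\gamma(F)=\operatorname{tr}(F_1C(\gamma)F_2^{\top})-\frac{\rho}{2}\|F_1^{\top}F_2\|_F^2$.

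\emph{Only first-order information transfers.} Gradient equivalence is the identity $\nabla\mathcal{L}(F)=-\nabla h_{\gamma}(F)|_{\gamma=\gamma(F)}$. Differentiating again, $\nabla^2\mathcal{L}(F)$ differs from $-\nabla^2 h_{\gamma(F)}(F)$ by a term involving $\partial\gamma/\partial F$. So a minimizer of $\mathcal{L}$ need not be a local maximum of the frozen surrogate. The benign-landscape fact you cite (non-saddle critical points of $h_\gamma$ are global) therefore has nothing to act on.

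\emph{First-order conditions do not single out the top $r$ triples.} The critical-point equations $F_1(C-\rho F_1^{\top}F_2)=0$ and $F_2(C-\rho F_1^{\top}F_2)^{\top}=0$ are solved by $F_1^{\top}F_2=\rho^{-1}\sum_{i\in I}\sigma_iu_iv_i^{\top}$ for \emph{any} index set $I$ with $|I|\le r$, where the triples are those of $C(\gamma(F))$. Stationarity alone cannot exclude a non-top $I$.

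\emph{The reverse inclusion is missing.} You never show that points of the right-hand set actually minimize $\mathcal{L}$.

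\emph{The literal first equality cannot be proved in the paper's normalized setting.} The similarities $s_{ij}$ are invariant under $F_1\mapsto cF_1$ for $c>0$, while $R$ scales by $c^2$. Hence every $(F_1,F_2)$ with $F_1^{\top}F_2\neq 0$ is strictly improved by any $c\in(0,1)$. It follows that no point of the right-hand set is a minimizer whenever $C(\gamma)\neq 0$.

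So drop the bridging argument. State the theorem as the characterization of maximizers of the surrogate for frozen $C(\gamma)$, which is the form in which the paper's fixed-point iteration uses it. That is exactly what your Steps 1--4 establish.
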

Consequently, in linear case, the global minimum is achieved by projecting the contrastive covariance $C(\gamma)$ onto its top-r singular components. Thus, gradient descent on any loss in the contrastive family $(\phi,\psi)$ merely tracks the dominant singular subspace of $C(\gamma)$. Our algorithm UniCon performs this update in closed form, replacing thousands of SGD steps by one spectral factorization.

\subsection{Kernelized representation setting}
\paragraph{Why leave the linear world?}
The linear setting reveals a clean spectral structure for contrastive alignment, but cross–modal relations (e.g., vision $\leftrightarrow$ language) are typically nonlinear. Moreover, with frozen or partially frozen pretrained encoders, the residual alignment is rarely captured by mere linear heads. We therefore lift the analysis to \textbf{nonlinear} encoders while \emph{keeping the output space $\mathbb{R}^r$ shared across modalities}. Kernelization provides a tractable route with explicit spectral solutions that \emph{reduce to the linear case}.

\paragraph{RKHS representation.}
Let $(\mathcal H_{X},k_X)$ and $(\mathcal H_Y,k_Y)$ be RKHSs with canonical feature maps
\begin{equation}
    \phi_X(\x)=k_X(\cdot,\x)\in\mathcal H_X,\qquad
    \phi_Y(\y)=k_Y(\cdot,\y)\in\mathcal H_Y,
\end{equation}
satisfying the reproducing property
$f(\x)=\langle f,\phi_X(\x)\rangle_{\mathcal H_X}$ for all $f\in\mathcal H_X$ (and analogously for $\mathcal H_Y$).
For $r$-dimensional outputs, the $a$-th coordinate $(a=1,\dots,r)$ admits the representer form
\begin{equation}
    f^{(a)}_{\theta_1}(\cdot)=\sum_{i=1}^n A_{ia}\,k_X(\x_i,\cdot),\qquad
    f^{(a)}_{\theta_2}(\cdot)=\sum_{j=1}^n B_{ja}\,k_Y(\y_j,\cdot),
\end{equation}
with $A,B\in\mathbb{R}^{n\times r}$. Let $K_X=[k_X(\x_i,\x_j)]$ and $K_Y=[k_Y(\y_i,\y_j)]$. The batch embeddings are
\begin{equation}
    \F_{\theta_1}(\X)=A^\top K_X\in\mathbb{R}^{r\times n},\qquad
    \F_{\theta_2}(\Y)=B^\top K_Y\in\mathbb{R}^{r\times n}.
\end{equation}

The contrastive trace term becomes
\begin{equation}
\label{eq:kern-trace}
\mathrm{tr}\!\big(\F_{\theta_1}(\X)\,S(\gamma)\,\F_{\theta_2}(\Y)^\top\big)
=\mathrm{tr}\!\big(A^\top K_X\,S(\gamma)\,K_Y B\big).
\end{equation}

With the RKHS parameterization above and the kernelized trace form in \eqref{eq:kern-trace}, the entire objective can be written purely in terms of the Gram matrices. Under this notation, the optimizer is governed by the principal singular structure of $M := K_X^{1/2} S(\gamma) K_Y^{1/2}$, as formalized below.
\begin{theorem}[Kernelized spectral characterization (unified form)]
\label{theorem:max-kernelized}
Let $\rho>0$ and define the regularizer $R(A,B)\;=\;\big\|\,(K_X^{1/2}A)^{\!\top}(K_Y^{1/2}B)\,\big\|_F^{2}.$
Then minimizing the contrastive loss is equivalent to the kernelized maximization
\begin{equation}
\label{eq:kernel-obj-new}
\underset{A,B\in\mathbb{R}^{n\times r}}{\max}\quad
\operatorname{tr}\!\big(A^{\top}K_X\,S(\gamma)\,K_Y B\big)
-\frac{\rho}{2}\,\big\|\,(K_X^{1/2}A)^{\!\top}(K_Y^{1/2}B)\,\big\|_F^{2}.
\end{equation}
Let $A' := A^{\top}K_X^{1/2},
B' := B^{\top}K_Y^{1/2}, 
M := K_X^{1/2}\,S(\gamma)\,K_Y^{1/2}
$.
Then \eqref{eq:kernel-obj-new} rewrites 
\begin{equation}
    \max_{A',B'}\ \operatorname{tr}(A' M B'^{\top})-\frac{\rho}{2}\,\|A'^{\top}B'\|_F^2.
\end{equation}
If $M=U\Sigma V^\top$ is an SVD and $M_r=\sum_{i=1}^r \sigma_i u_i v_i^\top$ its best rank-$r$ approximation by Eckart–Young–Mirsky theorem, then all maximizers satisfy the relation
\begin{equation}
\label{eq:whitened-relation}
(A')^{\top}B'=\frac{1}{\rho}\,M_r
\quad\Longleftrightarrow\quad
A\,B^{\top}=\frac{1}{\rho}\,K_X^{-1/2}M_rK_Y^{-1/2}.
\end{equation}
If $K_X$ or $K_Y$ is singular, replace inverse square roots by Moore–Penrose pseudo–inverse square roots.
\end{theorem}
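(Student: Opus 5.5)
The plan has three steps: reduce the loss to the kernelized trace objective via Lemma~\ref{lemma:grad equal}, change variables to whitened coordinates, and then apply the same spectral argument as Theorem~\ref{theorem:linear spectral charaterization} with $C(\gamma)$ replaced by $M$.

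\textbf{Step 1: reduction to the trace objective.} Substituting the representer parameterization $\F_{\theta_1}(\X)=A^\top K_X$, $\F_{\theta_2}(\Y)=B^\top K_Y$ into Lemma~\ref{lemma:grad equal} gives that, with $\gamma$ frozen at its current value, the gradient of $\mathcal L$ in $(A,B)$ is the negative gradient of $\operatorname{tr}(A^\top K_X S(\gamma) K_Y B) - R(A,B)$. Here the chain rule passes through the linear maps $A\mapsto A^\top K_X$ and $B\mapsto B^\top K_Y$. The regularizer carries the factor $\rho/2$ as written in \eqref{eq:kernel-obj-new}. Stationary points of the loss therefore coincide with stationary points of \eqref{eq:kernel-obj-new} at the self-consistent $\gamma$. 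I read ``equivalent'' in exactly this fixed-point sense, as in the linear case.

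\textbf{Step 2: whitening.} Since $K_X,K_Y\succeq 0$, they have symmetric square roots, so $K_X = K_X^{1/2}K_X^{1/2}$. Then
\[
\operatorname{tr}(A^\top K_X S K_Y B)=\operatorname{tr}\big((A^\top K_X^{1/2})\,(K_X^{1/2}SK_Y^{1/2})\,(K_Y^{1/2}B)\big)=\operatorname{tr}(A' M B'^\top).
\]
With $A'=A^\top K_X^{1/2}\in\mathbb{R}^{r\times n}$ and $B'=B^\top K_Y^{1/2}$, we have $(K_X^{1/2}A)^\top(K_Y^{1/2}B)=A'B'^\top$. Because $A'^\top B'$ and $A'B'^\top$ share nonzero singular values, their Frobenius norms agree, and the objective becomes the stated one. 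This is exactly the linear problem of Theorem~\ref{theorem:linear spectral charaterization} with $F_1=A'$, $F_2=B'$, $C(\gamma)=M$, and $d_1=d_2=n$.

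\textbf{Step 3: maximizing and pulling back.} Set $W=A'^\top B'\in\mathbb{R}^{n\times n}$, which ranges over all matrices of rank at most $r$. The objective equals $\operatorname{tr}(W^\top M)-\tfrac{\rho}{2}\|W\|_F^2$, which can be written as $\tfrac{1}{2\rho}\|M\|_F^2-\tfrac{\rho}{2}\|W-M/\rho\|_F^2$. Maximizing it is therefore the best rank-$r$ approximation of $M/\rho$, and by Eckart--Young--Mirsky the maximizers satisfy $W=M_r/\rho$. The map $(A',B')\mapsto W$ is onto rank-$\le r$ matrices, so every maximizer is characterized by this relation. The relation is unique when $\sigma_r>\sigma_{r+1}$; otherwise $M_r$ is any choice from the tied subspace.

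To return to $(A,B)$, the relation reads $K_X^{1/2}AB^\top K_Y^{1/2}=M_r/\rho$. Multiplying on the left by $K_X^{-1/2}$ and on the right by $K_Y^{-1/2}$ gives \eqref{eq:whitened-relation}.

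\textbf{Main obstacle: singular Gram matrices.} When $K_X$ or $K_Y$ is singular, the back-substitution needs care. $A$ is determined only modulo $\ker K_X$, and that component does not affect the embeddings $A^\top K_X$. Moreover, $M_r$ lies in $\operatorname{range}(K_X^{1/2})\times\operatorname{range}(K_Y^{1/2})$, because $M$ does and so do its top singular vectors. With $P_X=K_X^{+1/2}K_X^{1/2}$ the orthogonal projector onto $\operatorname{range}(K_X)$ (and $P_Y$ defined likewise), the pseudo-inverse square roots give $P_XAB^\top P_Y=\tfrac{1}{\rho}K_X^{+1/2}M_rK_Y^{+1/2}$. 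Since the kernel components are irrelevant, $AB^\top$ can be taken to be this canonical representative. I would state the equivalence modulo $\ker K_X$ and $\ker K_Y$ to make the pseudo-inverse version rigorous.
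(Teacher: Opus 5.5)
Your Step~2 rests on a false claim, and it is the step that links the stated regularizer to the rewritten problem. You correctly compute $(K_X^{1/2}A)^\top(K_Y^{1/2}B)=A'B'^\top$, which is an $r\times r$ matrix. But $A'B'^\top$ and the $n\times n$ matrix $A'^\top B'$ do not in general share nonzero singular values. What is true is that $A'^\top B'$ and $B'A'^\top=(A'B'^\top)^\top$ share nonzero \emph{eigenvalues}, and that says nothing about Frobenius norms. Indeed $\|A'B'^\top\|_F^2=\operatorname{tr}(A'^\top A'\,B'^\top B')$, whereas $\|A'^\top B'\|_F^2=\operatorname{tr}(A'A'^\top\,B'B'^\top)$. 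For $r=1$, $n=2$, $A'=(1,0)$, $B'=(0,1)$, the first is $0$ and the second is $1$.

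The discrepancy is not cosmetic. Read the penalty literally and take $K_X,K_Y$ invertible, so $A',B'$ are free. Let the first rows of $A'$ and $B'$ be $t a^\top$ and $t b^\top$, with $a\perp b$ and $a^\top M b>0$; such a pair exists whenever $n\ge 2$ and $M$ is not a multiple of the identity. Set all other rows to zero. Then the penalty vanishes while $\operatorname{tr}(A'MB'^\top)=t^2\,a^\top M b\to\infty$. So the literal problem has no maximizers, and your Step~3 has nothing to characterize.

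The repair is to take $R$ to be the kernel cross-covariance penalty from the paper's appendix, $\mathcal R_\times(A,B)=\operatorname{tr}(A^\top K_X A\,B^\top K_Y B)=\|K_X^{1/2}AB^\top K_Y^{1/2}\|_F^2=\|A'^\top B'\|_F^2$. This is also the penalty that reduces to $\|F_1^\top F_2\|_F^2$ under linear kernels. The paper's own proof makes this substitution silently. It rewrites $(K_X^{1/2}A)^\top(K_Y^{1/2}B)$ as $(A^\top K_X^{1/2})^\top(B^\top K_Y^{1/2})=K_X^{1/2}AB^\top K_Y^{1/2}$, which is not an identity; the two sides are not even the same size.

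Once you state this reading explicitly and drop the singular-value argument, the rest of your proof goes through and follows the paper's route: whitening, completing the square in $W=A'^\top B'$, Eckart--Young--Mirsky, and undoing the change of variables. Your remarks on the fixed-point meaning of ``equivalent'', on non-uniqueness when $\sigma_r=\sigma_{r+1}$, and on the singular case via $P_X,P_Y$ are more careful than the paper's treatment.
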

One explicit optimal choice is $
    A^\star = K_X^{-1/2}U_r$ and $
B^\star = K_Y^{-1/2}V_r\,\Sigma_r/\rho,$
where $U_r=[u_1,\dots,u_r]$, $V_r=[v_1,\dots,v_r]$, $\Sigma_r=\operatorname{diag}(\sigma_1,\dots,\sigma_r)$ are SVD of $M$.

\begin{corollary}[Kernel inference (out-of-sample)]
\label{cor:kernel-inference}
 Let $\mathcal D=\{(\x_i,\y_i)\}_{i=1}^{n}$ be the reference batch used to build contrastive similarity weight $S(\gamma)$ and let $k$ be a positive definite kernel.
For a new pair $(\x^\ast,\y^\ast)$, set
$\kappa_X(\x^\ast)=[k_X(\x_1,\x^\ast),\dots,k_X(\x_n,\x^\ast)]^\top$ and
$\kappa_Y(\y^\ast)=[k_Y(\y_1,\y^\ast),\dots,k_Y(\y_n,\y^\ast)]^\top$.
With an optimal $(A^\star,B^\star)$ from Theorem~\ref{theorem:max-kernelized},
\begin{equation}
    \f_{\theta_1}(\x^\ast)=(A^\star)^\top \kappa_X(\x^\ast),\qquad
\f_{\theta_2}(\y^\ast)=(B^\star)^\top \kappa_Y(\y^\ast),
\end{equation}
and the similarity
    \begin{equation}
        s(\x^{\ast},\y^{\ast})
    \;=\;
    \frac{\kappa_X({\x^{\ast}})^{\top}A^\star {B^\star} ^{\top}\kappa_Y({\y^{\ast}})}{\|{A^\star}^{\top}\kappa_X({\x^{\ast}})\|_2\|{B^\star}^{\top}\kappa_Y({\y^{\ast}})\|_2}.
    \end{equation}
\end{corollary}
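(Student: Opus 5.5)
The corollary follows directly from the representer parameterization combined with Theorem~\ref{theorem:max-kernelized}. The plan is to show that once an optimal coefficient pair $(A^\star,B^\star)$ is fixed, the learned encoders are genuine functions on all of $\mathcal X$ and $\mathcal Y$, not only on the reference batch. Evaluating them at a new point then gives the stated formulas, and substituting these into the hyper-spherical similarity gives the cosine expression.

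\textbf{Step 1 (encoders as RKHS functions).} For each coordinate $a=1,\dots,r$, the representer form gives $f^{(a)}_{\theta_1}(\cdot)=\sum_{i=1}^n A^\star_{ia}k_X(\x_i,\cdot)$. This is a finite combination of canonical features $\phi_X(\x_i)$, so it lies in $\mathcal H_X$ because $k_X$ is positive definite. The reproducing property then gives, for any $\x^\ast$,
\[
f^{(a)}_{\theta_1}(\x^\ast)=\langle f^{(a)}_{\theta_1},\phi_X(\x^\ast)\rangle_{\mathcal H_X}=\sum_i A^\star_{ia}k_X(\x_i,\x^\ast)=\big((A^\star)^\top\kappa_X(\x^\ast)\big)_a .
\]
Stacking the $r$ coordinates yields $\f_{\theta_1}(\x^\ast)=(A^\star)^\top\kappa_X(\x^\ast)$, and the same argument gives $\f_{\theta_2}(\y^\ast)=(B^\star)^\top\kappa_Y(\y^\ast)$. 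As a consistency check, taking $\x^\ast=\x_j$ returns the $j$-th column of $\F_{\theta_1}(\X)=A^\top K_X$. The out-of-sample map therefore extends the in-sample embeddings on which Theorem~\ref{theorem:max-kernelized} was optimized.

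\textbf{Step 2 (similarity).} Substitute these vectors into the hyper-spherical similarity definition. The numerator is $\f_{\theta_1}(\x^\ast)^\top\f_{\theta_2}(\y^\ast)=\kappa_X(\x^\ast)^\top A^\star (B^\star)^\top\kappa_Y(\y^\ast)$, and the denominators are the stated $\ell_2$ norms.

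\textbf{Step 3 (which optimizer is used).} The main subtlety is that Theorem~\ref{theorem:max-kernelized} determines only the product $AB^\top$ (up to pseudo-inverse conventions), not $A$ and $B$ individually. The numerator depends only on $A^\star(B^\star)^\top=\frac1\rho K_X^{-1/2}M_rK_Y^{-1/2}$, so it is invariant across maximizers. The normalizing norms, however, depend on the particular factorization. The statement is therefore read as holding for the chosen optimal pair, for instance the explicit choice $A^\star=K_X^{-1/2}U_r$ and $B^\star=K_Y^{-1/2}V_r\Sigma_r/\rho$ given after the theorem. For that pair the formula is an identity, with nothing further to prove.

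Two degenerate cases need a remark. When $K_X$ or $K_Y$ is singular, pseudo-inverse square roots are used as in the theorem, and the argument is unchanged. If $(A^\star)^\top\kappa_X(\x^\ast)=0$, the similarity is undefined, just as it is in the base definition.
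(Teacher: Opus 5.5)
Your proposal is correct and follows the paper's route. The paper gives no standalone proof: it reads the corollary off its RKHS parameterization, evaluating $\f_{\theta_1}(\x)=A^\top\kappa_X(\x)$ and $\f_{\theta_2}(\y)=B^\top\kappa_Y(\y)$ at a new point and substituting into the hyper-spherical similarity, exactly as in your Steps 1--2, while your Step 3 point (the numerator depends only on $A^\star(B^\star)^\top$, but the normalizing norms depend on which factorization is chosen, e.g.\ $A^\star=K_X^{-1/2}U_r$, $B^\star=K_Y^{-1/2}V_r\Sigma_r/\rho$) is a valid clarification the paper leaves implicit.
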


In practice we observe that a simple angular kernel 
$k(u, v) = \frac{1}{\pi} \|u\| \|v\| \left( \sin \theta + (\pi - \theta) \cos \theta \right)
, \ \theta = \arccos\left( \frac{u^\top v}{\|u\| \|v\|} \right)$ yields the best trade‑off between speed and accuracy.

\subsection{Unified Spectral View (Linear as a Special Case of Kernel)}

Our kernel formulation strictly generalizes the linear setting. 
For \emph{linear kernels} $k_X(x,x')=\langle x,x'\rangle$, $k_Y(y,y')=\langle y,y'\rangle$, the Gram matrices reduce to $K_X=\X^{\top}\X$ and $K_Y=\Y^{\top}\Y$. Setting
$F_1=A^\top X^\top$ and $F_2=B^\top Y^\top$ yields
$\mathcal R_{\times}(A,B)=\|F_1^\top F_2\|_F^2=\mathrm{tr}(F_1F_1^\top F_2F_2^\top)$, 
exactly matching the penalty used in the linear section. In the linear analysis we considered the weighted contrastive covariance matrix $C(\gamma)=X\,S(\gamma)\,Y^{\top},$ while in the RKHS analysis the central operator is $M=K_X^{1/2}\,S(\gamma)\,K_Y^{1/2}.$
When the kernels are linear, $M=(X^{\top}X)^{1/2}S(\gamma)(Y^{\top}Y)^{1/2}$.

Let reduced SVDs be $X=U_X\Sigma_XV_X^{\top}$ and $Y=U_Y\Sigma_YV_Y^{\top}$, and define $T=\Sigma_X V_X^{\top} S(\gamma) V_Y \Sigma_Y$ with SVD $T=U_T\Sigma_TV_T^{\top}$.
Then we obtain
\begin{equation}
    C(\gamma)=U_XTU_Y^{\top}=(U_XU_T)\Sigma_T(U_YV_T)^{\top},\qquad
M=V_XTV_Y^{\top}=(V_XU_T)\Sigma_T(V_YV_T)^{\top}.
\end{equation}
Hence left/right multiplication by orthonormal matrices maps $C(\gamma)$ and $M$ to the \emph{same} nonzero singular values $\Sigma_T$. In particular, their best rank–$r$ approximations select the same spectrum (Eckart–Young–Mirsky). When $X$ or $Y$ is rank-deficient (or for general kernels where $K_X,K_Y$ may be singular), all statements hold on the effective subspace using Moore–Penrose square roots.

Thus the kernel SVD of $M$
is the exact RKHS analogue of the linear SVD of $C(\gamma)$, and the linear setting is recovered as the
special case of the linear kernel.

\color{black}

\paragraph{Unified consequence.} Theorem\ref{theorem:max-kernelized} provides a unified spectral view for understanding contrastive alignment:
\begin{tcolorbox}
\centering
contrastive loss minimization
$\;\Longleftrightarrow\;$
best rank–$r$ approximation with RKHS.
\end{tcolorbox}

with $S(\gamma)$ encoding the particular contrastive objective
(e.g., InfoNCE, CLIP, triplet) and the kernel selecting the nonlinear feature
space in which the alignment is performed.

\subsection{Scalable training} 
\paragraph{Batch aggregation.} In practical scenarios, particularly in large-scale vision-language or multimodal applications, $N$ can be substantial, leading to prohibitive computational and memory demands. We therefore aggregate mini-batch contrastive similarity weight matrix $S^{(b)}(\gamma)$ by our closed-form solution. The final $S(\gamma)$ by taking quality weighted sum. 
\paragraph{Numerical stability.} If $K_X$ or $K_Y$ is ill-conditioned or singular,
form square roots with a Tikhonov regularization when needed, replacing $K$ by $K+\lambda I$ with $\lambda>0$ in $K^{\pm 1/2}$. This enhances robustness to near-singular Gram matrices and stabilizes the closed-form update. For low rank approximation step, one can use randomized SVD \citep{halko2011finding} on $M$ or Nyström approximations of $K_X,K_Y$ to reduce both memory and time.

\section{Experiments}\label{sec:experiment}
During implementation, UniCon directly leverages the contrastive similarity weight $S(\gamma)$ for contrastive alignment. The full computation of $S(\gamma)$ is provided in Appendix~\ref{supp:Scode}. We start with synthetic data and then extend to evaluate the practical utility of UniCon on unimodal and multimodal datasets. In the unimodal setting, we test on CIFAR-100, where the goal is image-to-image alignment. In the multimodal setting, we test on \textsc{Flickr30k} and \textsc{MS-COCO} for image–text retrieval. Across both settings, we compare UniCon against standard CLIP-style contrastive learning trained with stochastic gradient descent (SGD–CLIP). All experiments are run on a single NVIDIA L40S GPU, and wall-clock times are reported for reference.

\subsection{Synthetic data}
To verify our theoretical results in a fully controlled environment, we conduct synthetic experiments in both linear and nonlinear regimes. Details of setup could be found in Appendix\ref{app:exp}.
\paragraph{Linear Latent-Factor Model.}
We generate synthetic data from latent vectors $\mathbf{z} \in \mathbb{R}^r$ that are sampled around $K=3$ cluster centers in latent space. We compare our method UniCon, which performs a spectral update using the closed-form SVD of the weighted covariance $C(\gamma)$, against a standard baseline trained via stochastic gradient descent (SGD) on the CLIP loss with AdamW optimizer ($\text{lr} = 2 \times 10^{-3}$). As shown in Figure~\ref{fig:linear-alignment}, UniCon achieves $100\%$ matching accuracy after just 0.02 seconds. CLIP-SGD requires 400 epochs (0.32 seconds) to reach the same accuracy. This demonstrates that UniCon not only preserves structure in the latent space but also converges faster than gradient-based methods.
\begin{figure}[h]
    \centering
    \includegraphics[width=1\linewidth]{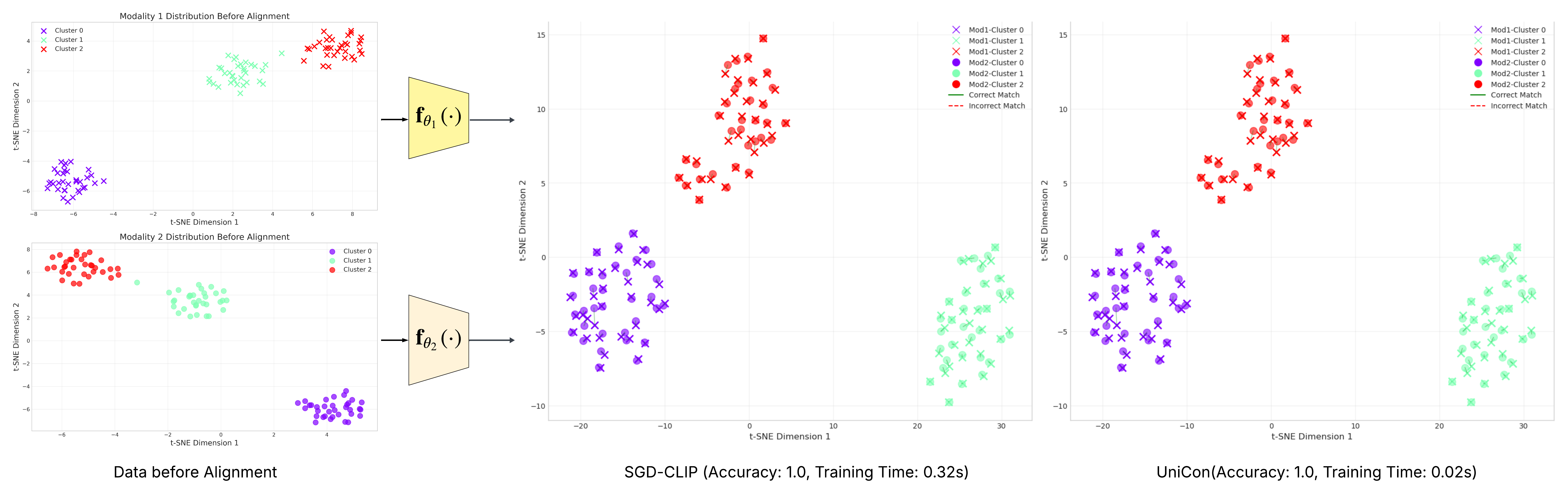}
    \caption{Visualization of cross-modal alignment using t-SNE embeddings of the shared representation space. Modality 1 (cross) and modality 2 (circle) are projected from different spaces into a shared representation space $\mathbb{R}^r$. Colors indicate ground-truth clusters, and lines connect matched image–text pairs. Both SGD-CLIP (left) and UniCon (right) successfully align paired samples while preserving cluster structure. The visual similarity between the two plots is expected: UniCon achieves a comparable aligned representation to SGD-CLIP with substantially less training time.
    }
    \label{fig:linear-alignment}
\end{figure}

\paragraph{Nonlinear Latent-Factor Model.}
We further evaluate the method under a nonlinear transformation of the latent space. The baseline model trains nonlinear MLP encoder via SGD on CLIP loss and AdamW optimizer. Our method UniCon calculates a sequence of $\langle \F_{\theta_1}(\x)\F_{\theta_2}(\y)\rangle$ each for one training batch. Then we apply batch aggregation on validation data to calculate the weight for each training batch. The performance is evaluated by the correctly matched pairs of test data using the kernel-weighted generalization. UniCon converges in 2 epochs (0.04 seconds), achieving 86\% matching accuracy, while CLIP-SGD reaches 84\% after 500 epochs (0.65 seconds).
\begin{figure}[h]
    \centering
    \begin{subfigure}[b]{0.3\linewidth}
        \includegraphics[width=\linewidth]{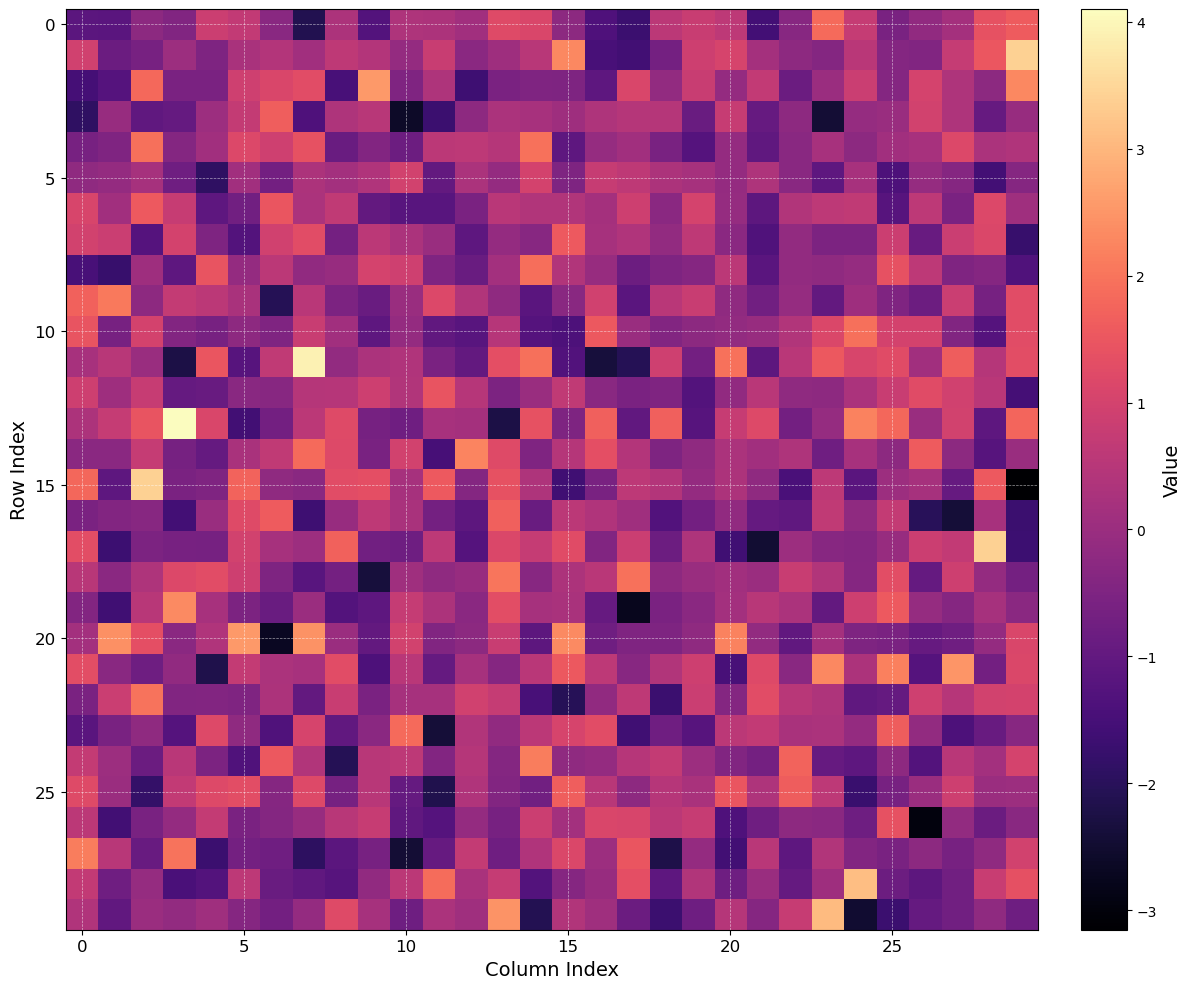}
        \caption{Initial $S(\gamma)$}
    \end{subfigure}
    \hfill
    \begin{subfigure}[b]{0.3\linewidth}
        \includegraphics[width=\linewidth]{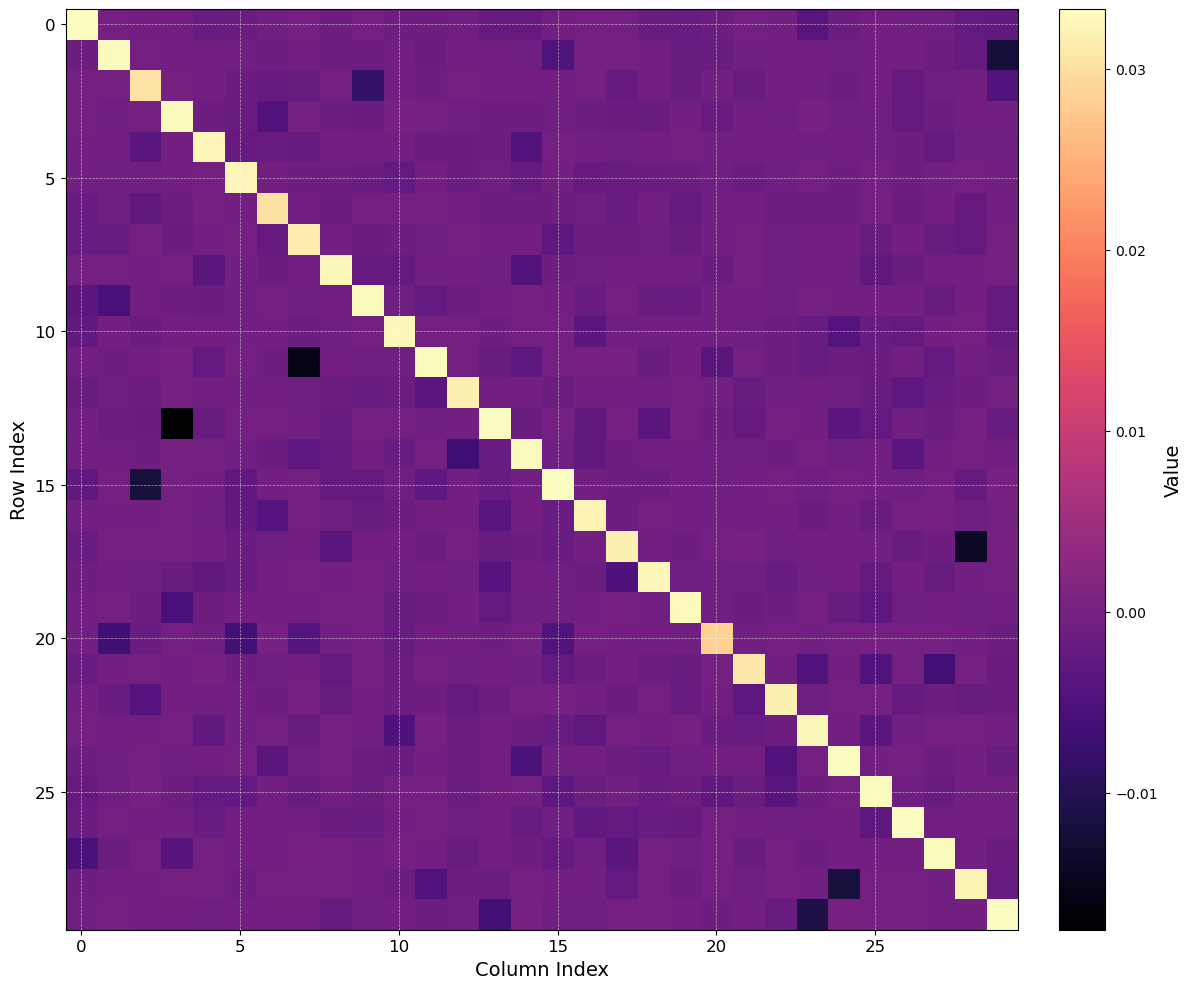}
        \caption{After epoch 1}
    \end{subfigure}
    \hfill
    \begin{subfigure}[b]{0.3\linewidth}
        \includegraphics[width=\linewidth]{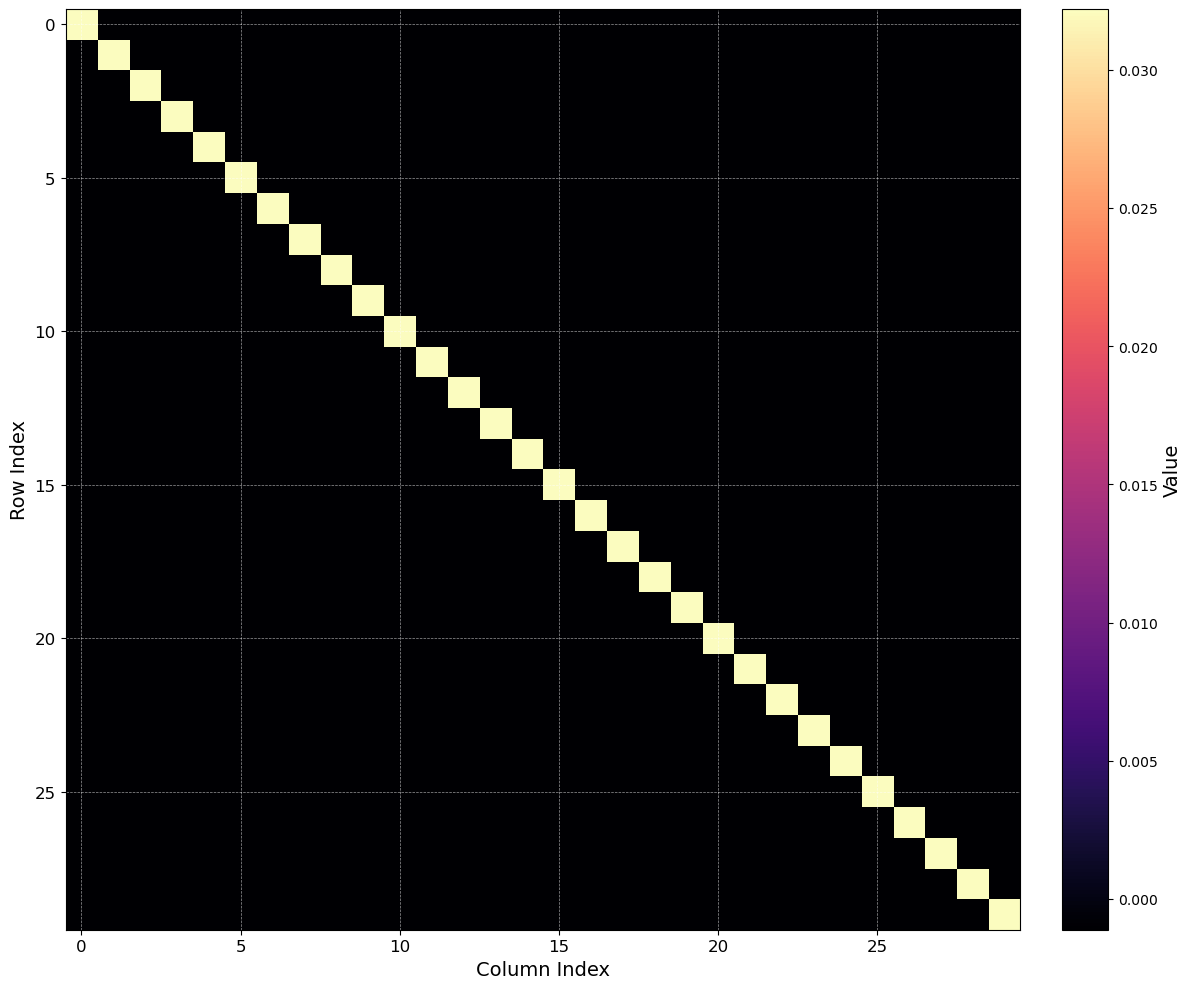}
        \caption{After epoch 2 (converged)}
    \end{subfigure}
    \caption{Evolution of the contrastive similarity weight matrix $S(\gamma)$ in the nonlinear latent-factor model across training.}
    \label{fig:nonlinear-Cbeta}
\end{figure}

\paragraph{Summary.}
In both linear and nonlinear settings, UniCon demonstrates rapid convergence and strong alignment performance, validating the theoretical claims that contrastive learning objectives can be solved via a single spectral step. Figure~\ref{fig:nonlinear-Cbeta} confirm that UniCon achieves consistent cross-modal alignment.

\subsection{Image-Image Alignment on CIFAR-10 (unimodal)}

\paragraph{Setup.}
We evaluate UniCon on a unimodal image alignment task using CIFAR-10, a benchmark dataset containing 10 object categories. Following the convention of SimCLR~\citep{chen2020simclr}, we treat two augmentations derived from the same image as a positive pair, while augmentation pairs from different images are treated as negative. Feature embeddings are extracted using a frozen ResNet-18 encoder. The objective is to align these embeddings such that images from the same class are pulled closer together in the shared feature space, thereby facilitating unimodal classification.

We conduct this experiment under a nonlinear setting.
For baseline, we train a lightweight projection encoder $g(\cdot)$ on frozen ResNet‑18 features with bidirectional InfoNCE optimized by SGD. The encoder is a two-layer MLP encoder. We optimize with SGD for 300 epochs. The trained encoder is then frozen for linear probing with a small classifier on the $128$‑dimensional embeddings.
UniCon learns a kernelized projection from frozen ResNet‑18 features by a spectral closed-form solution. Given two augmented views $(z1, z2)$, we compute an angular kernel between features and iteratively estimate a batch‑wise feature map $A$. After learning an average of $A$, we freeze it and train a small linear classifier on the $128$‑dimensional embeddings for linear probing, mirroring the SGD setup for fair comparison.
\begin{figure}[h]
    \centering
    \begin{subfigure}[b]{0.3\linewidth}
        \includegraphics[width=\linewidth]{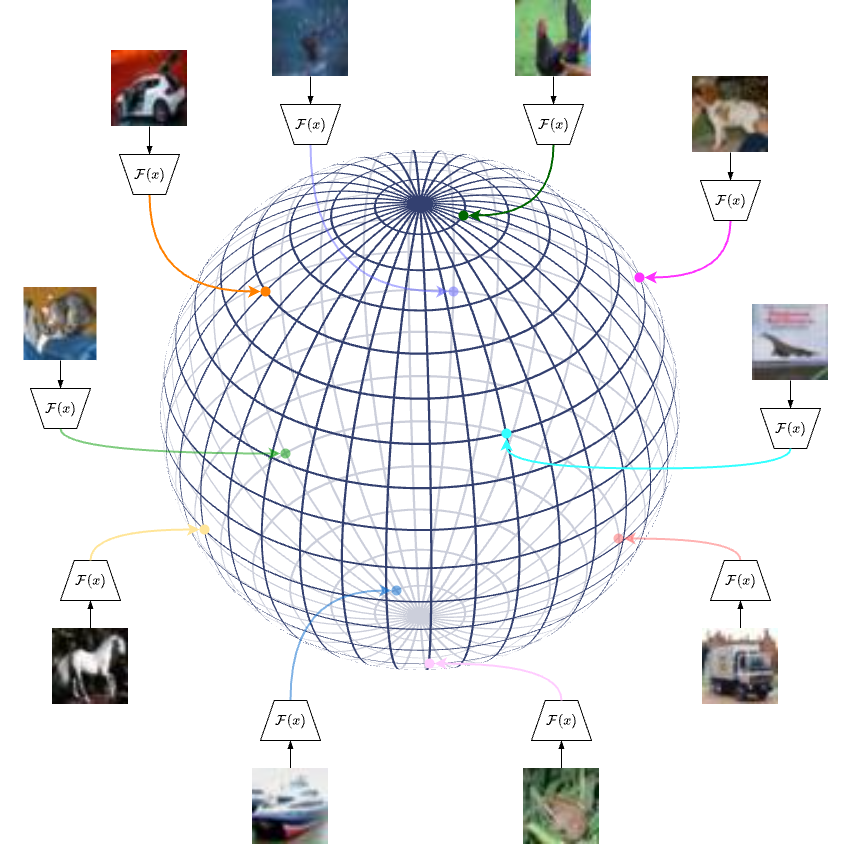}
        \caption{Contrastive classification}
    \end{subfigure}
    \hfill
    \begin{subfigure}[b]{0.3\linewidth}
        \includegraphics[width=\linewidth]{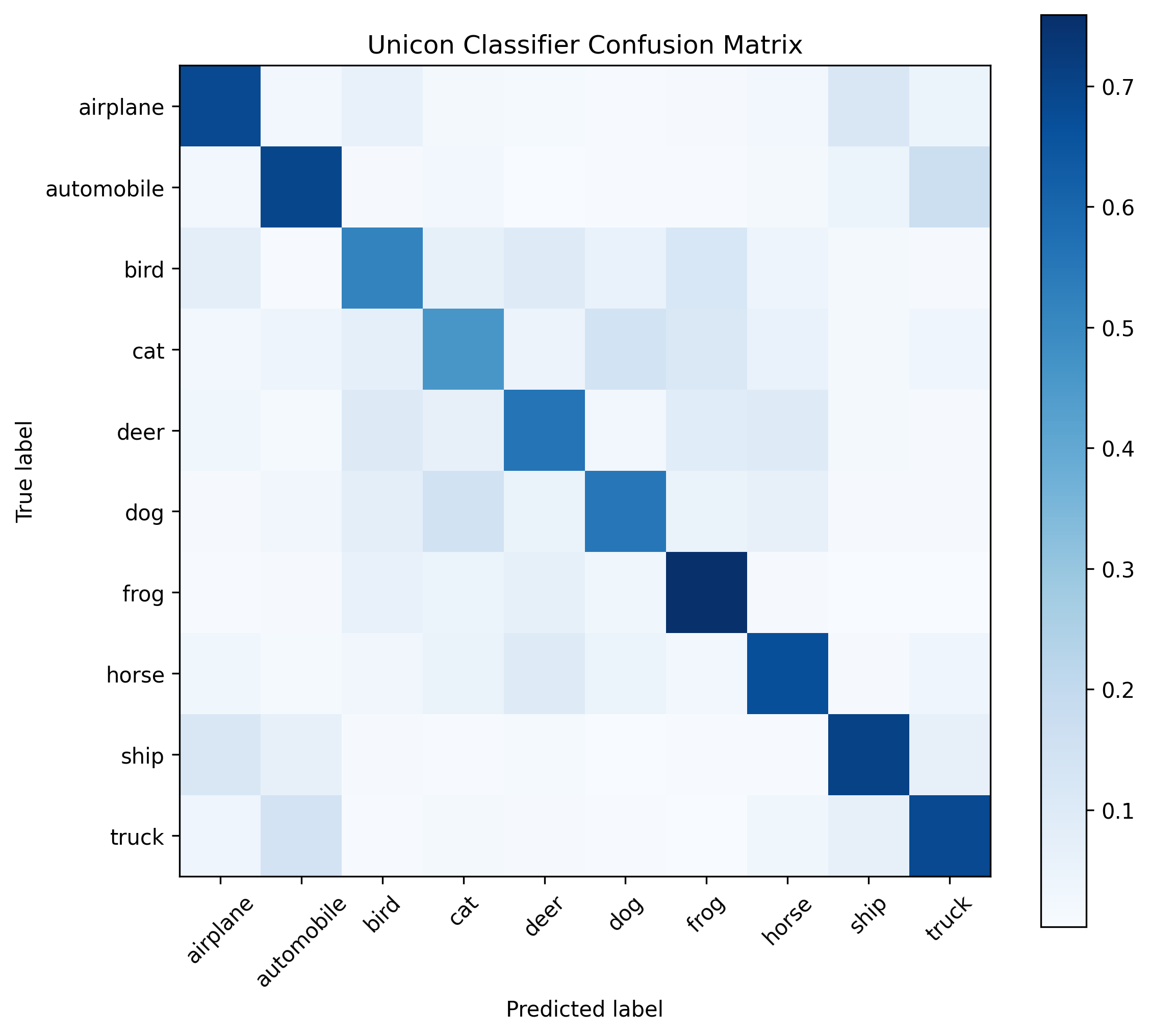}
        \caption{UniCon}
    \end{subfigure}
    \hfill
    \begin{subfigure}[b]{0.3\linewidth}
        \includegraphics[width=\linewidth]{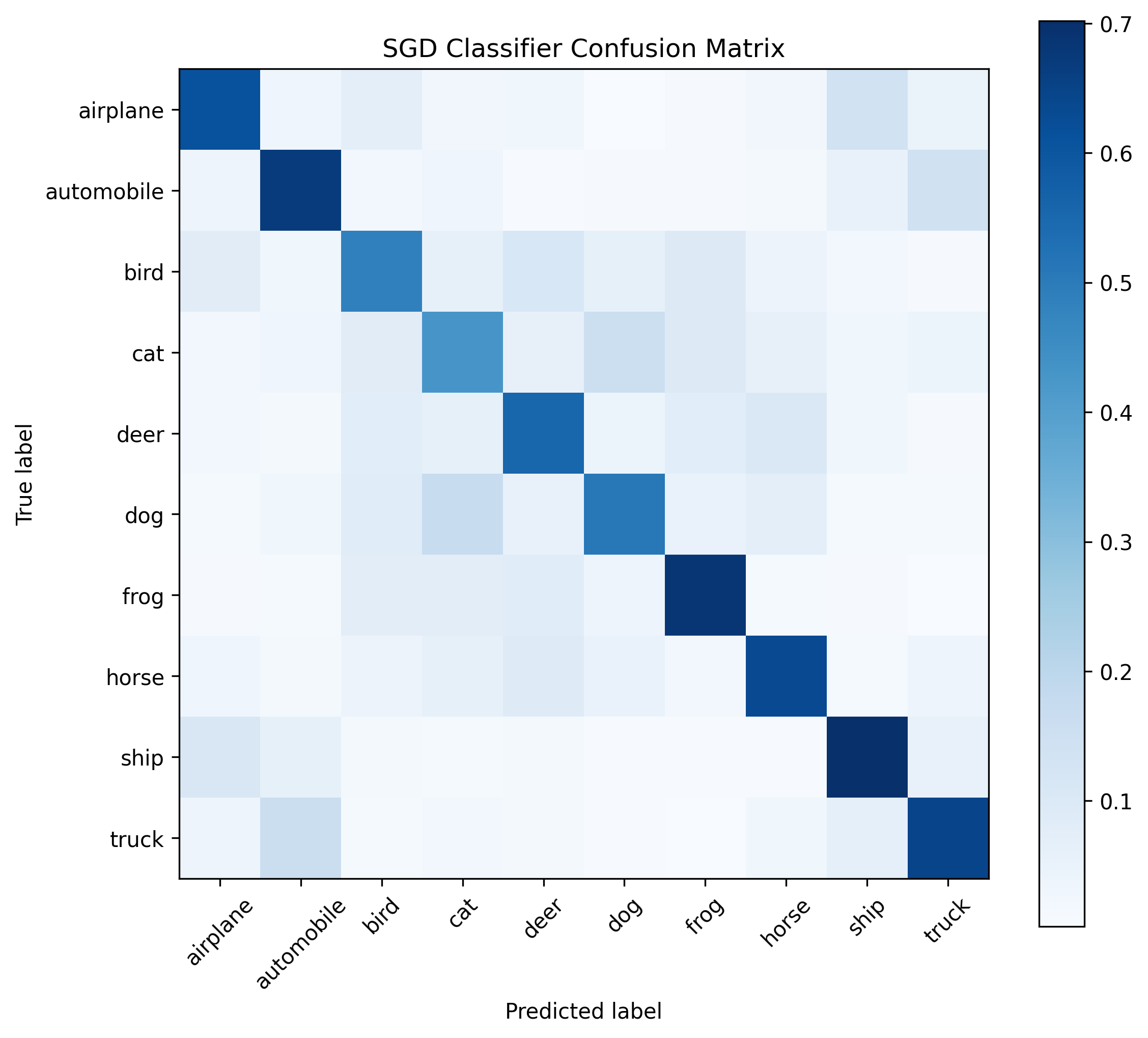}
        \caption{SGD-CLIP}
    \end{subfigure}
    \caption{
    \textbf{Visualizations of unimodal alignment on CIFAR-10.} (a) Self-supervised contrastive learning clusters semantically similar images and uniformly distributes clusters on the hypersphere. (b–c) Unimodal confusion matrices for UniCon and SGD-CLIP, showing predicted vs. true class accuracy. The near-identity structure and visual similarity of both matrices indicate that UniCon and SGD-CLIP achieve comparable discriminative performance in unimodal contrastive alignment.
    \color{black}}
    \label{fig:cifar10}
\end{figure}
\paragraph{Results.} To evaluate classification performance, we report the confusion matrix in Figure~\ref{fig:cifar10}(b-c), which summarizes the number of correct and incorrect predictions for each class. Each row of the matrix corresponds to the predicted class, while each column represents the ground truth. Specifically, the diagonal entries indicate the number of correctly classified samples, and the off-diagonal entries capture misclassifications between classes. This allows a detailed analysis of model behavior beyond overall accuracy.
Numerically, {UniCon} can achieve the average accuracy of 61.82\% with 23.38s while SGD can achieve the average accuracy of 62.21\% with {41.98}s. Empirically, {UniCon} can converge within 2 epochs while SGD requires various iterations to converge to a comparable optimal point. 
\subsection{Image–text retrieval and zero-shot transfer (multimodal)}
\paragraph{Setup.}
To evaluate UniCon in a multimodal setting, we benchmark on the standard image–text retrieval tasks from \textsc{Flickr30k} and \textsc{MSCOCO}.  
We consider three backbone choices for image $x$ and text $y$:  
(a) \textbf{ResNet-18} \citep{he2016deep} for images with \textbf{Sentence-BERT} (\texttt{all-mpnet-base-v2}) \citep{reimers2019sentence} for text;  
(b) \textbf{ResNet-50} + Sentence-BERT;  
(c) the pretrained \textbf{CLIP ViT-B/32} model as a frozen visual–textual feature extractor. UniCon is compared against an SGD-optimized CLIP baseline (SGD–CLIP) under matched training/evaluation settings; both are trained to convergence.

\paragraph{Results.} 
Table~\ref{tab:flickr30k} summarizes top-1/10 recall in both directions. Across all backbones, UniCon attains competitive or superior accuracy while reducing training time by \textbf{25–50$\times$}. With CLIP ViT-B/32 features, UniCon further improves accuracy despite requiring only a single spectral update. Notably, UniCon on Resnet50+SBERT backbone achieves comparable averaged top-10 retrieval accuracy with CLIP ViT-B/32 backbone aligned SGD-CLIP. These findings are consistent with our theory: the spectral step efficiently recovers the dominant cross-modal structure that iterative SGD approximates over many epochs.
\begin{table}[h]
    \centering
    \small
    \caption{\textbf{Image-text retrieval on \textsc{Flickr30k}}. We report Recall@1 and Recall@10 for both image$\rightarrow$text and text$\rightarrow$image directions.}
    \begin{tabular}{l l c *{6}{c}}
        \toprule
        \multirow{2}{*}{Backbone} &
        \multirow{2}{*}{Method} &
        \multirow{2}{*}{Train time} &
        \multicolumn{2}{c}{Image$\rightarrow$Text} &
        \multicolumn{2}{c}{Text$\rightarrow$Image} &
        \multicolumn{2}{c}{Average} \\
        \cmidrule(lr){4-5} \cmidrule(lr){6-7} \cmidrule(lr){8-9}
        & & & R@1 & R@10 & R@1 & R@10 & R@1 & R@10 \\
        \midrule

        \multirow{2}{*}{RN‑18 + SBERT}
            & SGD–CLIP & 45.6 s &
              \textbf{.043} & \textbf{.221} &
              .041 & .217 &
              .042 & .219 \\
            & \textbf{UniCon} & \textbf{1.7 s} &
              .020 & .145 &     
              \textbf{.087} & \textbf{.361} &     
              \textbf{.054} & \textbf{.253} \\    
        \hline
        \addlinespace

        \multirow{2}{*}{RN‑50 + SBERT}
            & SGD–CLIP & 45.0 s &
              .043 & .221 &
              .041 & .217 &
              .042 & .219 \\
            & \textbf{UniCon} & \textbf{0.81 s} &
              \textbf{.134} & \textbf{.464} &     
              \textbf{.188} & \textbf{.567} &     
              \textbf{.161} & \textbf{.515} \\    
        \hline
        \addlinespace

        \multirow{2}{*}{CLIP ViT‑B/32}
            & SGD–CLIP & 45.3 s &
              .231 & .595 &  
              .241 & .600 &  
              .236 & .597 \\
            & \textbf{UniCon} & \textbf{0.76 s} &
              \textbf{.284} & \textbf{.636} &
              \textbf{.421} & \textbf{.777} &
              \textbf{.353} & \textbf{.701} \\
        \bottomrule
    \end{tabular}
    
    \label{tab:flickr30k}
\end{table}

\begin{table}[h]
    \centering
    \small
    \caption{\textbf{Retrieval on \textsc{MSCOCO} and zero-shot transfer to \textsc{Flickr30k}}.
    All models are trained on \textsc{MSCOCO}. We report image to text (I$\!\rightarrow$T) and text to image (T$\!\rightarrow$I) on \textsc{MSCOCO} and zero-shot on \textsc{Flickr30k} (no fine-tuning).}
    \setlength{\tabcolsep}{6.5pt}
    \begin{tabular}{l l c c cc cc}
        \toprule
        \multirow{2}{*}{Backbone} & \multirow{2}{*}{Method} & \multirow{2}{*}{Train (s)} & \multirow{2}{*}{Dir.} &
        \multicolumn{2}{c}{\textsc{MSCOCO}} & \multicolumn{2}{c}{\textsc{Flickr30k} (zero-shot)} \\
        \cmidrule(lr){5-6} \cmidrule(lr){7-8}
        & & & & R@1 & R@10 & R@5 & R@10 \\
        \midrule

        \multirow{4}{*}{RN-50 + SBERT}
            & SGD--CLIP       & 5121.72 & I$\!\rightarrow$T & .053 & .253 &  —    &  —    \\
            &                  &         & T$\!\rightarrow$I & .060 & .286 &      &      \\
        \cmidrule(lr){2-8}
            & \textbf{UniCon}  & \textbf{11.11} & I$\!\rightarrow$T & \textbf{.105} & \textbf{.388} & .171 & .261 \\
            &                  &         & T$\!\rightarrow$I & \textbf{.129} & \textbf{.439} & .249 & .353 \\
        \midrule 

        \multirow{4}{*}{CLIP ViT-B/32}
            & SGD--CLIP       & 1066.60 & I$\!\rightarrow$T & .128 & .415 &  —    &  —    \\
            &                  &         & T$\!\rightarrow$I & .123 & .427 &      &      \\
        \cmidrule(lr){2-8}
            & \textbf{UniCon}  & \textbf{11.15} & I$\!\rightarrow$T & \textbf{.329} & \textbf{.685} & \textbf{.808} & \textbf{.879} \\
            &                  &         & T$\!\rightarrow$I & \textbf{.292} & \textbf{.644} & \textbf{.766} & \textbf{.848} \\
        \bottomrule
    \end{tabular}
    \label{tab:coco_flickr_compact}
\end{table}

Table~\ref{tab:coco_flickr_compact} augments our results with \textsc{MSCOCO} retrieval and \emph{zero-shot} transfer to \textsc{Flickr30k}. Our training follows the standard retrieval protocol on \textsc{MSCOCO} with each image paired with 5 captions, and report test retrieval accuracy on 5,000 held-out pairs. UniCon achieves higher accuracy than SGD–CLIP on \textsc{MSCOCO} while being \textbf{96–461$\times$} faster. Beyond scalability, the learned alignment transfers robustly: models trained on \textsc{MSCOCO} maintain strong performance on \textsc{Flickr30k} without any adaptation. Despite distribution shifts in both image and text domains, UniCon maintains strong retrieval accuracy. These results underscore UniCon’s scalability, generality, and cross-dataset transfer, reavealing its potential in real world tasks.

\vspace{-0.5em}

\section{Discussion}\label{sec:discussion}
\paragraph{Computation Efficiency.} UniCon achieves rapid stabilization of the alignment subspace through derived spectral updates, which bypass many small gradient steps, demonstrating computational efficiency. Details can be found in Appendix~\ref{supp:Scode}. Empirically, we observe an interesting phenomenon that $M$ (or $C(\gamma)$ in the linear case) converges in 2 or a few steps. We provide an intuitive explanation: Unlike gradient-based methods that take small local steps, each spectral update directly jumps to the global maximizer of the surrogate objective, making the update much more informative.

\paragraph{Data Efficiency.} Additionally, on MSCOCO, using only 200 images (0.24\% of the dataset), with each image paired with 5 captions, already yields meaningful retrieval alignment (66.45\% avg R@10), demonstrating both subspace convergence and data efficiency. As we discussed in Section~\ref{sec:method}, alignment is a r-rank discovery problem, which gives an intuition that we don't need massive datasets to find the principal axes. 

\paragraph{Static vs. Evolving Input Spaces.} The theoretical optimality results with r-rank aproximation is derived under the assumption that the input space of UniCon is static. It includes two cases: (a) Input space is data space (raw modalities), where UniCon itself performs end-to-end alignment. (b) Input space is embedding space from frozen encoders. In both cases, UniCon provides a globally optimal spectral solution to contrastive loss minimization from the perspective of r-rank approximation. When encoders are trainable (non-static input space), UniCon is applied during jointly optimizing encoders, the spectral update becomes a conditionally optimal subproblem, i.e., optimal for the current encoder outputs.

\section{Conclusion}\label{sec:conclusion}
We propose \textbf{UniCon}, a theoretically grounded and computationally efficient framework for contrastive alignment that unifies linear and nonlinear encoders through kernels. We show that minimizing contrastive loss is equivalent to maximizing a kernelized trace objective, which in turn reduces to a best rank-r spectral approximation in an RKHS. The closed form update is driven by explicitly constructing a contrastive similarity weight matrix $S(\gamma)$. In the linear reduction, UniCon recovers a projection onto the top-r singular directions of the weighted contrastive cross-covariance. 
This yields a clear spectral lens on contrastive learning, interpreting alignment as r-rank structure discovery in high-dimensional feature spaces.

We see three concrete future directions: (i) structure-exploiting kernels, for example random features, to reduce K’s rank and cost; (ii) hybrid spectral–SGD strategy or warm-start strategy when the input space is non-static (e.g. finetuning for domain adaptation);
(iii) mini-batch aggregation and streaming variants that balance analytical fidelity with scalability, including distribution-shift-aware weighting, online updates of the aggregated operator, and robust selection of batches to maintain stable global estimates.

With the theoretical grounding and competitive empirical results, UniCon advances understanding of contrastive learning for unimodal representation learning, multimodal alignment and beyond.

\section*{Acknowledgment}
This work was supported in part by the Advanced Research Projects Agency for Health (ARPA-H) and the IBM–Illinois Discovery Accelerator Institute.

\section*{Ethics Statement}
We affirm that this work complies with the ICLR Code of Ethics. Our study does not involve human subjects, sensitive personal data, or potentially harmful applications. All datasets used are publicly available (e.g., CIFAR-10) and contain no personally identifiable information. We acknowledge the importance of fairness and responsible AI development and have taken care to ensure that our methods do not propagate bias or cause unintended harm.

\section*{Reproducibility Statement}
We are committed to ensuring the reproducibility of our work. All implementation details, including model architectures, hyperparameters, and training procedures, are described in the Appendix C. We provide pseudocode for our algorithm in Appendix C and include complete proofs of theoretical results in Appendix B. All datasets used in this study are publicly available, and we will release source code and experiment scripts as supplementary materials later to facilitate replication of our results.

\bibliographystyle{iclr2026_conference}   
{
\small
\bibliography{iclr2026_conference}
}

\newpage
\appendix
\addtocontents{toc}{\protect\setcounter{tocdepth}{2}}

\paragraph{LLM Usage}
We used a large language model (LLM) solely as a writing aid to polish wording, and improve grammar/clarity. All technical content (definitions, theorems, proofs, experiments, figures, and tables) was authored and verified by the paper’s authors.

The complete code for all experiments will be made publicly available on GitHub \href{https://github.com/suihangke/UniCon/tree/main}{https://github.com/suihangke/UniCon.git}.

\tableofcontents 
\section{Contrastive Loss}\label{app:loss}
Under the assumption of one-to-one alignment, let 
$\;E_{\theta_1}\!: \mathcal X \!\to\! \mathbb{R}^{r}$  
and  
$E_{\theta_2}\!: \mathcal Y \!\to\! \mathbb{R}^{r}$  
denote two modality-specific encoders with
trainable parameters $\theta_1,\theta_2$.
Given $N$ paired samples $\{(\x_i,\y_i)\}_{i=1}^N$ we form the
\emph{similarity matrix} $S=[s_{ij}]$.
All contrastive objectives used in practice can be written in the
\emph{bidirectional general form}\,{\citep{tian2022understanding,nakada2023understanding}}
\begin{equation}
\label{eq:one2one-general-contrastive}
\mathcal L(\theta_1,\theta_2)
=\frac{1}{2N}\sum_{i=1}^N
\!\Bigl[
  \phi\!\Bigl(
      \textstyle\sum_{j=1}^N
          \epsilon_{ij}\,
          \psi\bigl(s_{ij}-\nu\,s_{ii}\bigr)
      \Bigr)
  +
  \phi\!\Bigl(
      \textstyle\sum_{j=1}^N
          \epsilon_{ij}\,
          \psi\bigl(s_{ji}-\nu\,s_{ii}\bigr)
      \Bigr)
\Bigr]
\;+\;R(\theta_1,\theta_2),
\end{equation}
with
\begin{itemize}
\item $\psi,\phi\!: \mathbb{R}\!\to\!\mathbb{R}$ increasing (shape of the loss),
\item $\nu$   : relative weight on the positive pair,
\item $\epsilon_{ij}\!\in\![0,1]$ : which pairs are used,
\item $R$              : optional regulariser (e.g.\ weight decay).
\end{itemize}

\textbf{Remark} \textit{Because the embeddings are length-normalised,
$s_{ij}\!\in\![-1,1]$ and all geometry lives on the unit hypersphere.}

By choosing specific forms for $\psi$ and $\phi$, we can recover familiar losses. For example, choosing $\phi(x)=\tau \log (x)$, $\psi(x)=\exp (x / \tau)$ and including positive pairs in the normalization ($\epsilon_{ij}=1$ for positive pair $(i,j)$), recovers the CLIP\citep{radford2021clip} loss, and the InfoNCE\citep{oord2018representation} loss is the same instantiation appears as a simplified variant focusing only on one direction. And choosing $\phi(x) = x$, $\psi(x) = [-x+\epsilon]_+$ gives triplet loss\citep{schroff2015facenet}. Equation~\ref{eq:gen-contrastive} thus unifies a wide spectrum of contrastive objectives via variant choices of $(\phi,\psi,\nu,\epsilon)$, providing a common lens for analysing and extending multimodal representation learning.
This formulation allows for distinct temperature scaling of positive and negative similarities.

\paragraph{Derivation for CLIP~\citep{radford2021clip} / InfoNCE~\citep{oord2018representation} Loss.}
Set\;\;
$\displaystyle\psi(x)=e^{x/\tau},\,
  \phi(x)=\tau\log x,\,
  \nu=1,\,
  \epsilon_{ij}=1-\delta_{ij}$
in~\eqref{eq:one2one-general-contrastive}, and omit $R$. We define 
\begin{align}
\delta_{ij} = 
\begin{cases}
1, & \text{if } i = j \\\\
0, & \text{if } i \ne j
\end{cases}
\end{align}
This gives the loss 
\begin{align}
		\mathcal{L} &\triangleq \frac{1}{2 n} \sum_i \phi\left(\sum_{j \in[n]} \epsilon_{i j} \psi\left(s_{i j}-\nu s_{i i}\right)\right)+\frac{1}{2 n} \sum_i \phi\left(\sum_{j \in[n]} \epsilon_{i j} \psi\left(s_{j i}-\nu s_{i i}\right)\right)+R\left(\theta_1, \theta_2\right)\\
			&=\frac{\tau}{2n} \sum_i \log \left(\sum_{j \in [n]} \exp \left(\frac{s_{i j}-s_{i i}}{\tau}\right)\right)+\frac{\tau}{2n} \sum_i \log \left(\sum_{j \in [n]} \exp \left(\frac{s_{j i}-s_{i i}}{\tau}\right)\right)\\
			&= \frac{\tau}{2n} \sum_i \log \left(\frac{\sum_{j \in [n]} \exp \left(\frac{s_{i j}}{\tau}\right)}{\exp \left(\frac{s_{i i}}{\tau}\right)}\right)+\frac{\tau}{2n} \sum_i \log \left(\frac{\sum_{j \in [n]} \exp \left(\frac{s_{j i}}{\tau}\right)}{\exp \left(\frac{s_{i i}}{\tau}\right)}\right)\\
			&= \frac{\tau}{2n} \sum_i \left[-\log \left(\frac{\exp \left(\frac{s_{i i}}{\tau}\right)}{\sum_{j \in [n]} \exp \left(\frac{s_{i j}}{\tau}\right)}\right)-\log \left(\frac{\exp \left(\frac{s_{i i}}{\tau}\right)}{\sum_{j \in [n]} \exp \left(\frac{s_{j i}}{\tau}\right)}\right)\right]\\
			&= \mathcal{L}_{CLIP}
\end{align}
For InfoNCE loss, we keep the first term (i.e. only one-directional loss), then
	$$
	\mathcal{L}_{InfoNCE} = \frac{\tau}{n}\sum_i \left[ -\log \frac{\exp s_{i i}}{\sum_j \exp s_{i j}}\right]
	$$

\paragraph{Derivation for triplet loss~\citep{schroff2015facenet}.}
With a margin $\epsilon>0$, choose $\psi(x)=\![\epsilon - x]_+,\,
\phi(x)=x,\,
\nu=1,\,
\epsilon_{ij}=1-\delta_{ij},$

\begin{align}
		\mathcal{L} &\triangleq \frac{1}{2 n} \sum_i \phi\left(\sum_{j \in[n]} \epsilon_{i j} \psi\left(s_{i j}-\nu s_{i i}\right)\right)+\frac{1}{2 n} \sum_i \phi\left(\sum_{j \in[n]} \epsilon_{i j} \psi\left(s_{j i}-\nu s_{i i}\right)\right)+R\left(\theta_1, \theta_2\right)\\
			&=\frac{1}{2n}\sum_{i=1}^{n}
            \Bigl[
            \sum_{\substack{j=1\\ j\neq i}}^{n}
            \bigl[\epsilon-\bigl(s_{ij}-s_{ii}\bigr)\bigr]_{+}
            +\sum{\substack{j=1\\ j\neq i}}^{n}
            \bigl[\epsilon-\bigl(s_{ji}-s_{ii}\bigr)\bigr]_{+}
            \Bigr]
\end{align}

We can also only keep one direction:
\begin{equation}
    \mathcal{L} =\frac{1}{n}\sum_{i=1}^{n}
            \Bigl[
            \sum_{\substack{j=1\\ j\neq i}}^{n}
            \bigl[\epsilon-\bigl(s_{ij}-s_{ii}\bigr)\bigr]_{+}
            \Bigr]
            =\frac{1}{n}\sum_{i=1}^{n}
            \Bigl[
            \sum_{\substack{j=1\\ j\neq i}}^{n}
            \max\{0,s_{ii} - s_{ij}+\epsilon\}
            \Bigr]
\end{equation}
where $s_{ij}$ captures distance between negative pairs, and $s_{ii}$ captures distance between positive pairs. 

Equation~\eqref{eq:one2one-general-contrastive} thus provides a general form that captures various contrastive loss, making it possible to analyze them collectively and design new variants with principled control over positive / negative balance, temperature, and weighting.

\textbf{Many-to-many aligment contrastive loss\citep{khosla2020supervised}} Note that the loss in Equation~\eqref{eq:one2one-general-contrastive} is defined with $(x_i,y_i)$ being the positive pairs for all $i$. However, in many cases, a single $x_i$ may have multiple positive pairs. \citet{khosla2020supervised} extended contrastive learning to the supervised setting with the Supervised Contrastive (SupCon) loss, which is not restricted to one-to-one pairs. This loss encourages embeddings from the same class to be pulled together while pushing apart embeddings from different classes. Formally, given a minibatch of normalized embeddings ${z_i}$ with labels ${y_i}$, the SupCon loss is defined as
\begin{equation}
\label{eq:SupCon}
\mathcal{L}_{out}^{sup}=\sum_{i\in I}L_{out,i}^{sup}=\sum_{i\in I}\frac{-1}{|P(i)|}\sum_{p\in P(i)}\log\frac{\exp(\boldsymbol{z}_i\cdot\boldsymbol z_p/\tau)}{\sum_{a\in A(i)}\exp(\boldsymbol z_i\cdot \boldsymbol z_a/\tau)}.
\end{equation}
where $P(i)\equiv \{p\in A(i):\tilde y_p=\tilde y_i\}$ is the set of indices of all positives in the multiviewed batch distinct from $i$ and $|P(i)|$ is its cardinality.

Therefore, we extend the general form of the contrastive loss in~\eqref{eq:one2one-general-contrastive} to handle many-to-many alignment scenarios. We define the unified form as
\begin{align}
\label{eq:many2many-contrastive}
L(\theta_1,\theta_2)=&\frac{1}{2n}\sum_{i=1}^n\frac{1}{|\mathcal P_x(i)|}\sum_{k\in{\mathcal P_x(i)}}\phi(\sum_{j\notin\mathcal P_x(i)}\epsilon_{ij}\psi(s_{ij}-\nu s_{ik})+\epsilon_{ik}\psi(s_{ik}-\nu s_{ik}))\notag\\
&+\frac{1}{2n}\sum_{i=1}^n\frac{1}{|\mathcal P_y(i)|}\sum_{k\in{\mathcal P_y(i)}}\phi(\sum_{j\notin\mathcal P_y(i)}\epsilon_{ij}\psi(s_{ji}-\nu s_{ki})+\epsilon_{ik}\psi(s_{ki}-\nu s_{ki}))+R(\theta_1, \theta_2)
\end{align} where $\mathcal{P}_x(i)$ and $\mathcal{P}_y(j)$ denote the index sets of all samples in $\{y_k\},\{x_k\}$ paired with $x_i$ and $y_j$, respectively. The term $|\mathcal{P}_x(i)|$ denotes the cardinality of $\mathcal{P}_x(i)$, and $R$ is an optional regularization term. We incorporate this loss function in the following sections.

\section{Theoretical Proofs}\label{app:proof}

\begin{tikzpicture}[
    >=Stealth,
    node distance = 1.5cm and 1.9cm,
    base/.style = {%
        rounded corners=4pt,
        draw=none,
        minimum width=2.2cm,
        inner sep=6pt,
        font=\small\bfseries,
        align=center,              
        blur shadow={shadow blur steps=4}
    },
    linbox/.style   = {base, fill=blue!15},
    nonlinbox/.style= {base, fill=teal!15},
    hubox/.style    = {base, fill=orange!15},
    betabox/.style  = {%
        base,
        fill=yellow!92!orange,
        draw=orange!80!black,
        line width=1pt,
        blur shadow={shadow blur steps=8,
                     shadow opacity=0.35}
    },
    branch/.style = {thick, -{Stealth[length=5pt]}},
    background box/.style = {fill=gray!8, rounded corners=8pt},
]

\node[betabox] (c_beta)   {Definition 3\\\textit{Contrastive Similarity Weight Matrix}\\\large$S(\gamma)$};
\node[hubox, below=of c_beta] (lemma4)   {Lemma 4\\\textit{Gradient bridge}};

\node[linbox, left=of lemma4]  (prop7) {Prop.\,7\\\textit{(linear specialization)}};
\node[linbox, below=of prop7]  (thm8)  {Theorem 8\\\textit{Spectral Characterization}\\~\cite{nakada2023understanding}};

\node[nonlinbox, right=of lemma4] (thm9)  {Theorem 9\\\textit{Kernelized Spectral}\\\textit{characterization}\\\textit{(unified form)}};
\node[nonlinbox, below=of thm9] (cor10)  {Cor.\,10\\\textit{Kernel inference}\\\textit{(out-of-sample)}};

\draw[branch] (c_beta) -- (lemma4);

\draw[branch, blue!70!black]  (lemma4) -- (prop7)
                              (prop7)  -- (thm8);

\draw[branch, teal!60!black]  (lemma4) -- (thm9)
                              (thm9) -- (cor10)
                              ;

\node[font=\bfseries]                 at ($(c_beta)+(0,1.05)$) {Theory Road-map};
\node[font=\scriptsize\bfseries, blue!70!black]  at ($(prop7)+(0,0.9)$) {Linear case};
\node[font=\scriptsize\bfseries, teal!60!black]  at ($(thm9)+(0,0.9)$) {Non-linear case};
\end{tikzpicture}

\textbf{Definition 3 } (Contrastive similarity weight matrix) \textit{Consider the general contrastive loss $\mathcal{L}(\theta_1, \theta_2)$ in Equation~\eqref{eq:many2many-contrastive} with choice of $(\phi,\psi, \epsilon,\nu)$,  and a batch of samples $\{(\x_i, \y_i)\}_{i=1}^n$. Denote $\{\e_i\}_{i=1}^{n}$ as the elementary basis vectors of $\mathbb{R}^{n}$. 
    The contrastive similarity weight matrix is then defined as:
    \begin{equation}\label{eq:S(gamma)}
         S(\gamma)=-\frac{1}{n}\sum_{i,j}\frac{1}{2} \left( \frac{\gamma_{ij}}{|\mathcal P_x(i)|} + \frac{\bar{\gamma}_{ji}}{|\mathcal P_y(j)|} \right)\e_i\e_j^{\top},
    \end{equation}
        with \emph{weight coefficients}
    \begin{equation}\label{eq:gamma}
        \gamma_{ij} =
\begin{cases}
\phi'_{ij}\cdot\left(\epsilon_{ij}(1-\nu)\psi'((1-\nu)s_{ij})-\nu\sum_{m\notin P_x(i)}\epsilon_{im}\psi'(s_{im}-\nu s_{ij})\right), & \text{if } j \in P_x(i) \\\\
\sum_{k\in P_x(i)}\phi'_{ik}\cdot(\epsilon_{ij}\psi'(s_{ij}-\nu s_{ik})), & \text{if } j \notin P_x(i)
\end{cases}
\end{equation}
\begin{equation}\label{eq:gamma-bar}
\bar\gamma_{ij} =
\begin{cases}
\bar\phi'_{ij}\cdot\left(\epsilon_{ji}(1-\nu)\psi'((1-\nu)s_{ji})-\nu\sum_{m\notin P_y(i)}\epsilon_{mi}\psi'(s_{mi}-\nu s_{ji})\right), & \text{if } j \in P_y(i) \\\\
\sum_{k\in P_y(i)}\bar\phi'_{ik}\cdot(\epsilon_{ji}\psi'(s_{ji}-\nu s_{ki})), & \text{if } j \notin P_y(i)
\end{cases}
\end{equation}
where we define
\begin{align}
    \phi'_{ij}=&\phi'\left(\epsilon_{ij}\psi((1-\nu)s_{ij})+\sum_{m\notin P_x(i)}\epsilon_{im}\psi(s_{im}-\nu s_{ij})\right),\\
    \bar\phi'_{ij}=&\phi'\left(\epsilon_{ji}\psi(1-\nu)s_{ji}+\sum_{m\notin P_y(i)}\epsilon_{mi}\psi(s_{mi}-\nu s_{ji})\right)
\end{align}}

\textbf{Lemma 4 }(Gradient Equivalence)
\textit{Consider minimizing the general contrastive loss (see Equation~\eqref{eq:gen-contrastive}), the gradient of the contrastive loss with respect to encoder parameters satisfies:}
    \begin{equation}
        \frac{\partial \mathcal{L}}{\partial \theta_k}=-\left.\frac{\partial \operatorname{tr}\left(\F_{\theta_1}(\X) S(\gamma) \F_{\theta_2}^{\top}(\Y)\right)}{\partial \theta_k}\right|_{\gamma=\gamma\left(\theta_1, \theta_2\right)}+\frac{\partial R\left(\theta_1, \theta_2\right)}{\partial \theta_k}, \quad k \in\{1,2\}
    \end{equation}
where 
    \begin{equation*}
        \X =[\x_i,\cdots,\x_n]\in\mathbb{R}^{d_1\times n}, \Y =[\y_i,\cdots,\y_n]\in\mathbb{R}^{d_2\times n}
    \end{equation*}
    \begin{equation*}
        \F_{\theta_1}(\X)=\left[\f_{\theta_1}(\x_1)\ \f_{\theta_1}(\x_2)\ \cdots \ \f_{\theta_1}(\x_n)\right]\in\mathbb{R}^{r\times n}
    \end{equation*}
    \begin{equation*}
        \F_{\theta_2}(\Y)=\left[\f_{\theta_2}(\y_1)\ \f_{\theta_2}(\y_2)\ \cdots \ \f_{\theta_2}(\y_n)\right]\in\mathbb{R}^{r\times n}.
    \end{equation*}

\begin{proof}
\small
    Let $\theta_{k, \ell}$ be the $\ell$-th component of $\theta_k$. We have
	
	\begin{align}
		\partial_{\theta_{k, \ell}} \mathcal{L}= & \partial_{\theta_{k, \ell}} \Bigl[\frac{1}{2 n} \sum_i\frac{1}{|P_x(i)|}\sum_{k\in P_x(i)} \phi\left(\epsilon_{ik}\psi((1-\nu)s_{ik})+\sum_{m \notin P_x(i)} \epsilon_{i m} \psi\left(s_{i m}-\nu s_{i k}\right)\right)\notag\\
        &+\frac{1}{2 n} \sum_i \frac{1}{|P_y(i)|}\sum_{k\in P_y(i)}\phi\left(\epsilon_{ki}\psi((1-\nu)s_{ki})+\sum_{m \notin P_y(i)} \epsilon_{mi} \psi\left(s_{m i}-\nu s_{k i}\right)\right)+R\left(\theta_1, \theta_2\right)\Bigr]\\
		 & =\frac{1}{2 n} \sum_{i=1}^n\frac{1}{|P_x(i)|}\sum_{k\in P_x(i)} \phi^{\prime}\left(\epsilon_{ik}\psi((1-\nu)s_{ik})+\sum_{m \notin P_x(i)} \epsilon_{i m} \psi\left(s_{i m}-\nu s_{i k}\right)\right)\notag\\
         &\cdot\Bigl[\epsilon_{ik}\psi'((1-\nu)s_{ik})(1-\nu)\partial_{\theta_{k,\ell}}s_{ik}+ \sum_{m\notin P_x(i)} \epsilon_{i m} \psi^{\prime}\left(s_{i m}-\nu s_{i k}\right)\left(\partial_{\theta_{k, \ell}} s_{i m}-\nu \partial_{\theta_{k, \ell}} s_{i k}\right)\Bigr]\notag\\
		& +\frac{1}{2 n} \sum_{i=1}^n \frac{1}{|P_y(i)|}\sum_{k\in P_y(i)}\phi^{\prime}\left(\epsilon_{ki}\psi((1-\nu)s_{ki})+\sum_{m \notin P_y(i)} \epsilon_{mi} \psi\left(s_{m i}-\nu s_{k i}\right)\right)\notag\\
        & \cdot\Bigl[\epsilon_{ki}\psi'((1-\nu)s_{ki})(1-\nu)\partial_{\theta_{k,\ell}}s_{ki}+\sum_{m\notin P_y(i)} \epsilon_{mi} \psi^{\prime}\left(s_{m i}-\nu s_{k i}\right)\left(\partial_{\theta_{k, \ell}} s_{m i}-\nu \partial_{\theta_{k, \ell}} s_{k i}\right)\Bigr]+\partial_{\theta_{k, \ell}} R\\
        & = \frac{1}{2n}\sum_{i=1}^n\frac{1}{|P_x(i)|}\left(\sum_{k\in P_x(i)}\gamma_{ik}\partial_{\theta_{k,\ell}}s_{ik}+\sum_{m\notin P_x(i)}\gamma_{im}\partial_{\theta_{k,\ell}}s_{im}\right)\notag\\
        & + \frac{1}{2n}\sum_{i=1}^n\frac{1}{|P_y(i)|}\left(\sum_{k\in P_y(i)}\bar\gamma_{ik}\partial_{\theta_{k,\ell}}s_{ki}+\sum_{m\notin P_y(i)}\bar\gamma_{im}\partial_{\theta_{k,\ell}}s_{mi}\right)+\partial_{\theta_{k, \ell}} R\\
        &=\frac{1}{2n}\sum_{i=1}^n\sum_{j=1}^n\left(\frac{\gamma_{ij}}{|P_x(i)|}+\frac{\bar\gamma_{ji}}{|P_y(j)|}\right)\partial_{\theta_{k,\ell}}s_{ij}
	\end{align}
Define 
    \begin{equation}
        \gamma_{ij} =
\begin{cases}
\phi'_{ij}\cdot\left(\epsilon_{ij}(1-\nu)\psi'((1-\nu)s_{ij})-\nu\sum_{m\notin P_x(i)}\epsilon_{im}\psi'(s_{im}-\nu s_{ij})\right), & \text{if } j \in P_x(i) \\\\
\sum_{k\in P_x(i)}\phi'_{ik}\cdot(\epsilon_{ij}\psi'(s_{ij}-\nu s_{ik})), & \text{if } j \notin P_x(i)
\end{cases}
\end{equation}
\begin{equation}
\bar\gamma_{ij} =
\begin{cases}
\bar\phi'_{ij}\cdot\left(\epsilon_{ji}(1-\nu)\psi'((1-\nu)s_{ji})-\nu\sum_{m\notin P_y(i)}\epsilon_{mi}\psi'(s_{mi}-\nu s_{ji})\right), & \text{if } j \in P_y(i) \\\\
\sum_{k\in P_y(i)}\bar\phi'_{ik}\cdot(\epsilon_{ji}\psi'(s_{ji}-\nu s_{ki})), & \text{if } j \notin P_y(i)
\end{cases}
\end{equation}
where we define
\begin{align}
    \phi'_{ij}=&\phi'\left(\epsilon_{ij}\psi((1-\nu)s_{ij})+\sum_{m\notin P_x(i)}\epsilon_{im}\psi(s_{im}-\nu s_{ij})\right),\\
    \bar\phi'_{ij}=&\phi'\left(\epsilon_{ji}\psi(1-\nu)s_{ji}+\sum_{m\notin P_y(i)}\epsilon_{mi}\psi(s_{mi}-\nu s_{ji})\right)
\end{align}

To simplify the notation, we assume that the encoded representations $\mathbf{f}_{\theta_1}(x_i)$ and $\mathbf{f}_{\theta_2}(y_j)$ are already $\ell_2$-normalized. Then the gradient follows that
\begin{align}
    \partial_{\theta_{k,\ell}}\mathcal{L}=&\frac{1}{2n}\sum_{i=1}^n\sum_{j=1}^n\left(\frac{\gamma_{ij}}{|P_x(i)|}+\frac{\bar\gamma_{ji}}{|P_y(j)|}\right)\partial_{\theta_k,\ell}(\f_{\theta_1}^{\top}(x_i)\f_{\theta_2}(y_j))+\partial_{\theta_k,\ell}R\\
    =& \partial_{\theta_{k, \ell}}\left(\sum_{i,j} -S(\gamma)_{ij} (\mathcal{F}_{\theta_1}^{\top}(\X)\mathcal{F}_{\theta_2}(\Y))_{ij}\right)+\partial_{\theta_{k, \ell}} R
\end{align}
where $(\mathcal{F}_{\theta_1}^{\top}(\mathcal{X}) \mathcal{F}_{\theta_2}(\mathcal{Y}))_{ij} = \mathbf{f}_{\theta_1}^{\top}(x_i) \mathbf{f}_{\theta_2}(y_j)$ denotes the similarity between sample $x_i$ and $y_j$, and $S(\gamma)_{ij}$ denotes the entry in the $i$-th row and $j$-th column of the matrix $S(\gamma)$, which is defined in Equation~\eqref{eq:S(gamma)}.

    Note that
    
    \begin{align}
        \operatorname{tr}(A^{\top}B)=\sum_{i=j}\sum_k(A^{\top})_{ik}\cdot B_{kj}=\sum_{ik}A_{ki}B_{ki}
    \end{align}
    
    Therefore we have
    
    \begin{align}
        -\frac{\partial \mathcal{L}}{\partial\theta_k} &=\frac{\partial\operatorname{tr}(S(\gamma)^{\top}\mathcal{F}_{\theta_1}^{\top}(\X)\mathcal{F}_{\theta_2}(\Y))}{\partial\theta_k}-\frac{\partial R(\theta_1,\theta_2)}{\partial\theta_k}\\
        &=\frac{\partial\operatorname{tr}(\mathcal{F}_{\theta_1}(\X)S(\gamma)\mathcal{F}^{\top}_{\theta_2}(\Y))}{\partial\theta_k}-\frac{\partial R(\theta_1,\theta_2)}{\partial\theta_k}
    \end{align}
    
    where $S(\gamma)$ is defined as Equation~\eqref{eq:S(gamma)}.

\end{proof}

\normalsize
\subsection{Linear Representation Setting}

In this setting, the hyper-spherical similarity between a pair $(\x_i, \y_j)$ is computed as the inner product of their $\ell_2$-normalized embeddings:
    \begin{align}
        s_{ij} = \langle F_1 \x_i, F_2 \y_j \rangle_{\mathbb{S}^{r-1} \subset \mathbb{R}^r} 
        = \left\langle\frac{F_1 \x_i}{\left\|F_1 \x_i\right\|_2}, \frac{F_2 \y_j}{\left\|F_2 \y_j\right\|_2}\right\rangle_{\mathbb{R}^{r}}=\frac{\x_i^{\top} F_1^{\top} F_2 \y_j}{\left\|F_1 \x_i\right\|_2\left\|F_2 \y_j\right\|_2}.
    \end{align}
    
\textbf{Proposition 7 } \textit{Under the linear setting, the Lemma 4 is specialized as
    \begin{align}
        \frac{\partial \mathcal{L}}{\partial F_k}=&-\left.\frac{\partial \operatorname{tr}\left(F_1\X S(\gamma) \Y^{\top}F_2^{\top}\right)}{\partial F_k}\right|_{\beta=\beta\left(F_1, F_2\right)}+\frac{\partial R\left(F_1, F_2\right)}{\partial F_k}, \quad k \in\{1,2\}\\
        =&-\left.\frac{\partial \operatorname{tr}\left(F_1C(\gamma)F_2^{\top}\right)}{\partial F_k}\right|_{\beta=\beta\left(F_1, F_2\right)}+\frac{\partial R\left(F_1, F_2\right)}{\partial F_k}, \quad k \in\{1,2\}
    \end{align}}

where $C(\gamma)=\X S(\gamma)\Y^{\top}$. 

To solve the optimization problem induced by our reformulated objective, we characterize its maximizer in the linear setting. We arrive at the following theorem, which establishes that the convergence of the contrastive loss can be replaced by a closed-form update.

\textbf{Theorem 8 }(Spectral Characterization \citep{nakada2023understanding})\textit{ Consider minimizing the contrastive loss function $\mathcal{L}\left(F_1, F_2\right)$, with $R(F_1,F_2) = \frac{\rho}{2}||F_1^TF_2||_{F}^2$. Then,
	\begin{align}
            & \underset{F_1 \in \mathbb{R}^{r \times d_1}, F_2 \in \mathbb{R}^{r \times d_2}}{\arg \min } \mathcal{L}\left(F_1, F_2\right) \\
            &=\underset{F_1 \in \mathbb{R}^{r \times d_1}, F_2 \in \mathbb{R}^{r \times d_2}}{\arg \max } \operatorname{tr}\left(F_1 C(\gamma) F_2^{\top}\right)-(\rho / 2)\left\|F_1^{\top} F_2\right\|_{F}^2\\
		& =\left\{\left(F_1, F_2\right) \in \mathbb{R}^{r \times d_1} \times \mathbb{R}^{r \times d_2}: F_1^{\top} F_2=\frac{1}{\rho} \sum_{i=1}^r \sigma_i u_{i} v_{i}^{\top}\right\} 
	\end{align}
    where $\{\sigma_i, u_i, v_i\}_{i=1}^{r}$ are the top-$r$ singular values and vectors of $C(\gamma)$ according to the Eckart–Young–Mirsky theorem.}

\begin{proof}
Observe that
\begin{align}
&\operatorname{tr}\left(F_1 C(\gamma) F_2^{\top}\right)-(\rho / 2)\left\|F_1^{\top}F_2\right\|_{F}^2\\
&= \operatorname{tr}\left(F_1 C(\gamma) F_2^{\top}\right) - \frac{\rho}{2} \operatorname{tr} \left(F_2^{\top} F_1F_1^{\top} F_2\right)\\
&=\operatorname{tr}\left(F_1 C(\gamma) F_2^{\top}\right) - {\frac{\rho}{2} \operatorname{tr} \left(F_2^{\top} F_1F_1^{\top} F_2\right)} - \frac{1}{2\rho}\operatorname{tr}\left(C(\gamma)^{\top}C(\gamma)\right) 
+\frac{1}{2\rho}\operatorname{tr}\left(C(\gamma)^{\top}C(\gamma)\right)\\
& = \frac{1}{2\rho}\operatorname{tr}\left(C(\gamma)^TC(\gamma)\right)  - \frac{\rho}{2}\operatorname{tr}\left[\left(F_1^{\top} F_2-\frac{1}{\rho}C(\gamma)\right)^\top \left(F_1^{\top} F_2-\frac{1}{\rho}C(\gamma)\right)\right]\\
&=  \frac{1}{2\rho}\left\|C(\gamma)\right\|_{F}^2 - \frac{\rho}{2}\left\|F_1^{\top} F_2-\frac{1}{\rho}(C(\gamma))\right\|_{F}^2
\end{align}
The first term is constant for fixed $C(\gamma)$, and the second term is minimized at $F_1^{\top}F_2=\frac{1}{\rho}C(\gamma)$. Since $F_1\in\mathbb{R}^{r\times d_1},F_2\in\mathbb{R}^{r\times d_2}$, $F_1^{\top}F_2$ has rank at most $r$. Thus, the minimization can be achieved at  $F_1^{\top}F_2=\sum_{i=1}^r\sigma_iu_iv_i^{\top}$ by Eckart–Young–Mirsky theorem for low rank matrix approximation. Here $\{\sigma_i,u_i,v_i\}$ are the top-$r$ singular values and vectors of $S$.
\end{proof}

In summary, in linear case, the global minimum is attained by projecting the contrastive covariance $C(\gamma)$ onto its top-r singular components. Thus, gradient descent on any loss in the contrastive family $(\phi,\psi)$ merely tracks the dominant singular subspace of $C(\gamma)$. UniCon performs this update in closed form, replacing thousands of SGD steps with one spectral factorization.

\paragraph{Relationship with \citep{nakada2023understanding}}\label{app:beta}
Furthermore, the formulation of \citep{nakada2023understanding} can be seen as a special case of our contrastive similarity weight matrix $S(\gamma)$, which arises under the specific assumptions of one-to-one alignment in a linear representation setting and with further restrictions on the functions $\psi$ and $\phi$.
Similar to the paper \citep{nakada2023understanding}, we define  
	\begin{align}
	\alpha_{i j} \triangleq \epsilon_{i j} \phi^{\prime}\left(\sum_{m \in[n]} \epsilon_{i m} \psi\left(s_{i m}-\nu s_{i i}\right)\right) \psi^{\prime}\left(s_{i j}-\nu s_{i i}\right),\\
	\bar{\alpha}_{i j} \triangleq \epsilon_{i j} \phi^{\prime}\left(\sum_{m \in[n]} \epsilon_{i m} \psi\left(s_{m i}-\nu s_{i i}\right)\right) \psi^{\prime}\left(s_{ji }-\nu s_{i i}\right)
	\end{align}
Then we can derive that when $|P_x(i)|=|P_y(j)|=1$, for positive pairs $(x_i,y_j)$ with $i=j$,
\begin{align}\label{eq:reduce2linear}(\gamma_{ii}+\bar\gamma_{ii})=&\phi'\left(\epsilon_{ii}\psi((1-\nu)s_{ii})+\sum_{m\neq i}\epsilon_{im}\psi(s_{im}-\nu s_{ii})\right)\notag\\
&\times\left(\epsilon_{ii}(1-\nu)\psi'((1-\nu)s_{ii})-\nu\sum_{m\neq i}\epsilon_{im}\psi'(s_{im}-\nu s_{ii})\right)\notag\\
    &+\phi'\left(\epsilon_{ii}\psi((1-\nu)s_{ii})+\sum_{m\neq i}\epsilon_{mi}\psi(s_{mi}-\nu s_{ii})\right)\notag\\
    &\times\left(\epsilon_{ii}(1-\nu)\psi'((1-\nu)s_{ii})-\nu\sum_{m\neq i}\epsilon_{mi}\psi'(s_{mi}-\nu s_{ii})\right)\\
    =&\phi'\left(\sum_{m=1}^n\epsilon_{im}\psi(s_{im}-\nu s_{ii})\right)\left(\epsilon_{ii}\psi'(s_{ii}-\nu s_{ii})-\nu\sum_{m=1}^n\epsilon_{im}\psi'(s_{im}-\nu s_{ii})\right)\notag\\
    &+\phi'\left(\sum_{m=1}^n\epsilon_{mi}\psi(s_{mi}-\nu s_{ii})\right)\left(\epsilon_{ii}\psi'(s_{ii}-\nu s_{ii})-\nu\sum_{m=1}^n\epsilon_{mi}\psi'(s_{mi}-\nu s_{ii})\right)\\
    =&\alpha_{ii}+\bar\alpha_{ii}-\nu\sum_{m=1}^n(\alpha_{im}+\bar\alpha_{im})
\end{align}
For negative pairs $(x_i,y_j)$ with $i\neq j$,
\begin{align}
    \gamma_{ij}+\bar\gamma_{ji}=&\phi'\left(\epsilon_{ii}\psi(1-\nu)s_{ii}+\sum_{m\neq i}\epsilon_{im}\psi(s_{im}-\nu s_{ii})\right)(\epsilon_{ij}\psi'(s_{ij}-\nu s_{ii}))\notag\\
    &+\phi'\left(\epsilon_{jj}\psi(1-\nu)s_{jj}+\sum_{m\neq j}\epsilon_{mj}\psi(s_{mj}-\nu s_{jj})\right)(\epsilon_{ij}\psi'(s_{ij}-\nu s_{jj}))\\
    =&\epsilon_{ij}\phi'\left(\sum_{m=1}^n\epsilon_{im}\psi(s_{im}-\nu s_{ii})\right)\psi'(s_{ij}-\nu s_{ii})\notag\\
    &+\epsilon_{ij}\phi'\left(\sum_{m=1}^n\epsilon_{mj}\psi(s_{mj}-\nu s_{jj})\right)\psi'(s_{ij}-\nu s_{jj})\\
    =&\alpha_{ij}+\bar\alpha_{ji}
\end{align}

Then we can define 
\begin{align}
    \beta_{ij}\;=\;\frac{\alpha_{ij}+\bar\alpha_{ji}}{2},\qquad
\beta_{i}\;=\;
      \nu\sum_{j=1}^{n}\frac{\alpha_{ij}+\bar\alpha_{ij}}{2}
      \;-\;
      \frac{\alpha_{ii}+\bar\alpha_{ii}}{2}.
\end{align}

Thus we have 
\begin{align}
    C(\gamma)=XS(\gamma)Y^{\top}=-\frac{1}{2n}\sum_{i=1}^n\sum_{j=1}^n(\gamma_{ij}+\bar\gamma_{ji})x_iy_j^{\top}=\frac{1}{n}\sum_{i=1}^n\beta_ix_iy_i^{\top}-\frac{1}{n}\sum_{i\neq j}\beta_{ij}x_iy_j^{\top}
\end{align}

In \citet{nakada2023understanding}, they define the contrastive cross-covariance $S(\beta)$ as:
\begin{equation}
    S(\beta) = \frac{1}{C_n} \sum_{i=1}^n \beta_i x_i y_i^\top 
    - \frac{1}{C_n} \sum_{i \neq j} \beta_{ij} x_i y_j^\top,
\end{equation}
\begin{equation}
    \beta_{ij} = \frac{\alpha_{ij} + \bar{\alpha}_{ji}}{2}, 
    \qquad
    \beta_i = \nu \sum_{j \in [n]} \frac{\alpha_{ij} + \bar{\alpha}_{ij}}{2} - 1.
\end{equation}

Therefore, with $C_n=n$, our $C(\gamma)$ is equivalent to $S(\beta)$ in \citet{nakada2023understanding}, where their $\beta_i$ corresponds to the special case of our definition with identity functions for $\phi$ and $\psi$.

{\paragraph{Understanding $
C(\gamma)$:} Consider the trace objective function}
\begin{equation}
    \operatorname{tr}\!\bigl(F_1\,C(\gamma)\,F_2^{\top}\bigr)
    \;=\;
    \frac1n\sum_{i,j=1}^{n}-\frac{\gamma_{ij}+\bar\gamma_{ji}}{2}\,
    \Bigl\langle F_1(\mathbf x_i),\,F_2(\mathbf y_j)\Bigr\rangle .
\end{equation}

Every similarity inside a batch is multiplied by a scalar $-\frac{\gamma_{ij}+\bar\gamma_{ji}}{2}$:
\begin{itemize}
  \item $-\frac{\gamma_{ii}+\bar\gamma_{ii}}{2}$ on the diagonal strengthens the \emph{attractive}
        force for the positive pair \((\mathbf x_i,\mathbf y_i)\);
  \item \(-\frac{\gamma_{ij}+\bar\gamma_{ji}}{2}\;(i\neq j)\) on the off-diagonals
        weights the \emph{repulsive} force for negative pairs.
\end{itemize}

where:
\begin{equation}
-\frac{\gamma_{ij}+\bar\gamma_{ji}}{2}\;=\;-\frac{\alpha_{ij}+\bar\alpha_{ji}}{2},\qquad
-\frac{\gamma_{ii}+\bar\gamma_{ii}}{2}\;=\;
      \nu\sum_{j=1}^{n}\frac{\alpha_{ij}+\bar\alpha_{ij}}{2}
      \;-\;
      \frac{\alpha_{ii}+\bar\alpha_{ii}}{2}.
\end{equation}

\begin{itemize}
  \item $\alpha_{ij}$ and $\bar\alpha_{ij}$ encode the bidirectional
        importance of the pair \((i,j)\).
  \item \(\nu\) adjusts the influence of positive pairs
        relative to negatives.
\end{itemize}

The plus/minus pattern makes the “pull” vs.\ “push” behaviour of contrastive learning transparent.

\subsection{Kernelized contrastive alignment}
\paragraph{RKHS parameterization.}
Let $(\mathcal H_\mathcal{X},k_\mathcal{X})$ and $(\mathcal H_\mathcal{Y},k_\mathcal{Y})$ be RKHSs with kernels
$k_X,k_Y$ and canonical feature maps
\begin{equation}
    \phi_X:X\to\mathcal H_X,\quad \phi_X(\x):=k_X(\cdot,\x),\qquad
\phi_Y:Y\to\mathcal H_Y,\quad \phi_Y(\y):=k_Y(\cdot,\y).
\end{equation}

They satisfy the reproducing property:
\begin{equation}
\begin{aligned}
    &\forall f\in\mathcal H_X,\ \forall \x\in X:\quad
f(\x)=\langle f,\phi_X(\x)\rangle_{\mathcal H_X},\\
&\forall g\in\mathcal H_Y,\ \forall \y\in Y:\quad
g(\y)=\langle g,\phi_Y(\y)\rangle_{\mathcal H_Y}.
\end{aligned}
\end{equation}

In particular,
\begin{equation}
    \langle \phi_X(\x_i),\phi_X(\x)\rangle_{\mathcal H_X}=k_X(\x_i,\x),\qquad
    \langle \phi_Y(\y_j),\phi_Y(\y)\rangle_{\mathcal H_Y}=k_Y(\y_j,\y).
\end{equation}

By the representer theorem, for each output coordinate $a=1,\dots,r$,
\begin{equation}
    f^{(a)}_{\theta_1}(\cdot)=\sum_{i=1}^n A_{ia}\,k_X(\x_i,\cdot),\qquad
    f^{(a)}_{\theta_2}(\cdot)=\sum_{j=1}^n B_{ja}\,k_Y( \y_j,\cdot),
\end{equation}
and we stack coefficients into matrices $A,B\in\mathbb{R}^{n\times r}$
(\emph{column $a$ stores the coefficients of coordinate $a$}).

Define 
\begin{equation}
    \kappa_X(\x):=\big[k_X(\x_1,\x),\dots,k_X(\x_n,\x)\big]^{\!\top}\in\mathbb{R}^n,\qquad
    \kappa_Y(\y):=\big[k_Y(\y_1,\y),\dots,k_Y(\y_n,\y)\big]^{\!\top}\in\mathbb{R}^n.
\end{equation}
Then the $r$-dimensional encoder outputs at a point are
\begin{equation}
    \mathbf f_{\theta_1}(\x)
    :=\big(f^{(1)}_{\theta_1}(\x),\dots,f^{(r)}_{\theta_1}(\x)\big)^{\!\top}
    = A^{\!\top}\kappa_X(\x)\in\mathbb{R}^r,\qquad
    \mathbf f_{\theta_2}(\y)
    = B^{\!\top}\kappa_Y(\y)\in\mathbb{R}^r.
\end{equation}

\begin{itemize}
    \item Theoretically, in infinite-dimensional RKHS, 
    \begin{equation}
        f^{(a)}_{\theta_1}(\x)
        = \Big\langle \sum_{i=1}^n A_{ia}\,k_X(\x_i,\cdot),\,\phi_X(\x)\Big\rangle_{\mathcal H_X}.
    \end{equation}
    \item Computationally, in finite $n$-dimensional representation,  
    \begin{equation}
        f^{(a)}_{\theta_1}(\x) = A_{\cdot a}^{\top}\,\kappa_X(\x).
    \end{equation}
\end{itemize}

Let the Gram matrices be
\begin{equation}
    K_X=[k_X(x_i,x_j)]_{i,j=1}^n\in\mathbb{R}^{n\times n},\qquad
    K_Y=[k_Y(y_i,y_j)]_{i,j=1}^n\in\mathbb{R}^{n\times n}.
\end{equation}

Stacking the $n$ samples as columns, the $r\times n$ batch embeddings are
\begin{equation}
    \F_{\theta_1}(X)=A^{\!\top}K_X\in\mathbb{R}^{r\times n},\qquad
    \F_{\theta_2}(Y)=B^{\!\top}K_Y\in\mathbb{R}^{r\times n}.
\end{equation}

\paragraph{Inner products and induced norms.}
For any $c,d\in\mathbb{R}^n$,
\begin{equation}
    \Big\langle \sum_{i} c_i\,\phi_X(\x_i),\ \sum_{j} d_j\,\phi_X(\x_j)\Big\rangle_{\mathcal H_X}
= c^{\!\top}K_X d,\qquad
\Big\|\sum_{i} c_i\,\phi_X(\x_i)\Big\|_{\mathcal H_X}^2
= c^{\!\top}K_X c,
\end{equation}
and similarly with $K_Y$.

Therefore, the total RKHS norm of the $r$ output coordinates is
\begin{equation}
    \sum_{a=1}^r \|f^{(a)}_{\theta_1}\|_{\mathcal H_X}^2
    = \sum_{a=1}^r A_{\cdot a}^{\!\top}K_X A_{\cdot a}
    = \operatorname{tr}(A^{\!\top}K_X A),
\end{equation}
and analogously for $B$ with $K_Y$.

\paragraph{Similarity.}
For two samples $(\x_i,\y_j)$ the predicted similarity is
\begin{equation}
    \langle \mathbf f_{\theta_1}(\x_i),\mathbf f_{\theta_2}(\y_j)\rangle_{\mathbb{R}^r}
    = \kappa_X(\x_i)^{\!\top} A B^{\!\top}\kappa_Y(\y_j),
\end{equation}
which is exactly the $(i,j)$ entry of
\begin{equation}
    S_{\mathrm{pred}}
    :=\F_{\theta_1}(X)^\top\F_{\theta_2}(Y)
    =K_X\,A\,B^\top K_Y\in\mathbb{R}^{n\times n}.
\end{equation}
with entry $[S_{\mathrm{pred}}]_{ij}
=\langle \F_{\theta_1}(x_i),\F_{\theta_2}(y_j)\rangle_{\mathbb{R}^r}$.

\begin{definition}[Kernel cross--covariance regularizer]
\label{def:kxcv}
Define
\begin{equation}
\label{eq:Rcross}
\mathcal R_{\times}(A,B)
:= \operatorname{tr}\!\big(A^{\top}K_X A\,\, B^{\top}K_Y B\big)
= \big\|\,K_X^{1/2}AB^{\top}K_Y^{1/2}\,\big\|_F^{2}.
\end{equation}
\end{definition}

\noindent
The second equality in \eqref{eq:Rcross} follows from the identity
$\|A'^{\top}B'\|_F^2=\operatorname{tr}(A'^{\top}A'B'^{\top}B')$ with
$A'=AK_X^{1/2}$ and $B'=BK_Y^{1/2}$.

In the linear case, $\f_{\theta_1}(\X)=F_1\X$. Let ${w^{(a)}}^{\top}$ be the $a$-th row vector of $F_1$, then $f_{\theta_1}^{(a)}(\x)={w^{(a)}}^{\top}\x$, thus  
\begin{equation}
    \begin{aligned}
        \sum_{a=1}^r\Big\|f^{(a)}_{\theta_1}\Big\|_{\mathcal H_X}^2 &= \sum_{a=1}^r \langle f_{\theta_1}^{(a)}, f_{\theta_1}^{(a)}\rangle _{\mathcal{H}_\mathcal{X}}\\
        &=\sum_{a=1}^r \langle w^{(a)}, w^{(a)}\rangle\\
        &=\sum_{a=1}^r \|w^{(a)}\|_2^2\\
        &=\|F_1\|_F^2
    \end{aligned}
\end{equation}

\begin{proposition}[Linear-kernel reduction]
\label{prop:linear-reduction}
Suppose $k_X(x,x')=\langle x,x'\rangle$ and $k_Y(y,y')=\langle y,y'\rangle$,
and let the sample matrices be $X=[x_1,\dots,x_n]\in\mathbb{R}^{d_1\times n}$,
$Y=[y_1,\dots,y_n]\in\mathbb{R}^{d_2\times n}$. Then $K_X=X^{\top}X$ and
$K_Y=Y^{\top}Y$, and with
\begin{equation}
\label{eq:FfromA}
F_1 := A^{\top}X^{\top}\in\mathbb{R}^{r\times d_1},\qquad
F_2 := B^{\top}Y^{\top}\in\mathbb{R}^{r\times d_2},
\end{equation}
we have
\begin{equation}
\label{eq:match}
\mathcal R_{\times}(A,B)
=\operatorname{tr}\!\big(A^{\top}K_X A\,\, B^{\top}K_Y B\big)
=\operatorname{tr}(F_1F_1^{\top}F_2F_2^{\top})
=\|F_1^{\top}F_2\|_F^2.
\end{equation}
\end{proposition}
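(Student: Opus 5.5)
The plan is a direct substitution into the definition of $\mathcal R_{\times}$ from Definition~\ref{def:kxcv}, followed by the cyclic property of the trace. First, with linear kernels the Gram matrices are $[K_X]_{ij}=\langle x_i,x_j\rangle=(X^{\top}X)_{ij}$, so $K_X=X^{\top}X$, and likewise $K_Y=Y^{\top}Y$. This identification is immediate from the definition of the Gram matrix.

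Next I would rewrite each block of the trace. Substituting the Gram matrices gives
\[
A^{\top}K_XA=A^{\top}X^{\top}XA=(A^{\top}X^{\top})(A^{\top}X^{\top})^{\top}=F_1F_1^{\top},
\]
and in the same way $B^{\top}K_YB=F_2F_2^{\top}$. Both follow from the choice $F_1=A^{\top}X^{\top}$ and $F_2=B^{\top}Y^{\top}$ in \eqref{eq:FfromA}. Hence
\[
\mathcal R_{\times}(A,B)=\operatorname{tr}(A^{\top}K_XA\,B^{\top}K_YB)=\operatorname{tr}(F_1F_1^{\top}F_2F_2^{\top}),
\]
which is the middle equality in \eqref{eq:match}.

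For the last equality I would start from $\|F_1^{\top}F_2\|_F^2=\operatorname{tr}\big((F_1^{\top}F_2)^{\top}F_1^{\top}F_2\big)=\operatorname{tr}(F_2^{\top}F_1F_1^{\top}F_2)$. Cycling $F_2^{\top}$ to the end turns this into $\operatorname{tr}(F_1F_1^{\top}F_2F_2^{\top})$. The matrices involved are $r\times d_1$ and $r\times d_2$, so every product is well defined and cyclicity applies.

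No step is a real obstacle. The one thing to check is that the first equality in \eqref{eq:match} is the definition of $\mathcal R_{\times}$ itself, so no matrix square roots are needed. This matters when $K_X$ or $K_Y$ is singular, which is the typical case here since $X^{\top}X$ has rank at most $d_1$. If one also wants to match the square-root form in \eqref{eq:Rcross}, it follows from the same computation: $\|K_X^{1/2}AB^{\top}K_Y^{1/2}\|_F^2=\operatorname{tr}(A^{\top}K_XA\,B^{\top}K_YB)$ by cyclicity, using $K^{1/2}K^{1/2}=K$ for positive semidefinite $K$.
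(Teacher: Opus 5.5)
Your proposal is correct and matches the paper's proof: both substitute $K_X=X^{\top}X$, $K_Y=Y^{\top}Y$ and identify $F_1F_1^{\top}=A^{\top}K_XA$ and $F_2F_2^{\top}=B^{\top}K_YB$ to obtain the middle equality. You also write out the cyclic-trace step $\|F_1^{\top}F_2\|_F^2=\operatorname{tr}(F_2^{\top}F_1F_1^{\top}F_2)=\operatorname{tr}(F_1F_1^{\top}F_2F_2^{\top})$ for the last equality, which the paper leaves implicit.
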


\begin{proof}
With $K_X=X^{\top}X$ and $K_Y=Y^{\top}Y$ we compute
\[
F_1F_1^{\top}=(A^{\top}X^{\top})(XA)=A^{\top}(X^{\top}X)A=A^{\top}K_XA,
\]
\[
F_2F_2^{\top}=(B^{\top}Y^{\top})(YB)=B^{\top}(Y^{\top}Y)B=B^{\top}K_YB.
\]
Therefore,
\[
\operatorname{tr}(F_1F_1^{\top}F_2F_2^{\top})
=\operatorname{tr}\!\big(A^{\top}K_XA\,\,B^{\top}K_YB\big)
=\mathcal R_{\times}(A,B).
\]
\end{proof}

\begin{theorem}[Kernelized spetral charaterization (unified form)]
Let $\rho>0$ and define the regularizer $R(A,B)\;=\;\big\|\,(K_X^{1/2}A)^{\!\top}(K_Y^{1/2}B)\,\big\|_F^{2}.$
Then minimizing the contrastive loss is equivalent to the kernelized maximization
\begin{equation}
\label{eq:kernel-obj-clean}
\underset{A,B\in\mathbb{R}^{n\times r}}{\max}\quad
\operatorname{tr}\!\big(A^{\top}K_X\,S(\gamma)\,K_Y B\big)
-\frac{\rho}{2}\,\big\|\,(K_X^{1/2}A)^{\!\top}(K_Y^{1/2}B)\,\big\|_F^{2}.
\end{equation}
Let
\begin{equation}
    A' := A^{\top}K_X^{1/2}\in\mathbb{R}^{r\times n},\quad
B' := B^{\top}K_Y^{1/2}\in\mathbb{R}^{r\times n},\quad
M := K_X^{1/2}\,S(\gamma)\,K_Y^{1/2}\in\mathbb{R}^{n\times n}.
\end{equation}
Then \eqref{eq:kernel-obj-clean} rewrites 
\begin{equation}
    \max_{A',B'}\ \operatorname{tr}(A' M B'^{\top})-\frac{\rho}{2}\,\|A'^{\top}B'\|_F^2.
\end{equation}
If $M=U\Sigma V^\top$ is an SVD and $M_r=\sum_{i=1}^r \sigma_i u_i v_i^\top$ its best rank-$r$ approximation by Eckart–Young–Mirsky theorem, then all maximizers satisfy the relation
\begin{equation}
(A')^{\top}B'=\frac{1}{\rho}\,M_r
\quad\Longleftrightarrow\quad
A\,B^{\top}=\frac{1}{\rho}\,K_X^{-1/2}M_rK_Y^{-1/2}.
\end{equation}
(If $K_X$ or $K_Y$ is singular, replace inverse square roots by Moore–Penrose pseudo–inverse square roots.)
\end{theorem}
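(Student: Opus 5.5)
The proof splits into three parts. First, I obtain the equivalence with the kernelized maximization from the gradient equivalence lemma. Second, I reduce to the whitened variables by a change of coordinates. Third, I solve the whitened problem by completing the square and applying Eckart–Young–Mirsky, exactly as in the proof of Theorem~\ref{theorem:linear spectral charaterization}.

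\textbf{Equivalence with the kernelized objective.} In the RKHS parameterization the batch embeddings are $\F_{\theta_1}(\X)=A^\top K_X$ and $\F_{\theta_2}(\Y)=B^\top K_Y$, and the parameters are $\theta_1=A$, $\theta_2=B$. Substituting into Lemma~\ref{lemma:grad equal} gives
\[
\partial_A\mathcal L=-\partial_A\operatorname{tr}(A^\top K_X S(\gamma)K_Y B)+\partial_A R,
\]
with $\gamma$ frozen at the current $(A,B)$, and likewise for $B$. This is how the statement ``minimizing the contrastive loss is equivalent to'' the maximization in \eqref{eq:kernel-obj-clean} is to be read. It is the same surrogate sense used in Theorem~\ref{theorem:linear spectral charaterization}: stationary points of $\mathcal L$ coincide with stationary points of the surrogate at fixed $\gamma$. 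I take the regularizer to be $\tfrac{\rho}{2}R$, consistent with the displayed objective.

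\textbf{Change of variables.} Set $A'=A^\top K_X^{1/2}$ and $B'=B^\top K_Y^{1/2}$. Because $K_X^{1/2}$ and $K_Y^{1/2}$ are symmetric, cyclicity of the trace gives
\[
\operatorname{tr}(A^\top K_X S K_Y B)=\operatorname{tr}\big(A^\top K_X^{1/2}\,(K_X^{1/2}SK_Y^{1/2})\,K_Y^{1/2}B\big)=\operatorname{tr}(A'MB'^\top).
\]
The penalty is $(K_X^{1/2}A)^\top(K_Y^{1/2}B)=A'B'^\top$. Its Frobenius norm equals $\|A'^\top B'\|_F$, since $\operatorname{tr}(A'^\top A'B'^\top B')=\operatorname{tr}(A'A'^{\top}\cdots)$ by cyclicity, as in Definition~\ref{def:kxcv}. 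So the objective becomes
\[
\operatorname{tr}(A'MB'^\top)-\tfrac{\rho}{2}\|A'^\top B'\|_F^2 .
\]

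\textbf{Solving the whitened problem.} Write $W=A'^\top B'\in\mathbb R^{n\times n}$. Then $\operatorname{tr}(A'MB'^\top)=\langle W,M\rangle_F$, and completing the square yields
\[
\frac{1}{2\rho}\|M\|_F^2-\frac{\rho}{2}\Big\|W-\frac1\rho M\Big\|_F^2 .
\]
Since $A'$ and $B'$ have $r$ rows, $W$ ranges over all matrices of rank at most $r$. When $K_X$ and $K_Y$ are invertible, the maps $A\mapsto A'$ and $B\mapsto B'$ are bijections, so every rank-$\le r$ factorization is attainable. Maximizing the objective therefore means finding the best rank-$r$ approximation of $M/\rho$. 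By Eckart–Young–Mirsky this gives $W=\tfrac1\rho M_r$, uniquely when $\sigma_r>\sigma_{r+1}$. Undoing the whitening gives
\[
K_X^{1/2}AB^\top K_Y^{1/2}=\tfrac1\rho M_r,
\]
and multiplying by $K_X^{-1/2}$ on the left and $K_Y^{-1/2}$ on the right gives the stated relation.

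\textbf{Main obstacle: singular Gram matrices.} When $K_X$ or $K_Y$ is singular, $A'$ only ranges over matrices whose rows lie in $\mathrm{range}(K_X)$, and similarly for $B'$. The feasible $W$ are then the rank-$\le r$ matrices supported on $\mathrm{range}(K_X)\times\mathrm{range}(K_Y)$. This causes no loss: $M=K_X^{1/2}SK_Y^{1/2}$ already has its column space in $\mathrm{range}(K_X)$ and its row space in $\mathrm{range}(K_Y)$, so $M_r$ is feasible. Moreover, the components of $W$ outside these ranges only increase the penalty, so they vanish at the optimum. The inversion step then holds with Moore–Penrose square roots, where $K_X^{+1/2}K_X^{1/2}$ is the projector onto $\mathrm{range}(K_X)$. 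The relation determines $AB^\top$ only modulo null-space components, which do not affect the embeddings $K_XA$ and $K_YB$.
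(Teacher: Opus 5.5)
There is a genuine gap in your change-of-variables step, and it is exactly the step that produces the problem you then solve. You correctly write the penalty matrix as $(K_X^{1/2}A)^\top(K_Y^{1/2}B)=A'B'^\top\in\mathbb{R}^{r\times r}$. You then assert $\|A'B'^\top\|_F=\|A'^\top B'\|_F$ ``by cyclicity,'' and this is false. We have $\|A'B'^\top\|_F^2=\operatorname{tr}(A'^\top A'\,B'^\top B')$ while $\|A'^\top B'\|_F^2=\operatorname{tr}(A'A'^\top\,B'B'^\top)$, and no cyclic permutation relates the two. For $r=1$, $n=2$, $A'=(1,0)$, $B'=(0,1)$ you get $A'B'^\top=0$ but $\|A'^\top B'\|_F=1$.

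Worse, with the regularizer taken literally as written, the maximization has no maximizer. Take invertible $K_X,K_Y$, so $A',B'$ range over all of $\mathbb{R}^{r\times n}$. Let $A'$ have first row $t\,a^\top$ and $B'$ have first row $b^\top$, all other rows zero, with $a\perp b$ and $a^\top Mb\neq 0$. Such a pair exists whenever $M$ is not a multiple of the identity. The penalty then vanishes, and the objective equals $t\,a^\top Mb$, which is unbounded as $t$ ranges over $\mathbb{R}$. So no argument can deliver the rank-$r$ characterization from the regularizer exactly as stated.

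What you need is the regularizer the paper actually intends: $\mathcal R_\times(A,B)=\operatorname{tr}(A^\top K_XA\,B^\top K_YB)=\|K_X^{1/2}AB^\top K_Y^{1/2}\|_F^2$ from Definition~\ref{def:kxcv}. This is the one that reduces to $\|F_1^\top F_2\|_F^2$ in Proposition~\ref{prop:linear-reduction}. For it, $K_X^{1/2}AB^\top K_Y^{1/2}=A'^\top B'$ holds by definition, and no trace manipulation is needed.

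The paper's own proof contains the same slip. Its first displayed equality silently replaces $(K_X^{1/2}A)^\top(K_Y^{1/2}B)$ by $(A^\top K_X^{1/2})^\top(B^\top K_Y^{1/2})$. The note after Definition~\ref{def:kxcv}, which you cite, is itself dimensionally garbled.

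With that correction, the rest of your proof is sound and follows the paper's route: whiten, complete the square as in Theorem~\ref{theorem:linear spectral charaterization}, apply Eckart--Young--Mirsky, and undo the whitening. Your version is also more careful than the paper's in three ways:
\begin{itemize}
\item You verify that every rank-$\le r$ matrix is attainable as $A'^\top B'$.
\item You note that ``all maximizers satisfy'' requires $\sigma_r>\sigma_{r+1}$.
\item You handle singular Gram matrices correctly: $M_r$ is feasible, and $AB^\top$ is determined only modulo null-space components that do not affect $K_XA$ or $K_YB$.
\end{itemize}
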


\begin{proof}
Insert $K_X^{1/2}K_X^{1/2}=K_X$ and $K_Y^{1/2}K_Y^{1/2}=K_Y$ into the trace term and set
$A'=A^\top K_X^{1/2}$, $B'=B^\top K_Y^{1/2}$, $M=K_X^{1/2}S(\gamma)K_Y^{1/2}$, to obtain
 \begin{align}
        &\operatorname{tr}\left(A^{\top}K_X S(\gamma) K_Y B\right)-(\rho / 2)\big\|\,(K_X^{1/2}A)^{\top}(K_Y^{1/2}B)\,\big\|_F^{2}\\
        &=\operatorname{tr}\left(A^{\top}K_X^{\frac{1}{2}}K_X^{\frac{1}{2}} S(\gamma) K_Y^{\frac{1}{2}}K_Y^{\frac{1}{2}} B\right)-(\rho / 2)\big\|\,(A^{\top}K_X^{1/2})^{\top}(B^{\top}K_Y^{1/2})\,\big\|_F^{2}\\
        &=\operatorname{tr}\left(A'MB'^{\top}\right)-(\rho / 2)\big\|\,A'^{\top}B'\,\big\|_F^{2}
    \end{align}
Now same as steps in proof of Theorem\ref{theorem:linear spectral charaterization}, complete the square in $A'^{\top}B'$ to see that the maximizer aligns the column spaces of $A',B'$ with the top singular vectors of $M$, yielding $A'^{\top}B'=\rho^{-1}M_r$. Undo the change of variables to get \eqref{eq:whitened-relation}.

    By Theorem \ref{theorem:linear spectral charaterization}, the optimal solution satisfies
    \begin{align}
        &\left\{\left(A,B\right): A'^{\top}B'= \frac{1}{\rho} M_r(\gamma)\right\}\\
        &\left\{\left(A,B\right): (A^{\top}K_X^{\frac{1}{2}})^{\top}(B^{\top}K_Y^{\frac{1}{2}})= \frac{1}{\rho} K_X^{\frac{1}{2}}S(\gamma)K_Y^{\frac{1}{2}}\right\}\\
        &\left\{\left(A,B\right): K_X^{\frac{1}{2}}AB^{\top}K_Y^{\frac{1}{2}}= \frac{1}{\rho} K_X^{\frac{1}{2}}S(\gamma)K_Y^{\frac{1}{2}}\right\}\\
        &\left\{\left(A,B\right):AB^{\top}= \frac{1}{\rho}K_X^{-\frac{1}{2}} K_X^{\frac{1}{2}}S(\gamma)K_Y^{\frac{1}{2}}K_Y^{-\frac{1}{2}}\right\}
    \end{align}
    If $K_X$ or $K_Y$ is singular, use Moore–Penrose pseudoinverses $K_X^{+1/2}$, $K_Y^{+1/2}$.
\end{proof}

One explicit optimal choice is $
    A^\star = K_X^{-1/2}U_r$ and $
B^\star = K_Y^{-1/2}V_r\,\Sigma_r/\rho,$
where $U_r=[u_1,\dots,u_r]$, $V_r=[v_1,\dots,v_r]$, $\Sigma_r=\operatorname{diag}(\sigma_1,\dots,\sigma_r)$ are SVD of $M$.

\begin{corollary}[Kernel inference (out-of-sample)]
 Let $\mathcal D=\{(\x_i,\y_i)\}_{i=1}^{n}$ be the reference batch used to build contrastive similarity $S(\gamma)$ and let $k$ be an positive‑definite kernel.
For a new pair $(x^\ast,y^\ast)$, set
$\kappa_X(x^\ast)=[k_X(x_1,x^\ast),\dots,k_X(x_n,x^\ast)]^\top$ and
$\kappa_Y(y^\ast)=[k_Y(y_1,y^\ast),\dots,k_Y(y_n,y^\ast)]^\top$.
With an optimal $(A^\star,B^\star)$ from Theorem~\ref{theorem:max-kernelized},
\begin{equation}
    \f_{\theta_1}(x^\ast)=(A^\star)^\top \kappa_X(x^\ast),\qquad
\f_{\theta_2}(y^\ast)=(B^\star)^\top \kappa_Y(y^\ast),
\end{equation}
and the similarity
    \begin{equation}
        s(\x^{\ast},\y^{\ast})
    \;=\;
    \frac{\kappa_X({x^{\ast}})^{\top}A^\star {B^\star} ^{\top}\kappa_Y({y^{\ast}})}{\|{A^\star}^{\top}\kappa_X({x^{\ast}})\|_2\|{B^\star}^{\top}\kappa_Y({y^{\ast}})\|_2}
    \end{equation}
\end{corollary}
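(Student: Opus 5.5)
The corollary is essentially an evaluation statement: once an optimal pair $(A^\star,B^\star)$ from Theorem~\ref{theorem:max-kernelized} is fixed, the encoders are fully specified RKHS functions. The plan is to evaluate those functions at new points via the reproducing property and then plug the results into the hyper-spherical similarity of the first definition.

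\textbf{Step 1: the encoders are defined everywhere.} By the representer form, each output coordinate is
\[
f^{(a)}_{\theta_1}=\sum_{i=1}^n A^\star_{ia}\,k_X(\x_i,\cdot)\in\mathcal H_X .
\]
Hence $f^{(a)}_{\theta_1}$ is a well-defined element of $\mathcal H_X$ and can be evaluated at any $\x^\ast$, not just at training points. Applying the reproducing property $f(\x^\ast)=\langle f,\phi_X(\x^\ast)\rangle_{\mathcal H_X}$ and bilinearity gives
\[
f^{(a)}_{\theta_1}(\x^\ast)=\sum_i A^\star_{ia}\,k_X(\x_i,\x^\ast)=(A^\star_{\cdot a})^\top\kappa_X(\x^\ast).
\]
Stacking over $a=1,\dots,r$ yields $\f_{\theta_1}(\x^\ast)=(A^\star)^\top\kappa_X(\x^\ast)$. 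The $Y$ side, $\f_{\theta_2}(\y^\ast)=(B^\star)^\top\kappa_Y(\y^\ast)$, is identical with $k_Y$ and $B^\star$.

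\textbf{Step 2: the similarity formula.} Substituting these two vectors into the hyper-spherical similarity, the numerator is
\[
\f_{\theta_1}(\x^\ast)^\top\f_{\theta_2}(\y^\ast)=\kappa_X(\x^\ast)^\top A^\star (B^\star)^\top\kappa_Y(\y^\ast),
\]
and the denominators are the Euclidean norms of the two embeddings. This is exactly the displayed formula.

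\textbf{Step 3: consistency with training.} As a sanity check, when $\x^\ast=\x_j$ the vector $\kappa_X(\x_j)$ is the $j$-th column of $K_X$. The formula therefore reproduces the in-sample embeddings $\F_{\theta_1}(\X)=A^\top K_X$ used in the training objective, so the out-of-sample map extends the optimized one.

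\textbf{Main obstacles.} The only real concerns are well-posedness. First, $A^\star$ involves $K_X^{-1/2}$, so it is well defined only if $K_X$ is invertible. Positive definiteness of $k$ with distinct reference points gives this; otherwise we use the pseudo-inverse convention from Theorem~\ref{theorem:max-kernelized}, and the formula holds verbatim with $A^\star$ so defined. Second, the similarity requires nonzero denominators, i.e.\ $(A^\star)^\top\kappa_X(\x^\ast)\neq0$. I would state this as an implicit assumption (the new point is not orthogonal in $\mathcal H_X$ to all $r$ learned coordinate functions) rather than prove it.
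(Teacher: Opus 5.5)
Your proposal is correct and takes the same route as the paper: the paper gets the out-of-sample map by evaluating the representer expansion $f^{(a)}_{\theta_1}=\sum_i A_{ia}k_X(\x_i,\cdot)$ at an arbitrary point via the reproducing property, giving $A^\top\kappa_X(\x)$, and then substitutes the two embeddings into the hyper-spherical similarity. One small imprecision: invertibility of $K_X$ for distinct points needs a \emph{strictly} positive definite kernel (the linear kernel with $n>d_1$ already gives a singular $K_X=\X^\top\X$), but your fallback to the Moore--Penrose convention covers this case.
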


\section{Implementation}
\label{supp:code}
The complete code for all experiments will be made publicly available on GitHub \href{https://github.com/suihangke/UniCon/tree/main}{https://github.com/suihangke/UniCon.git}.

\subsection{Computation of $S(\gamma)$}
\label{supp:Scode}
In this section, we provide the PyTorch implementation of the contrastive similarity weight matrix $S(\gamma)$ computation used in UniCon. The function below computes the matrix $S(\gamma)$ in one-to-one paired settings, and a generalized many-to-many settings, where similarity values $s_{ij}$ are used to form the weighting terms $\gamma_{ij}$. We have also provided a complete pseudocode outlining the training pipeline of UniCon in Algorithm~\ref{alg:train-pipeline-linear}~\ref{alg:train-pipeline-nonlinear}, which offers a step-by-step description of our method's implementation to facilitate reproducibility and deeper understanding.

\paragraph{Algorithm Discussion.}
In UniCon, our efficiency claim refers to the rapid stabilization of the mapping ($(A,B)$) parameters toward the desired alignment subspace through derived-form spectral updates, which bypass many small gradient steps.
Specifically, we reformulate the minimization of the contrastive loss as an equivalent maximization problem, using the proposed contrastive similarity matrix $S(\gamma)$. We will state in the unified nonlinear form, as the notations can be reduced to linear case as analysis in Section 3.3. The general objective is $\max tr(A^TK_X S(\gamma)K_Y B)-\frac{\rho}{2}||(K_X^{1/2}A)^T(K_Y^{1/2}B)||_F^2$. 
The optimal solutions satisfy $AB^T=\frac{1}{\rho}K_X^{-1/2}[K_X^{1/2}S(\gamma)K_Y^{1/2}]_rK_Y^{-1/2}$ where $[\cdot]_r$ denotes the best rank-$r$ approximation.
Note that $S(\gamma)$ is itself a function of $(A,B)$. Therefore, the overall optimization becomes a fixed-point problem. We solve it via an iterative procedure, such as, 
$$A^{(t+1)},B^{(t+1)} 
\;\leftarrow\;
\tfrac{1}{\rho}K_X^{-1/2}[K_X^{1/2}S(\gamma;A^{(t)},B^{(t)})K_Y^{1/2}]_r K_Y^{-1/2}.$$
In this process, we observe rapid convergence of $(A,B)$ to a stable solution. This rapid stabilization, subspace convergence, is what we refer to as efficiency.

\begin{lstlisting}[style=fancy,caption={Contrastive Smilarity Weight Matrix Computation (one-to-one case)},label={lst:contrastive_nonlinear}]
def compute_S_gamma(
    s, data1_batch, data2_batch,
    tau=1.0, nu=0.1,
    psi=torch.exp, phi=torch.log1p,
    epsilon_ij=1, epsilon_ii=1,
    diff_psi=torch.exp,
    diff_phi=lambda x, eps=1e-8: 1.0 / (1.0 + x + eps)
):
    n = data1_batch.size(0)

    # Build epsilon mask for weighting
    epsilon = epsilon_ii * torch.eye(n, device=s.device)
    epsilon += epsilon_ij * (1 - torch.eye(n, device=s.device))

    # Row-wise similarity terms: s_ij - nu * s_ii
    s_diag_row = torch.diag(s).unsqueeze(1).expand(-1, n)
    s_nu_row = s - nu * s_diag_row
    psi_terms = psi(s_nu_row)
    sum_psi_terms = torch.sum(epsilon * psi_terms, dim=1, keepdim=True)
    diff_phi_terms = diff_phi(sum_psi_terms)
    diff_psi_terms = diff_psi(s_nu_row)
    alpha = epsilon * diff_phi_terms * diff_psi_terms

    # Column-wise similarity terms: s_ji - nu * s_ii
    s_diag_col = torch.diag(s).expand(n, n)
    s_nu_col = s - nu * s_diag_col
    psi_terms_bar = psi(s_nu_col)
    sum_psi_terms_bar = torch.sum(epsilon * psi_terms_bar.T, dim=1, keepdim=True)
    diff_phi_terms_bar = diff_phi(sum_psi_terms_bar)
    diff_psi_terms_bar = diff_psi(s_nu_col.T)
    alpha_bar = epsilon * diff_phi_terms_bar * diff_psi_terms_bar

    # Compute S_gamma weights
    S_ij = (alpha + alpha_bar.t()) / 2
    S_i = nu * torch.sum((alpha + alpha_bar) / 2, dim=1) - torch.diag(alpha + alpha_bar) / 2

    
    S_gamma = -S_ij / n
    S_gamma[range(n), range(n)] = S_i / n

    return S_gamma
\end{lstlisting}

\begin{lstlisting}[style=fancy,caption={Contrastive Similarity Weight Matrix Computation (generalized, support many-to-many matching)},label={lst:contrastive}]
def compute_S_gamma_generalized(s, pos_mask, normalized_data1_batch, normalized_data2_batch, nu=1.5, tau=1.0,psi=torch.exp, phi=torch.log1p, epsilon_ij=1, epsilon_ii=1, diff_psi=torch.exp, diff_phi=lambda x, eps=1e-8: 1.0 / (1.0 + x + eps)):
    n = s.shape[0]
    device = s.device

    # Create epsilon matrix
    epsilon = torch.ones(n, n, device=device) * epsilon_ij
    epsilon.fill_diagonal_(epsilon_ii)
    
    # Create masks
    neg_mask_base = ~pos_mask

    # Expand tensors for broadcasting: (n, n, n) where dim 0 is i, dim 1 is j, dim 2 is m
    s_i_m = s.unsqueeze(1).expand(n, n, n)  # s[i,m]
    s_i_j = s.unsqueeze(2).expand(n, n, n)  # s[i,j]
    epsilon_i_m = epsilon.unsqueeze(1).expand(n, n, n)  # epsilon[i,m]
    mask_not_i = neg_mask_base.unsqueeze(1).expand(n, n, n)  # mask for m not in [i]
    # Compute psi terms: epsilon[i,m] * psi(s[i,m] - nu * s[i,j])
    psi_terms = epsilon_i_m * psi(s_i_m - nu * s_i_j) * mask_not_i
    sum_psi = psi_terms.sum(dim=2)  # (n, n), sum over m
    # Compute diff_psi terms
    diff_psi_terms = epsilon_i_m * diff_psi(s_i_m - nu * s_i_j) * mask_not_i
    sum_diff_psi = diff_psi_terms.sum(dim=2)  # (n, n)
    # Phi arguments
    phi_args = epsilon * psi((1 - nu) * s) + sum_psi  # (n, n)
    # Gamma for positive samples
    gamma_pos = diff_phi(phi_args) * (
        epsilon * diff_psi((1 - nu) * s) * (1 - nu) - nu * sum_diff_psi
    )

    diff_phi_i_k = diff_phi(phi_args).unsqueeze(1).expand(n, n, n)
    gamma_neg_k = diff_phi_i_k * (epsilon * diff_psi(s_i_j - nu * s_i_m))
    pos_mask_k = pos_mask.unsqueeze(1).expand(n, n, n)
    gamma_neg = (gamma_neg_k * pos_mask_k).sum(dim=2)
    
    # For gamma_bar positive samples
    s_m_i = s.T.unsqueeze(1).expand(n, n, n)  # s[m,i]
    s_j_i = s.T.unsqueeze(2).expand(n, n, n)  # s[j,i]
    epsilon_m_i = epsilon.T.unsqueeze(1).expand(n, n, n)  # epsilon[m,i]
    # Compute psi terms for gamma_bar
    psi_terms_bar = epsilon_m_i * psi(s_m_i - nu * s_j_i) * mask_not_i
    sum_psi_bar = psi_terms_bar.sum(dim=2)  # (n, n), sum over m
    # Compute diff_psi terms for gamma_bar (no epsilon in second sum)
    diff_psi_terms_bar = diff_psi(s_m_i - nu * s_j_i) * mask_not_i
    sum_diff_psi_bar = diff_psi_terms_bar.sum(dim=2)  # (n, n)
    # Phi arguments for gamma_bar
    phi_args_bar = epsilon.T * psi((1 - nu) * s.T) + sum_psi_bar  # (n, n)
    # gamma_bar for positive samples
    gamma_bar_pos = diff_phi(phi_args_bar) * (
        epsilon.T * diff_psi((1 - nu) * s.T) * (1 - nu) - nu * sum_diff_psi_bar
    )

    diff_phi_i_k = diff_phi(phi_args_bar).unsqueeze(1).expand(n, n, n)
    gamma_bar_neg_k = diff_phi_i_k * (epsilon.T * diff_psi(s_j_i - nu * s_m_i))
    pos_mask_k = pos_mask.unsqueeze(1).expand(n, n, n)
    gamma_bar_neg = (gamma_bar_neg_k * pos_mask_k).sum(dim=2)
    
    # ========== Combine positive and negative samples ==========
    gamma = torch.where(pos_mask, gamma_pos, gamma_neg)
    gamma_bar = torch.where(pos_mask, gamma_bar_pos, gamma_bar_neg)

    pos_mask_row_sum = pos_mask.sum(dim=1)
    pos_mask_col_sum = pos_mask.sum(dim=0)
    pos_mask_row_sum_expanded = pos_mask_row_sum.unsqueeze(1).expand(n, n)
    pos_mask_col_sum_expanded = pos_mask_col_sum.unsqueeze(0).expand(n, n)
    
    S_gamma = -(gamma / pos_mask_row_sum_expanded + gamma_bar.T / pos_mask_col_sum_expanded) / 2

    C_n = n  # Normalization constant
    S_gamma = S_gamma / C_n
    
    return S_gamma
\end{lstlisting}

\begin{algorithm}
\caption{Training Pipeline for Linear Case}
\label{alg:train-pipeline-linear}
\begin{algorithmic}[1]
    \REQUIRE Initial $F_1^0, F_2^0$, dataset $\{(X_i, Y_i)\}_{i=1}^N$
    \ENSURE Trained $F_1$ and $F_2$
    \STATE Initialize $F_1 \leftarrow F_1^0$, $F_2 \leftarrow F_2^0$
    \WHILE{Not Converge}
        \STATE $(F_1^{\top}F_2)\_{{\rm sum}} \gets 0$ \hfill \COMMENT{accumulator for batch-wise $F_1^\top F_2$}

        \FOR{each batch $(X_i, Y_i)$}
            \STATE $\text{similarity} \gets (F_1X_i)^{\top} (F_2Y_i)$
            \STATE $S(\gamma) \gets \text{compute\_S\_gamma}(\text{similarity}, X_i, Y_i)$
            \STATE 
            $(F_1^{\top}F_2)_i \gets 
            \frac{1}{\rho} 
            X_iS(\gamma)Y_i^{\top}$
            \STATE $\text{weight}_i\gets \text{validation}((F_1^{\top}F_2)_i)$
            \STATE $(F_1^{\top}F_2)\_{{\rm sum}} \gets (F_1^{\top}F_2)\_{{\rm sum}} + (F_1^{\top}F_2)_i^{\top}\cdot \text{weight}_i$
        \ENDFOR

        \STATE Update $F_1, F_2$ based on aggregated $(F_1^{\top}F_2)\_{{\rm sum}}$
        \STATE \textbf{decompose} $F_1$ \textbf{ and } $F_2$
    \ENDWHILE
\end{algorithmic}
\end{algorithm}

\begin{algorithm}
\caption{Training Pipeline for Nonlinear Case}
\label{alg:train-pipeline-nonlinear}
\begin{algorithmic}[1]
    \REQUIRE Initial $A_0, B_0$, dataset $\{(X_i, Y_i)\}_{i=1}^N$
    \ENSURE Trained $A$ and $B$
    \STATE Initialize $A \leftarrow A_0$, $B \leftarrow B_0$
    \WHILE{not converged}
        \STATE $AB\_{{\rm sum}} \gets 0$ \hfill \COMMENT{accumulator for batch-wise $AB^\top$}

        \FOR{each batch $(X_i, Y_i)$}
            \STATE $K_{X_i}, K_{Y_i} \gets \text{kernel matrices of } X_i, Y_i$
            \STATE $\text{similarity} \gets (A^{\top} K_{X_i})^{\top} (B^{\top} K_{Y_i})$
            \STATE $S(\gamma) \gets \text{compute\_S\_gamma}(\text{similarity}, X_i, Y_i)$
            \STATE 
            $(AB^{\top})_i \gets 
            \frac{1}{\rho} 
            K_{X_i}^{-1/2}
            \left[
                K_{X_i}^{1/2}
                S(\gamma; A^{(t)}, B^{(t)})
                K_{Y_i}^{1/2}
            \right]_r
            K_{Y_i}^{-1/2}$
            \STATE $\text{weight}_i\gets \text{validation}((AB^{\top})_i)$
            \STATE $AB\_{{\rm sum}} \gets AB\_{{\rm sum}} + (AB^{\top})_i\cdot \text{weight}_i$
        \ENDFOR

        \STATE Update $A, B$ based on aggregated $AB\_{{\rm sum}}$
        \STATE \textbf{decompose} $A$ \textbf{ and } $B$
    \ENDWHILE
\end{algorithmic}
\end{algorithm}
\subsection{Experiment details and Convergence visualizations}\label{app:exp}

We present additional plots to illustrate the convergence behavior of the SGD-CLIP baseline across three experimental settings: synthetic latent factor models, unimodal image clustering, and multimodal image–text retrieval.

These visualizations confirm that our reported SGD-CLIP performance is after sufficient training, providing a fair comparison against our proposed Unicon method. While SGD-CLIP ultimately achieves high accuracy, it requires many iterations to converge—underscoring the computational inefficiency of iterative optimization when compared to the efficient closed form update of Unicon.

\paragraph{Synthetic Setting: Linear Latent-Factor Model.}
We generate synthetic data from latent vectors $\mathbf{z} \in \mathbb{R}^r$ that are sampled around $K=3$ cluster centers in latent space. 

The observed pairs $(\mathbf{x}, \mathbf{y})$ are linearly projected from $\mathbf{z}$ using orthogonal matrices, followed by additive Gaussian noise:
\begin{align}
    \mathbf{x} = U_1 \mathbf{z} + \xi_1, \qquad
    \mathbf{y} = U_2 \mathbf{z} + \xi_2.
\end{align}
Here, $U_1 \in \mathbb{R}^{d_1 \times r}$ and $U_2 \in \mathbb{R}^{d_2 \times r}$ are orthogonal projections sampled from the Haar measure on $\mathbb{O}_{d_1}$ and $\mathbb{O}_{d_2}$ respectively, with $d_1=40, d_2=30, r=10$. Noise vectors $\xi_1, \xi_2$ are sampled from $\mathcal{N}(0, SNR^2)$, with $SNR = 0.3$. Each of the $N=600$ training samples is drawn from one of $K=3$ clusters in latent space.
Figure~\ref{fig:sgd-clip-convergence} shows the accuracy score of SGD-CLIP method converges across training epochs. 

\paragraph{Synthetic Setting: Nonlinear Latent-Factor Model.}
We further evaluate the method under a nonlinear transformation of the latent space:
\begin{align}
    x = \tanh(U_1 z + \xi_1), \qquad y = \tanh(U_2 z + \xi_2),
\end{align}
where $U_1, U_2$ are again uniformly sampled from orthogonal groups $\mathbb{O}_{d_1}$ and $\mathbb{O}_{d_2}$, taking the first $r$ columns. The additive Gaussian noise is drawn from $\mathcal{N}(0, \text{SNR}^2)$ with $\text{SNR} = 0.3$.
\renewcommand{\thefigure}{\thesection\arabic{figure}}
\setcounter{figure}{0}

\begin{figure}[h]
    \centering
    \begin{subfigure}[b]{0.3\linewidth}
        \centering
        \includegraphics[width=\linewidth]{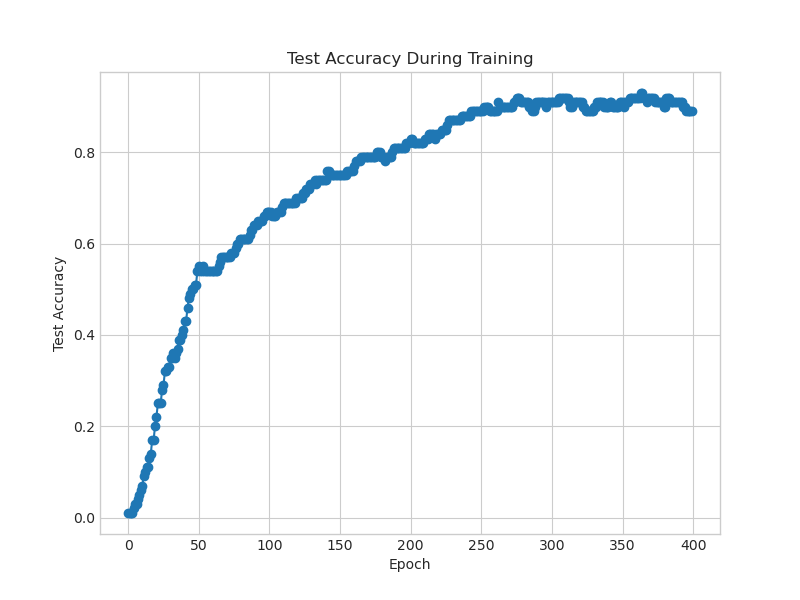}
        \caption{Linear latent factor model}
    \end{subfigure}
    \begin{subfigure}[b]{0.3\linewidth}
        \centering
        \includegraphics[width=\linewidth]{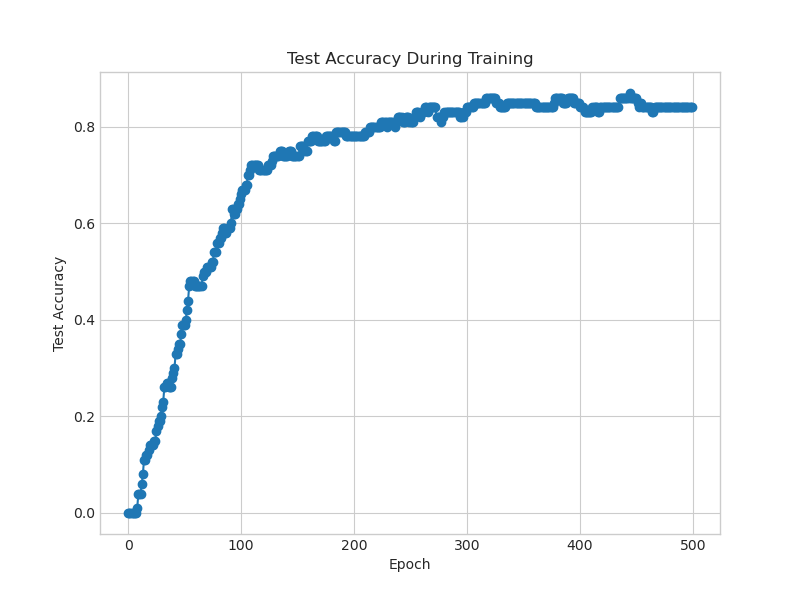}
        \caption{Nonlinear latent factor model}
    \end{subfigure}
    \caption{Convergence of SGD-CLIP. Training accuracy over epochs for linear and nonlinear synthetic settings.}
    \label{fig:sgd-clip-convergence}
\end{figure}

\noindent\textbf{Unimodal Classification (CIFAR-10).}  
Figure~\ref{fig:unimodal-loss} reports the training loss of SGD-CLIP in the CIFAR-10 image clustering task. 

\begin{figure}[h]
    \centering
    \includegraphics[width=0.35\linewidth]{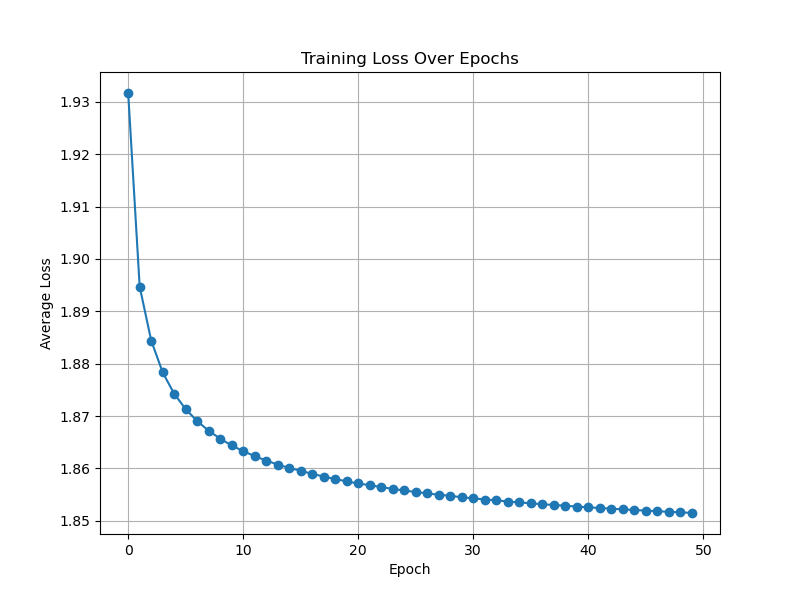}
    \caption{Training loss of SGD-CLIP on CIFAR-10. Unimodal image clustering task with frozen ResNet-18 features.}
    \label{fig:unimodal-loss}
\end{figure}

\noindent\textbf{Multimodal Retrieval.}  
On Flickr30K, we shuffle the dataset into to 25426 (80\%), 3178 (10\%), and 3179(10\%), for train/validation/test. Every image $x$ and text $y$ is embedded once with a) \textbf{ResNet-18} \citep{he2016deep} for images + \textbf{Sentence-BERT} (\texttt{all-mpnet-base-v2}) \citep{reimers2019sentence}; b) \textbf{ResNet‑50} + Sentence‑BERT; c) the \textbf{CLIP ViT‑B/32} model for visual-textual feature extraction as frozen backbone. Matching is performed in a shared embedding space of dimension $r=128$ with $\tau=1$. SGD-CLIP runs for 50 epochs, and run‑time is measured wall‑clock. Figure~\ref{fig:multimodal-loss} shows loss curves for SGD-CLIP trained on Flickr30K across three backbones: ResNet-18, ResNet-50, and CLIP ViT-B/32. Despite faster convergence with stronger backbones, all variants require many epochs to reach stable loss values, whereas UniCon completes alignment more efficiently.

\begin{figure}[h]
    \centering
    \begin{subfigure}[b]{0.3\linewidth}
        \centering
        \includegraphics[width=\linewidth]{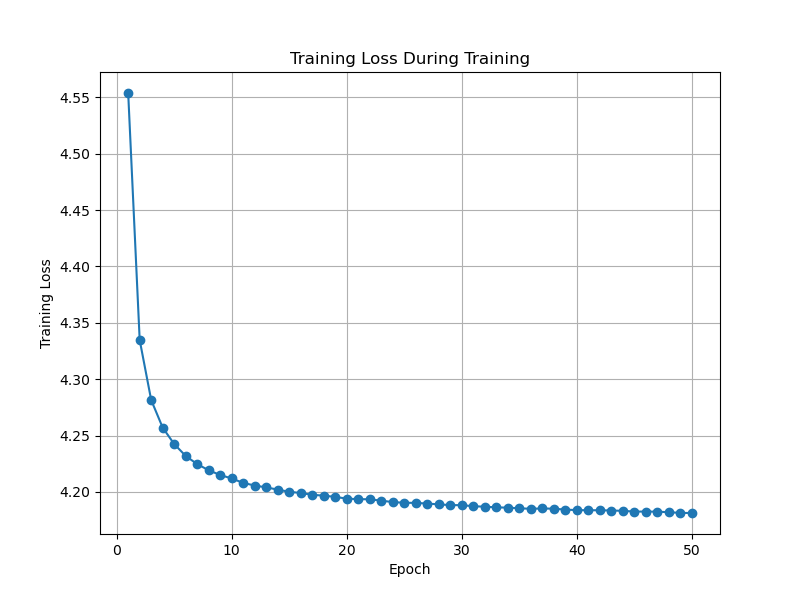}
        \caption{ResNet-18 + SBERT}
    \end{subfigure}
    \begin{subfigure}[b]{0.3\linewidth}
        \centering
        \includegraphics[width=\linewidth]{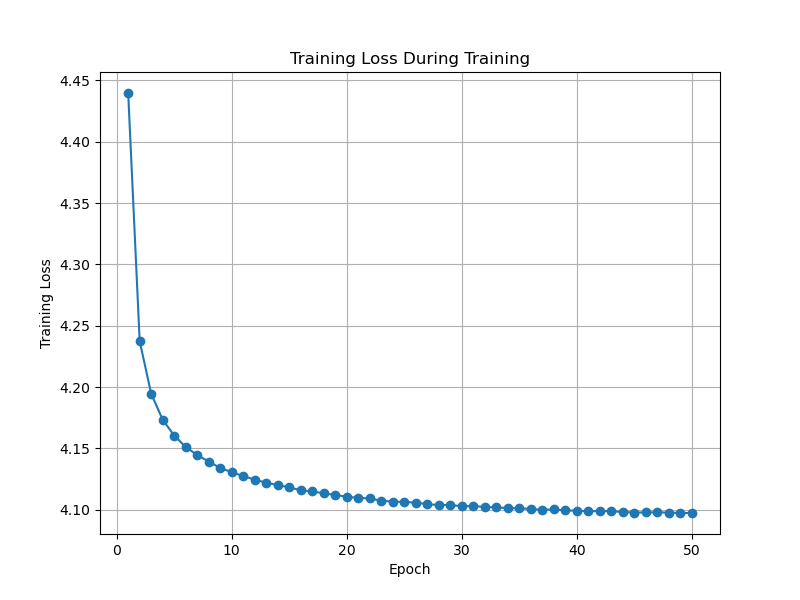}
        \caption{ResNet-50 + SBERT}
    \end{subfigure}
    \begin{subfigure}[b]{0.3\linewidth}
        \centering
        \includegraphics[width=\linewidth]{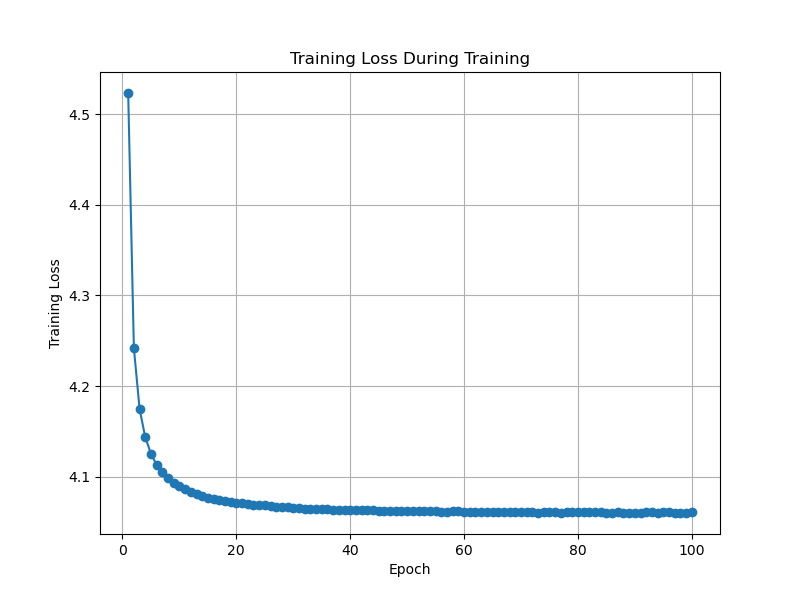}
        \caption{CLIP ViT-B/32}
    \end{subfigure}
    \caption{Training loss of SGD-CLIP on Flickr30K. Loss curves for three backbone architectures in multimodal alignment.}
    \label{fig:multimodal-loss}
\end{figure}

On \textbf{MSCOCO}, we follow the standard retrieval protocol on \textsc{MSCOCO}.
The training set contains \textbf{82,783} images, each paired with \textbf{5} human captions.
We validate on \textbf{40,504} image–text pairs and report test results on \textbf{5,000} held-out pairs.
We report Recall@1 and Recall@10 for both directions.

To further ensure sufficiency of training, we re-evaluated validation accuracies on MSCOCO with backbone Resnet 50 + SBERT in Figure\ref{fig:loss for mscoco}: it even begin to decline after the peak, suggesting potential overfitting with additional training. Specifically, on MSCOCO, we trained SGD for 1000 epochs and observed that the best validation performance is already reached around epoch 300. Beyond this point, the model does not improve further. For comparison, we extended UniCon to 20 iterations across all batches. We observed that model norms stabilize after just 2 iterations, with only minimal fluctuations thereafter.

\begin{table}[h]
    \centering
    \small
    \caption{\textbf{Image-text retrieval on \textsc{MSCOCO}}. We report Recall@1 and Recall@10 for both image$\rightarrow$text and text$\rightarrow$image directions. \textbf{UniCon} achieves superior accuracy to SGD--CLIP with $\sim$96--461$\times$ faster training.}
    \vspace{1em}
    \begin{tabular}{l l c *{6}{c}}
        \toprule
        \multirow{2}{*}{Backbone} &
        \multirow{2}{*}{Method} &
        \multirow{2}{*}{Train time} &
        \multicolumn{2}{c}{Image$\rightarrow$Text} &
        \multicolumn{2}{c}{Text$\rightarrow$Image} &
        \multicolumn{2}{c}{Average} \\
        \cmidrule(lr){4-5} \cmidrule(lr){6-7} \cmidrule(lr){8-9}
        & & & R@1 & R@10 & R@1 & R@10 & R@1 & R@10 \\
        \midrule

        \multirow{2}{*}{RN-50 + SBERT}
            & SGD--CLIP & 5121.72 s &
              .053 & .253 &
              .060 & .286 &
              .057 & .270 \\
            & \textbf{UniCon} & \textbf{11.11 s} &
              \textbf{.105} & \textbf{.388} &
              \textbf{.129} & \textbf{.439} &
              \textbf{.117} & \textbf{.414} \\
        \hline
        \addlinespace

        \multirow{2}{*}{CLIP ViT-B/32}
            & SGD--CLIP & 1066.60 s &
              .128 & .415 &
              .123 & .427 &
              .126 & .421 \\
            & \textbf{UniCon} & \textbf{11.15 s} &
              \textbf{.329} & \textbf{.685} &
              \textbf{.292} & \textbf{.644} &
              \textbf{.311} & \textbf{.665} \\
        \bottomrule
    \end{tabular}
    \label{tab:mscoco}
\end{table}

\begin{table}[h]
    \centering
    \small
    \caption{\textbf{Zero-shot image–text retrieval on \textsc{Flickr30k}} (trained on \textsc{MSCOCO}, no fine-tuning).
    We report Recall@5 and Recall@10 for both directions; higher is better.}
    \vspace{0.8em}
    \begin{tabular}{l *{3}{c} *{3}{c}}
        \toprule
        \multirow{2}{*}{Backbone} &
        \multicolumn{3}{c}{R@5} &
        \multicolumn{3}{c}{R@10} \\
        \cmidrule(lr){2-4} \cmidrule(lr){5-7}
        & I$\rightarrow$T & T$\rightarrow$I & Avg
        & I$\rightarrow$T & T$\rightarrow$I & Avg \\
        \midrule
        RN-50 + SBERT
            & .171 & .249 & .210
            & .261 & .353 & .307 \\
        \textbf{CLIP ViT-B/32}
            & \textbf{.808} & \textbf{.766} & \textbf{.787}
            & \textbf{.879} & \textbf{.848} & \textbf{.863} \\
        \bottomrule
    \end{tabular}
    \label{tab:flickr30k-zeroshot}
\end{table}

\begin{figure}[h]
    \centering
    \begin{subfigure}[b]{0.35\linewidth}
        \centering
        \includegraphics[width=\linewidth]{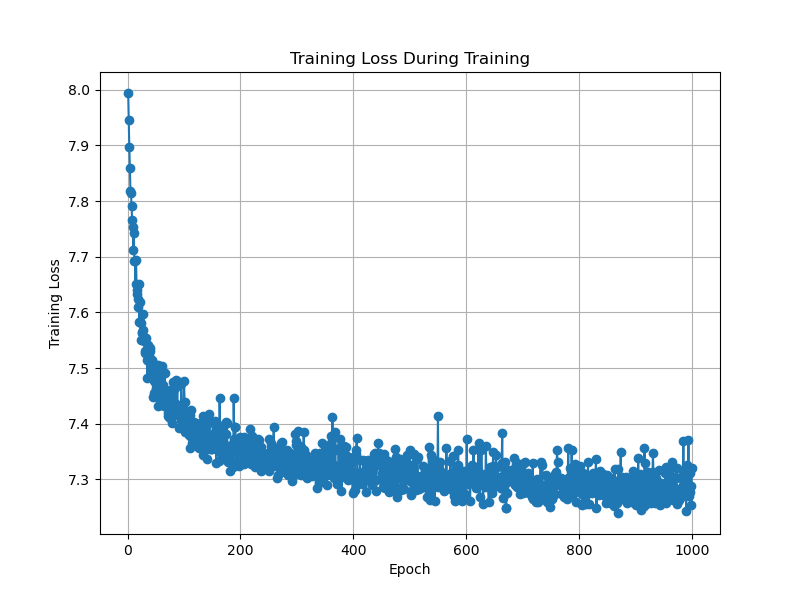}
        \caption{Loss}
    \end{subfigure}
    \begin{subfigure}[b]{0.35\linewidth}
        \centering
        \includegraphics[width=\linewidth]{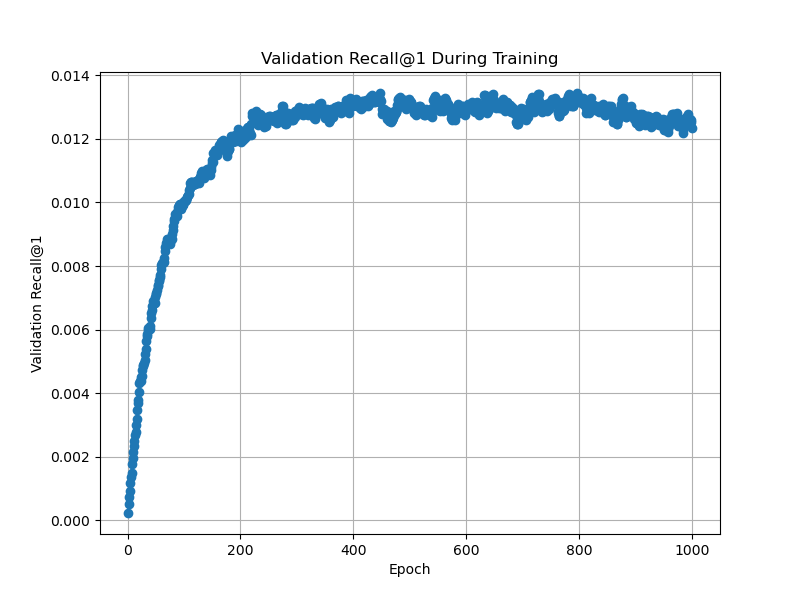}
        \caption{Validation Accuracy}
    \end{subfigure}
    \caption{Training loss and validation accuracy of SGD-CLIP on MSCOCO trained for 1000 epochs to check convergence.}
    \label{fig:loss for mscoco}
\end{figure}

\subsection{Sensitivity Analysis}
\paragraph{Robustness to batch size.} We discuss that the batch size $n$ does not significantly affect the total model performance. We extensively evaluated the effect of batch sizes on all the experiment settings, showing robustness of UniCon to batch size variations. For multimodal alignment, we experimented retrieval task using Flickr30k and MSCOCO varying the batch size across [100, 500, 1000, 10000, 20000]. For unimodal alignment, we experimented clustering using CIFAR-10 with batch size [200,300,400]. We observed that performance metrics remained nearly identical across these ranges. Importantly, the retrieval performance was robust to batch size variations, which implies data efficiency.

\paragraph{Batch aggregation strategy}
To reduce memory and computation overhead, we adopt a batch-wise training strategy. We evaluate several strategies to aggregate multiple batch-level models into a global predictor, including:
\begin{itemize}
    \item  Accuracy-weighted fusion: Normalize validation accuracies $a_i$ to weights $w_i=\frac{a_i}{\sum_i a_i}$ and linearly combine predictions.
    \item Softmax-accuracy weighted fusion: Apply a softmax over $\{a_i\}$ to smooth weights.
    \item Majority voting: Select the most frequent prediction across batch models.
\end{itemize}
From our experiments on both unimodal and multimodal settings, we observe that different aggregation strategies yield similar performance, with variations within a 1–2\% gap. Its performance under extremely biased or imbalanced data distributions remains an open question. 
    
We also evaluate the impact of statistical differences between training batches on CIFAR-10. For each batch, we form paired inputs data1 and data2 under two settings:
(1) \textbf{Random}: For each sample pair, we independently sample classes for data1 and data2, then select an image from each chosen class. This independent sampling creates varying class distributions across batches and introduces inter-batch differences. (2) \textbf{Balanced}: We iterate through all 10 classes within per batch, sampling two images per class for data1 and data2, ensuring balanced and identical class distributions within and across batches.
From our experiments on unimodal settings on CIFAR10, we find that the random and balanced sampling strategies yield similar performance, with differences within 1–2\%.

\subsection{Numerical stabilization for spectral updates in \textsc{UniCon}}
\label{sec:stabilization}

\paragraph{Stabilized SVD.}
We analyze the numerical profile of 
\(C(\gamma)\in\mathbb{R}^{d_1\times d_2}\). On large, high–dimensional tasks, the raw \(C(\gamma)\) often exhibits rapid spectral decay, small singular–value gaps, and large effective condition numbers. To stabilize the closed–form spectral step, we tested the following techniques:
\begin{itemize}
  \item \textbf{Tikhonov regularization.} Add $\lambda I_{d_1}$ to $C_(\gamma)$ to improve conditioning and stabilize SVD.
  \item \textbf{Randomized SVD with power iterations} \citep{halko2011finding}.
  Use randomized SVD with power iterations to efficiently extract the top-r components.
  \item \textbf{Unit–hypersphere normalization.}
  Before forming similarities/covariances, project embeddings onto the unit sphere,
  matching the contrastive geometry.
  \item \textbf{Symmetric case (unimodal).}
  When a unimodal subproblem reduces to estimating a symmetric target (e.g.\ solving
  \(F^\top F\) in a single modality), use the symmetrized and ridge–shifted matrixand then apply eigendecomposition.
\end{itemize}

On the large–scale \textsc{MSCOCO} retrieval benchmark, we compare a baseline that uses a plain truncated SVD on \(C(\gamma)\) against our stabilized pipeline above.
The latter yields higher recall with negligible overhead.

\begin{table}[h]
\centering
\small
\caption{\textbf{Effect of stabilization on \textsc{MSCOCO}} (image--text retrieval).
Stabilized SVD \(=\) regularization \(+\) randomized SVD (with power iterations)
\(+\) unit-sphere normalization.
}
\vspace{0.4em}
\setlength{\tabcolsep}{7pt}
\begin{tabular}{lcccc}
\toprule
\textbf{SVD method} & \textbf{Train time (s)} & \textbf{R@1} & \textbf{R@5} & \textbf{R@10} \\
\midrule
Standard truncated SVD            & 32.47 & 0.2235 & 0.4486 & 0.5649 \\
\textbf{Stabilized SVD (UniCon)}  & \textbf{33.28} & \textbf{0.2601} & \textbf{0.4990} & \textbf{0.6149} \\
\bottomrule
\end{tabular}
\label{tab:stabilized-svd}
\end{table}

\subsection{Experiments with other modalities}
To further substantiate the general applicability of our approach with other complex modalities, we additionally evaluate UniCon on an audio text alignment task using the Clotho dataset\citep{drossos2019clothoaudiocaptioningdataset}. In this experiment, we use pre-trained CLIP and Wav2CLIP\citep{wu2022wav2cliplearningrobustaudio} encoders to extract features from text and audio inputs respectively, followed by a linear projection layer for cross-modal alignment. The results show that without explicit alignment, the original feature structures exhibit a significant modality gap, while both UniCon and SGD achieve comparable and effective alignment performance after training. This additional experiment provides further evidence of our method's effectiveness in diverse modality alignment scenarios.

We believe this experiment with different modalities provide further support for UniCon's scalability and robustness.

\begin{table}[th]
\centering
\caption{Audio-Text Alignment Results on the Clotho Dataset.}
\label{tab:clotho}
\footnotesize
\begin{tabular}{lccccccc}
\toprule
\textbf{Method} & 
\textbf{R@1 a2t} & \textbf{R@1 t2a} &
\textbf{R@5 a2t} & \textbf{R@5 t2a} &
\textbf{R@10 a2t} & \textbf{R@10 t2a} &
\textbf{Time} \\
\midrule
No alignment & 0.0010 & 0.0000 & 0.0048 & 0.0029 & 0.0096 & 0.0096 & -- \\
UniCon & 0.0335 & 0.0249 & \textbf{0.1311} & 0.1110 & \textbf{0.1943} & 0.1789 & \textbf{13.45s} \\
SGD-CLIP & \textbf{0.0373} & \textbf{0.0278} & 0.1244 & \textbf{0.1139} & 0.1923 & \textbf{0.2077} & 347.48s \\
\bottomrule
\end{tabular}
\end{table}

\subsection{Kernel}
\paragraph{Why kernel?}
The kernel-based formulation is essential in nonlinear contrastive alignment, as it enables an \emph{implicit} mapping of feature representations into (potentially infinite-dimensional) Reproducing Kernel Hilbert Spaces (RKHS). This implicit lifting significantly enhances expressivity, allowing UniCon to capture complex cross-modal relationships that cannot be represented by linear projections alone, without explicitly constructing high-dimensional coordinates. Moreover, the kernel mapping effectively \emph{unfolds} nonlinear manifolds (e.g., spherical or curved distributions) into a linearized feature space, where the spectral alignment mechanism can operate directly via rank-$r$ approximation.

\paragraph{Why Angular Kernel?}
We adopt angular kernels because features are normalized on the unit hypersphere, which is an effective practice in contrastive learning. Prior work\citep{wang2017normface} has shown that learning representations on the hypersphere leads to better performance than in Euclidean space, as it avoids the conflicting forces between attractive and repulsive gradients.
Angular kernels are particularly well-suited for this geometry: they are theoretically sound, and simple to implement. In our view, this simplicity is not a limitation but an advantage. For comparison, we also experimented with the RBF kernel. The results confirmed our hypothesis: angular kernels consistently outperform RBF when embeddings lie on the hypersphere. It still worth to explore kernel approximation methods for memory efficient computation. 

\paragraph{Kernel Selection Study.}  
To further demonstrate the impact of kernel choice on alignment performance, we have included an empirical study in Table~\ref{table:kernel}. The results below show the performance variance across different kernel types:

\begin{table}[h!]
\centering
\caption{Ablation study on kernel selection. Results demonstrate that kernels with stronger geometric expressivity (e.g., Angular and Arc-Cosine) yield superior alignment performance.}
\begin{tabular}{lcc}
\toprule
\textbf{Kernel Type} & \textbf{Synthetic Accuracy} & \textbf{CIFAR-10 Accuracy} \\
\midrule
RBF                   & .56  & .11 \\
Matérn                & .73  & .44 \\
Cosine                & .81  & .63 \\
Exponential Cosine    & .73  & .63 \\
Arc-Cosine & .85  & .63 \\
Angular               & \textbf{.86}  & \textbf{.63} \\
\bottomrule
\label{table:kernel}
\end{tabular}
\end{table}

\subsection{Loss Variations}
\paragraph{Support for Non-Smooth Losses (e.g., Triplet Loss).}
Our generalized contrastive loss formulation accommodates both smooth and non-smooth cases. For losses such as the hinge-based triplet loss, classical gradients are not defined everywhere, yet their optimization is well-defined using \emph{Clarke subgradients}. The rank-$r$ spectral characterization of UniCon still applies in this generalized subdifferential setting.
To support this, we performed a synthetic nonlinear experiment (same setup as Sec. 4.1, replacing CLIP loss with triplet loss), where UniCon achieved \textbf{90\% alignment accuracy}, confirming its compatibility with margin-based losses.

\paragraph{Sigmoid-based Contrastive Losses.}
We further evaluate UniCon under the Sigmoid contrastive loss used in SigLIP\citep{zhai2023sigmoid}. The results show that UniCon achieves performance comparable to SGD–SigLIP, demonstrating that UniCon is not limited to softmax-based contrastive objectives.

\begin{table}[h!]
\centering
\begin{tabular}{lcccccc}
\toprule
\textbf{Method} & \textbf{I2T R@1} & \textbf{T2I R@1} & \textbf{I2T R@5} & \textbf{T2I R@5} & \textbf{I2T R@10} & \textbf{T2I R@10} \\
\midrule
\textbf{UniCon (SigLIP)} & \textbf{0.3340} & \textbf{0.2862} & \textbf{0.5816} & 0.5334 & 0.6852 & 0.6394 \\
\textbf{SGD--SigLIP}     & 0.2852 & 0.2816 & 0.5610 & \textbf{0.5538} & \textbf{0.6924} & \textbf{0.6704} \\
\bottomrule
\end{tabular}
\caption{Comparison of UniCon and SGD--SigLIP under Sigmoid contrastive loss.}
\label{tab:siglip_comparison}
\end{table}

\end{document}